\definecolor {processblue}{cmyk}{0.96,0,0,0}
\tikzstyle{int}=[draw, fill=blue!20, minimum size=2em]
\tikzstyle{init} = [pin edge={to-,thin,black}]
\pgfplotsset{compat=1.14}
\newtheorem{defn}{Definition}[section]
\newtheorem{lem}{Lemma}[section]
\newtheorem{rem}{Remark}[section]
\newtheorem{exam}{Example}
\newtheorem{thm}{Theorem}[section]
\newtheorem{prop}{Proposition}
\newtheorem{cor}{Corollary}[section]
\newcommand{\ex}[2]{{\ifx&#1& \mathbb{E} \else {\mathbb{E}_{#1}} \fi \left[#2\right]}}
\newcommand{\indp}{\perp\!\!\!\perp} 
\newcommand{\pr}[1]{\left(#1\right)}
\newcommand{\br}[1]{\left[#1\right]}
\newcommand{\abs}[1]{\left|#1\right|}
\title{Sample-Conditioned Hypothesis Stability Sharpens Information-Theoretic Generalization Bounds}
\author{%
  Ziqiao Wang \\
University of Ottawa\\
  \texttt{zwang286@uottawa.ca}\\
  \And
  Yongyi Mao \\
University of Ottawa\\
  \texttt{ymao@uottawa.ca}
}
\begin{document}

\maketitle

\begin{abstract}
We present new information-theoretic generalization guarantees through the a novel construction of the "neighboring-hypothesis" matrix and a new family of stability notions termed sample-conditioned hypothesis (SCH) stability.  Our approach yields sharper bounds that improve upon previous information-theoretic bounds in various learning scenarios. Notably, these bounds address the limitations of existing information-theoretic bounds in the context of stochastic convex optimization (SCO) problems, as explored in the recent work by \citeauthor{haghifam2022limitations} (\citeyear{haghifam2022limitations}). 
\end{abstract}

\section{Introduction}

Information-theoretic upper bounds for generalization error have recently been developed since the seminal works of \cite{russo2016controlling,xu2017information}. On the one hand, these bounds have attracted increasing interest due to their distribution dependence and algorithm dependence, making them highly suitable for reasoning the generalization behavior of modern deep learning. In fact, subsequent studies have demonstrated that information-theoretic bounds can effectively track the dynamics of generalization error in deep neural networks \cite{negrea2019information,wang2021analyzing,harutyunyan2021informationtheoretic,wang2022generalization,wang2022two,hellstrom2022a,wang2023tighter}. On the other hand, recent studies \cite{steinke2020reasoning,haghifam2021towards,harutyunyan2021informationtheoretic,haghifam2022understanding,hellstrom2022a} have revealed the expressive nature of information-theoretic bounds in the distribution-free setting. Notably, the conditional mutual information (CMI) framework proposed by \cite{steinke2020reasoning} has shown great promise by establishing connections to VC theory \cite{SLT98Vapnik} and matching minimax rates for the binary classification \cite{haghifam2021towards,haghifam2022understanding}. Additionally, in the case of the $0-1$ loss and the realizable setting, where an interpolating algorithm attains zero empirical risk, information-theoretic bounds give an exact characterization of the generalization error \cite{haghifam2022understanding,wang2023tighter}, thereby providing the tightest possible generalization bound.

Nonetheless, information-theoretic bounds have been extensively discussed due to their two main deficiencies. The first deficiency concerns the unbounded nature of the original input-output mutual information (IOMI) bounds \cite{xu2017information,bassily2018learners,livni2020limitation,livni2023information}. 
To address this issue, several techniques have been developed, including the chaining method \cite{asadi2018chaining}, the individual technique \cite{bu2019tightening} or random subset technique \cite{negrea2019information}, and the Gaussian noise perturbation technique \cite{neu2021information}. Notably, the CMI bound \cite{steinke2020reasoning} stands out as it has a finite upper bound for any learning scenario due to its supersample construction. These techniques are now often applied jointly for analyzing generalization \cite{haghifam2020sharpened,rodriguez2021random,zhou2022individually,wang2023informationtheoretic,harutyunyan2021informationtheoretic,hellstrom2022a,wang2023tighter}. The second deficiency concerns the sub-optimal convergence rate of information-theoretic bounds. Specifically, when the information-theoretic quantities, whether IOMI or CMI, are bounded by constants, the bounds exhibit a decaying rate in the order of $\mathcal{O}(1/\sqrt{n})$, where $n$ is the sample size. In contrast, generalization errors in practice may decay at a faster rate, e.g., $\mathcal{O}(1/n)$. To address this limitation, several works,
inspired by some PAC-Bayesian literature, 
have proposed fast-rate information-theoretic bounds \cite{hellstrom2021fast,hellstrom2022a,wang2023tighter}, demonstrating a good characterization in some instances of non-convex settings such as deep learning. More recently, \cite{wu2022fast,wu2023tightness,zhou2023exactly} present IOMI bounds for the Gaussian mean estimation problem,
that achieve optimal convergence rates, contrasting previous bounds \cite{bu2019tightening,zhou2022individually}.

The issue of slow convergence in information-theoretic bounds has recently been amplified by the observation that these bounds may not even vanish. \cite{haghifam2022limitations} highlight this limitation 
in the context of stochastic convex optimization (SCO) problems \cite{ShalevShwartzSSS09}. 
Specifically, \cite{haghifam2022limitations} shows that all existing information-theoretic bounds, including the CMI bound \cite{steinke2020reasoning}, the Gaussian noise perturbed IOMI bound \cite{neu2021information}, the individual IOMI bound \cite{bu2019tightening}, the evaluated CMI (e-CMI) bound \cite{steinke2020reasoning} and the functional CMI ($f$-CMI) bound \cite{harutyunyan2021informationtheoretic}, fail to vanish in at least one of the counterexamples they constructed.
These failures stem from the dimension-dependent nature of current information-theoretic quantities \cite{livni2023information}, appearing an intrinsic barrier for overcoming these limitations. However,
if we dissect the process by which these bounds are derived, opportunities do exist. 
Specifically, recall that all these information-theoretic bounds are built upon the Donsker-Varadhan (DV) variational representation of the KL divergence \cite[Theorem 3.5]{polyanskiy2019lecture} (see Lemma~\ref{lem:DV representation} in the Appendix). Using this representation, a generalization upper bound is derived in terms of 
information-theoretic quantity and a cumulant-generating function (CGF), as illustrated below.
\[
\text{Generalization Error}\leq \inf_{\text{Para.}>0}\frac{\text{IOMI or CMI}+\text{CGF}}{\text{Para.}}.
\]
Particularly note that the CGF depends on certain choice of the auxiliary function in the DV formula. 
Then, except for the IOMI or CMI itself, the tightness of the generalization bound hinges on two key factors: the selection of the DV auxiliary function and the approach used to bound the CGF, the latter of which often requires additional assumptions specific to the chosen DV auxiliary function. The most common choices for the DV auxiliary function involve making assumptions such as sub-Gaussian loss or  bounded loss. For instance, in the case of IOMI bounds, the DV auxiliary function is typically chosen as the single loss or average loss, and the sub-Gaussian assumption (or boundedness assumption) is utilized. In CMI bounds, the DV auxiliary function is defined as the difference in loss between a training sample and a ``ghost'' sample, and the boundedness property is employed.
We now note that exploiting bounded loss is the fundamental reason behind the failures in SCO problems.  Although \cite{haghifam2022limitations} does not explicitly rely on boundedness, they do make use of the product of the Lipschitz constant and the diameter of the hypothesis domain, which essentially serves as an upper bound for the loss. 
The product does not vanish with $n$, thereby neutralizing the potential to include another decaying factor in the bound.
Arguably, at least for these SCO problems, the selection of the DV auxiliary function may not be optimal, for example, the resulting CGF$=\mathcal{O}(1/n)$, giving vacuous generalization guarantees. This analysis inspires us to explore alternative DV auxiliary function and to adopt different assumptions for bounding the CGF. For instance, we may devise appropriate stability assumptions and create opportunities to upper bound the CGF using a term in $\mathcal{O}(\beta^2/n)$, where $\beta$ is the stability parameter that decays as $n$. This is promising
in light of the capability of the stability-based framework  in explaining the generalization of SCO problems \cite{bousquet2002stability,shalev2010learnability,hardt2016train,bassily2020stability}.

In this paper, we combine information-theoretic analysis with  stability notions to develop IOMI bounds and CMI bounds that improve upon previous information-theoretic bounds for stable learning algorithms, achieving faster convergence rates.
The main contributions of our paper are summarized below.
\begin{itemize}[leftmargin=.75cm,labelsep=-0.4cm,align=left]
    \item We introduce our new notions of algorithmic stability, referred to as sample-conditioned hypothesis (SCH) stability. We also present a novel construction of a sample-dependent hypothesis matrix, where each column is a neighborhood pair of hypotheses obtained from two training samples that differ in only one element, inspired by the supersample setting of CMI \cite{steinke2020reasoning}.
    \item  We present new IOMI bounds, which explicitly include the SCH stability parameters and are shown superior to previous bounds for 
    stable learning algorithms.
    \item  We show that the sample-dependent hypothesis matrix, 
    similar to the supersample matrix, enjoys a symmetry property. 
    Exploiting this symmetry, we establish novel CMI bounds. Specifically, we present hypotheses-conditioned CMI bounds that are analogous to the previous supersample-conditioned CMI bounds. Additionally, we derive sample-conditioned CMI bounds exploiting other assumptions. Notably, these bounds introduce novel CMI quantities and include SCH stability parameters. In particular, the 
    new CMI quantities remain boundedness as the original CMI. Consequently, these CMI bounds vanish no slower than their stability parameters. In addition, we also obtain a second-moment generalization bound that matches
    the tightest known bound in the literature \cite{feldman2018generalization} under the same condition.
    \item  
    We apply our new bounds to a convex-Lipschitz-bounded (CLB) example in which previous information-theoretic bounds fail to explain generalization \cite{haghifam2022limitations}, and show that the new IOMI and CMI bounds vanish, benefiting from SCH stability. Additionally, we discuss another CLB example where the uniform stability parameter is non-vanishing or has a slow convergence rate, but our information-theoretic bounds remain tight up to a constant.
    
    \item We extend our analysis to derive information-theoretic generalization bounds based on Bernstein condition, which leverages a connection between stability and Bernstein condition.
    We also show that stability can be incorporated into generalization bounds to obtain stronger results using alternative information-theoretic quantities such as the loss difference based CMI, e-CMI and $f$-CMI. Furthermore, we illustrate the expressiveness of our new CMI notions under the distribution-free setting.
\end{itemize}
\section{Preliminaries}
\paragraph{Probability and Information Theory Notation}
Unless otherwise noted,  a random variable will be denoted by a capitalized letter, and  its realization by the corresponding lower-case letter. The distribution of a random variable $X$ is denoted by $P_X$, and the conditional distribution of $X$ given $Y$ is denoted by $P_{X|Y}$. When conditioning on a specific realization $y$, we use the shorthand $P_{X|Y=y}$ or simply $P_{X|y}$.
Denote by $\mathbb{E}_{X}$ expectation over $X \sim P_X$, and by $\mathbb{E}_{X|Y=y}$ (or $\mathbb{E}_{X|y}$) expectation over $X \sim P_{X|Y=y}$.
The entropy of a random variable $X$ is denoted by $H(X)$, and the KL divergence of probability distribution $P$ with respect to $Q$ is denoted by $\mathrm{D_{KL}}(P||Q)$.
The mutual information (MI) between random variables $X$ and $Y$ is denoted by $I(X;Y)$, and the conditional mutual information between $X$ and $Y$ given $Z$ is denoted by $I(X;Y|Z)$. We also define the disintegrated mutual information as $I^z(X;Y) \triangleq \mathrm{D_{KL}}(P_{X,Y|Z=z}||P_{X|Z=z}P_{Y|Z=z})$, following the notation in \citep{negrea2019information}. Note that $I(X;Y|Z) = \mathbb{E}_{Z}[I^Z(X;Y)]$.

\paragraph{Generalization Error and Uniform Stability}
We consider the supervised learning setting, where we have a domain of instances $\mathcal{Z}=\mathcal{X}\times\mathcal{Y}$, with input and label spaces denoted by $\mathcal{X}$ and $\mathcal{Y}$ respectively.
The distribution of an instance is given by $\mu$, and we have a training sample $S=\{Z_i\}_{i=1}^n
\sim
\mu^n$. 
Let $R\in \mathcal{R}$ be a source of randomness
(a random variable independent of $S$ over an appropriate space $\mathcal{R}$), from which a learning algorithm $\mathcal{A}:\mathcal{Z}^n\times \mathcal{R}\rightarrow \mathcal{W}$ takes the training sample $S$ and $R$ as input, and outputs a hypothesis $W = \mathcal {A} (S, R)\in\mathcal{W}$. 
To evaluate the quality of the output hypothesis $W$, we use a loss function $\ell:\mathcal{W}\times\mathcal{Z}\rightarrow \mathbb{R}_0^+$.
Given a fixed $w$, we define the population risk $L_\mu(w)\triangleq \ex{Z'}{\ell(w,Z')}$, where $Z'\sim\mu$ is a testing instance. The quantity $L_\mu=\ex{W}{L_\mu(W)}$ is then the expected population risk. 
For a fixed $w\in \mathcal {W}$, the empirical risk on $S$ is defined as $L_S(w)\triangleq \frac{1}{n}\sum_{i=1}^n\ell(w,Z_i)$.
Similarly, we define the expected empirical risk as $\hat{L}_n=\ex{W,S}{L_S(W)}$.
Thus, the expected generalization error is given by $\mathcal{E}_\mu(\mathcal{A})\triangleq L_\mu-\hat{L}_n$.

We now give two notions of uniform stability \cite{bousquet2002stability,elisseeff2005stability}, 
where $s\simeq s^i$ denotes two training sets $s$ and $s^i$ that differ only at the $i$th element.
We say a learning algorithm $\mathcal{A}$ is $\beta_1$-weakly uniformly stable if
\[
\sup_{s\simeq s^i, z}\mathbb{E}_{R}\abs{\ell(\mathcal{A}(s,R),z)-\ell(\mathcal{A}(s^i,R),z)}\leq \beta_1,
\]
and $\beta_2$-strongly uniformly stable if
\begin{align*}
    {\sup_{s\simeq s^i, z}\sup_{r}\abs{\ell(\mathcal{A}(s,r),z)-\ell(\mathcal{A}(s^i,r),z)}}\leq \beta_2.
\end{align*}
\begin{rem}
Notably the weak uniform stability above is the standard in the literature. 
It is evident that if $\mathcal{A}$ is $\beta$-strongly uniformly stable, it must also be $\beta$-weakly uniformly stable. It is also worth noting that for deterministic algorithms (e.g., GD or fixed permutation SGD), the two notions are identical. We note that in this paper, when speaking of uniform stability, we refer to the strong notion. 

\end{rem}

Let $S'=\{Z'_i\}_{i=1}^n\sim\mu^{n}$ be an independent copy of $S$, and let $S^{\setminus i}=S\setminus\{Z_i\}$ so $S^i=S^{\setminus i}\cup \{Z'_i\}$. 

The following well-known result (e.g., \cite[Lemma~7]{bousquet2002stability}, 
\cite[Thm.~13.2]{shalev2014understanding})
is frequently used in this paper.
\begin{lem}
\label{lem:ro-stable}
    For any algorithm $\mathcal{A}$, we have 
    $L_\mu=\ex{S,S',R}{\frac{1}{n}\sum_{i=1}^n\ell(\mathcal{A}(S^i,R),Z_i)}$, 
    and
\begin{align}
    {\mathcal{E}_\mu(\mathcal{A})}=\ex{S,S'}{\frac{1}{n}\sum_{i=1}^n\left[\mathbb{E}_{\mathcal{A}(S^i,R)|S,Z'_i}{\ell(\mathcal{A}(S^i,R),Z_i)}-\mathbb{E}_{\mathcal{A}(S,R)|S}{\ell(\mathcal{A}(S,R),Z_i)}\right]}.
    \label{eq:unbiased-leave-one-out}
\end{align}
\end{lem}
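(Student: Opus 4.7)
The plan is to establish the two identities in sequence, with the first serving as the key ingredient for the second. Both are essentially consequences of the i.i.d.\ structure of the samples together with the symmetry between $S$ and the perturbed sample $S^i$.

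For the first identity, observe that $S^i = S^{\setminus i}\cup\{Z'_i\}$ is still an i.i.d.\ sample from $\mu^n$ (because $Z'_i$ and each $Z_j$ for $j\neq i$ are i.i.d.\ from $\mu$), so $\mathcal{A}(S^i, R)$ has the same marginal law as $\mathcal{A}(S, R)$. Crucially, the test point $Z_i$ has been removed from $S^i$, hence $Z_i$ is independent of $(S^i, R)$. Conditioning on $(S^i, R)$ first and integrating over $Z_i\sim\mu$ yields $L_\mu(\mathcal{A}(S^i,R))$, and taking the outer expectation (where the marginal of $\mathcal{A}(S^i,R)$ equals that of $\mathcal{A}(S,R)$) gives $L_\mu$. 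Averaging over $i\in\{1,\dots,n\}$ establishes the first claim.

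For the second identity, I would plug the first into the definition $\mathcal{E}_\mu(\mathcal{A})=L_\mu - \hat L_n$ and note that $\hat L_n = \mathbb{E}_{S,R}[\frac{1}{n}\sum_i \ell(\mathcal{A}(S,R), Z_i)]$. Since the integrand $\ell(\mathcal{A}(S,R),Z_i)$ does not depend on $S'$, appending an expectation over $S'$ leaves it unchanged, so both terms sit under a common $\mathbb{E}_{S,S',R}$. Then I would use the tower property to absorb the integration over the algorithmic randomness $R$ into the inner conditional expectations $\mathbb{E}_{\mathcal{A}(S^i,R)|S,Z'_i}$ and $\mathbb{E}_{\mathcal{A}(S,R)|S}$; this is justified because $S^i$ is $\sigma(S,Z'_i)$-measurable, so conditioning on $(S,Z'_i)$ fixes $S^i$ and leaves only $R$ to integrate. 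The resulting difference matches the bracketed expression in~\eqref{eq:unbiased-leave-one-out}.

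No single step should pose a real obstacle: the argument is bookkeeping that leverages (i) the exchangeability between $Z_i$ and $Z'_i$, (ii) the independence of $S^i$ from $Z_i$, and (iii) the tower property to separate sample-level averages from the algorithm's internal randomness. The only delicate point is tracking carefully which variables are being conditioned upon at each step, so that the nested conditional expectations on the right-hand side of \eqref{eq:unbiased-leave-one-out} end up with the correct conditioning sigma-algebras.
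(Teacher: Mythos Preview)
Your argument is correct and follows the standard route. Note that the paper does not actually supply a proof of this lemma: it is stated as a well-known fact with references to \cite[Lemma~7]{bousquet2002stability} and \cite[Thm.~13.2]{shalev2014understanding}, and your reasoning matches the classical derivation found there.
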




\paragraph{Supersample and Sample-Conditioned Hypothesis Stability}
Following \cite{steinke2020reasoning}, let $\widetilde{Z}\in \mathcal{Z}^{n\times 2}$ be a supersample matrix with $n$ rows and $2$ columns, each entry drawn independently from $\mu$. We index the columns of $\widetilde{Z}$ by ${0, 1}$ and denote the $i$th row of $\widetilde{Z}$ by $\widetilde{Z}_i$, with entries $(\widetilde{Z}_{i,0},\widetilde{Z}_{i,1})$.  We often use the superscripts $+$ and $-$ to respectively replace the subscripts $0$ and $1$, 
 i.e., writing $\widetilde{Z}^+_{i}$ for $\widetilde{Z}_{i,0}$ and $\widetilde{Z}^-_{i}$ for $\widetilde{Z}_{i,1}$. Correspondingly, $\widetilde{Z}^+_{[n]}$ and $\widetilde{Z}^-_{[n]}$ denote the first and second columns, respectively. Additionally, we will use $\widetilde{Z}^+_{[n]\sim i}$ to denote $\widetilde{Z}^+_{[n]}$ in which  the $i$th element 
 $\widetilde{Z}^+_i$ is replaced with the corresponding element $\widetilde{Z}^-_i$ in the second column. 
\begin{wraptable}{r}{0.35\textwidth}
    \centering
    \begin{tabular}{ |c|c| } 
 \hline
 $\widetilde{Z}^+_{1}$ & $\widetilde{Z}^-_{1}$  \\ 
 $\widetilde{Z}^+_{2}$ & $\widetilde{Z}^-_{2}$ \\ 
 $\vdots$ & $\vdots$  \\ 
 $\widetilde{Z}^+_{n}$ & $\widetilde{Z}^-_{n}$  \\ 
 \hline
\end{tabular}
$\overset{\mathcal{A}}{\Longrightarrow}$
\begin{tabular}{ |c|c| } 
 \hline
 $\widetilde{W}^+
 $ & $\widetilde{W}^-_{1}$  \\ 
 $\widetilde{W}^+
 $ & $\widetilde{W}^-_{2}$ \\ 
 $\vdots$ & $\vdots$  \\ 
 $\widetilde{W}^+
 $ & $\widetilde{W}^-_{n}$  \\ 
 \hline
\end{tabular}
    \caption{Supersample (Left) and the induced hypotheses (Right).}
    \label{tab:super-sample-hypothesis}
    \vspace{-0.5cm}
\end{wraptable}
Let $\widetilde{W}^+=\mathcal{A}(\widetilde{Z}^+_{[n]}, R)$, and $\widetilde{W}^-_i=\mathcal{A}(\widetilde{Z}^{+}_{[n]\sim i}, R)$ for each $i\in [n]$. That is, 
 $\widetilde{W}^+$ and each $\widetilde{W}^-_i$ are obtained by two ``neighboring'' training samples, namely those differ only on one instance (the $i$th instance). We then construct matrix $\widetilde{W}\in\mathcal{W}^{n\times 2}$, where the $i$th row $\widetilde{W}_i=(\widetilde{W}^+_i,\widetilde{W}^-_i)=(\widetilde{W}_{i,0},\widetilde{W}_{i,1})$ and $\widetilde{W}^+_i=\widetilde{W}^+$ for all $i\in [n]$,
 as shown in Table~\ref{tab:super-sample-hypothesis}. Unlike the supersample matrix, elements in $\widetilde{W}$ are identically distributed but not independent. In this case, Eq.~(\ref{eq:unbiased-leave-one-out}) can be rewritten as
\begin{align}
\label{eq:key-iomi}
    \mathcal{E}_\mu(\mathcal{A})
    &
    =\frac{1}{n}\sum_{i=1}^n\ex{\widetilde{Z}^+_{i},\widetilde{W}_i}{\ell(\widetilde{W}^{-}_i,\widetilde{Z}^+_{i})-\ell(\widetilde{W}^{+}_i,\widetilde{Z}^+_{i})}.
\end{align}
The summand in Eq (\ref{eq:key-iomi}) exhibits an interesting ``symmetry'': for any $i$, 
\begin{equation}
\label{eq:symmetry}
\ex{\widetilde{Z}^+_{i},\widetilde{W}_i}{\ell(\widetilde{W}^{-}_i,\widetilde{Z}^+_{i})-\ell(\widetilde{W}^{+}_i,\widetilde{Z}^+_{i})}
= 
\ex{\widetilde{Z}^-_{i},\widetilde{W}_i}{\ell(\widetilde{W}^{+}_i,\widetilde{Z}^-_{i})-\ell(\widetilde{W}^{-}_i,\widetilde{Z}^-_{i})}.
\end{equation}
That is, the $+/-$ superscripts in the summand of Eq (\ref{eq:key-iomi}) can be flipped for any $i$. We may then use $n$ binary ($\{0, 1\}$-valued) variables $(U_1, U_2, \ldots, U_n):=U$ to govern whether we choose to flip the signs for each of the $n$ terms, where $U_i=0$ indicates ``no flipping''. Let $\overline{U}_i\triangleq 1-U_i$, we have
\begin{align}
    {\mathcal{E}_\mu(\mathcal{A})}
    =&{\frac{1}{n}\sum_{i=1}^n\ex{\widetilde{Z}_{i},\widetilde{W}_i,U_i}{\ell(\widetilde{W}_{i,\overline{U}_i},\widetilde{Z}_{i,U_i})-\ell(\widetilde{W}_{i,U_i},\widetilde{Z}_{i,U_i})}}\nonumber \\
    =&{\frac{1}{n}\sum_{i=1}^n\ex{\widetilde{Z}_{i},\widetilde{W}_i,U_i}{(-1)^{U_i}\pr{\ell(\widetilde{W}_{i}^-,\widetilde{Z}_{i, U_i})-\ell(\widetilde{W}_{i}^+,\widetilde{Z}_{i, U_i})}}} \label{eq:key-cmi-1}\\
    =&\ex{\widetilde{W},E,U}{\frac{1}{n}\sum_{i=1}^n(-1)^{U_i}\pr{\ell(\widetilde{W}_i^-,\widehat{Z}_i)-{\ell(\widetilde{W}^+_i,\widehat{Z}_i)}}}, \label{eq:key-cmi}
\end{align}
where in Eq.~(\ref{eq:key-cmi-1}) we have chosen $U$ to be an i.i.d. Bernoulli-($\frac{1}{2}$) sequence, and in Eq.~(\ref{eq:key-cmi}) we have renamed $\widetilde{Z}_{i, U_i}$ as $\widehat{Z}_i$ and denoted $E\triangleq (\widehat{Z}_1, \widehat{Z}_2, \ldots, \widehat{Z}_n)$. 
Note that $E$, induced by $U$ from $\widetilde{Z}$, contains $n$ instances, each serving to evaluate the loss difference between a pair of hypotheses $(\widetilde{W}_i^-, \widetilde{W}^+)$. The polarities of these evaluations are governed by $U$.

We now define some new notions of stability.
\begin{defn}[Sample-Conditioned Hypothesis (SCH) Stability] 
Let $S\sim \mu^n$, $W$ and $W^i$ generated via $W=\mathcal{A}(S,R)$ and $W^i=\mathcal{A}(S^i,R)$. Let $\mathcal{Z}_{w,w^i}$
denote the support of the conditional distribution $P_{Z_i|w,w^i}$,  
and $\mathcal{W}_s$ denote the support of the conditional distribution $P_{W|s}$.
We introduce four types of SCH stability, referred to types A, B, C, D. Specifically, 
\label{defn:ex-stable}
a learning algorithm $\mathcal{A}$ is 
\begin{align}
&\text{a) $\gamma_1$-SCH-A stable if 
  $\forall i\in[n]$,} \qquad \sup_{w\in 
  \cup_{s\in\mathcal{Z}^n}\mathcal{W}_s
  }
  \sup_{z\in\mathcal{Z}}{\abs{\ell(w,z)-\ex{W^i|w}{\ell(W^i,z)}}}\leq \gamma_1,\label{ineq:super-stable-1}\\
&\text{b) $\gamma_2$-SCH-B stable if 
   $\forall i\in[n]$,} \qquad \ex{S,R,Z'}{\pr{\ell(W,Z')-\ex{W^i|W}{\ell(W^i,Z')}}^2}\leq \gamma^2_2,\label{ineq:super-stable-2}\\
&\text{c) $\gamma_3$-SCH-C stable if  $\forall i\in[n]$,} \qquad \ex{W,W^i}{\sup_{z_i\in\mathcal{Z}_{W,W^i}}\abs{\ell(W,z_i)-\ell(W^i,z_i)}}\leq \gamma_3,\label{ineq:super-stable-3}\\
   &\text{d) $\gamma_4$-SCH-D stable if  $\forall i\in[n]$,} \qquad \ex{S,Z'_i,R}{\pr{\ell(W,Z_i)-\ell(W^i,Z_i)}^2}\leq \gamma^2_4,\label{ineq:super-stable-4}
\end{align}
\end{defn}
where  $Z'$ in Eq.~(\ref{ineq:super-stable-2}) is an independent instance drawn from $\mu$.
\begin{rem}
    By definition, we can see $\gamma_2\leq \gamma_1$, and it is expected that  $\gamma_4\leq \gamma_3$. Note that all of them are smaller than $\beta_2$. In addition, it is expected that $\gamma_2$, $\gamma_3$ and $\gamma_4$ are smaller than $\beta_1$, although the relationship between $\gamma_1$ and $\beta_1$ is uncertain.
    Moreover, it is also expected that $\gamma_4$ is larger than $\gamma_2$ due to the independence of $Z'$ in Eq.~(\ref{ineq:super-stable-2}).
We emphasize that supreme $\sup_w$ in Eq.~(\ref{ineq:super-stable-1}) is taken over the sample-dependent hypothesis space 
$\cup_{s\in\mathcal{Z}^n}\mathcal{W}_s$,
not the whole hypothesis space $\mathcal{W}$. Notably,
the notion of ``hypothesis set stability'' in \cite{foster2019hypothesis} is  closely related to our SCH stability. In fact,  $\gamma_1$-SCH-A stable implies $\gamma_1$-hypothesis set stable. Due to space constraints, we provide further elaboration on the definition of SCH stability in Appendix~\ref{sec:defn-sch}.
\end{rem}

\section{IOMI Bounds for Stable Algorithms}
\label{sec:iomi-bounds-stable}
We are now in a position to give the IOMI bounds for stable learning algorithms.
\begin{thm}
    \label{thm:IOMI-bounds}
    If the learning algorithm $\mathcal{A}$ is $\gamma_1$-SCH-A stable, then
    \begin{align*}
        \abs{\mathcal{E}_\mu(\mathcal{A})}\leq\frac{\sqrt{2}\gamma_1}{n}\sum_{i=1}^n\sqrt{I(\widetilde{W}^+;\widetilde{Z}^+_i)}\leq\frac{\sqrt{2}\gamma_1}{n}\sum_{i=1}^n\sqrt{I(\widetilde{W}^+;\widetilde{Z}^+_i|\widetilde{W}_i^-)}.
    \end{align*}
\end{thm}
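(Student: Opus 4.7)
The plan is to start from the neighboring-hypothesis identity \eqref{eq:key-iomi} and reduce each summand to an individual MI bound of the type in \cite{bu2019tightening}, using SCH-A stability to control a carefully chosen auxiliary function. Specifically, for each $i\in[n]$ I would marginalize $\widetilde{W}_i^-$ against its conditional law given $\widetilde{W}^+$ and define
\[
h_i(w, z) := \mathbb{E}_{\widetilde{W}_i^-|\widetilde{W}^+ = w}\br{\ell(\widetilde{W}_i^-, z)} - \ell(w, z),
\]
so that \eqref{eq:key-iomi} becomes $\mathcal{E}_\mu(\mathcal{A}) = \frac{1}{n}\sum_i \mathbb{E}[h_i(\widetilde{W}^+, \widetilde{Z}^+_i)]$. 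Since $\widetilde{W}^+$ lies in $\cup_s\mathcal{W}_s$ almost surely, the $\gamma_1$-SCH-A condition \eqref{ineq:super-stable-1} gives the uniform bound $|h_i(w,z)|\le\gamma_1$, making $h_i(\widetilde{W}^+,\widetilde{Z}^+_i)$ a $\gamma_1$-sub-Gaussian random variable under every coupling of $(\widetilde{W}^+,\widetilde{Z}^+_i)$.

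Next I would compute the ``decoupled'' expectation. Using Fubini together with $\widetilde{Z}^+_i\sim\mu$,
\[
\mathbb{E}_{P_{\widetilde{W}^+}\otimes P_{\widetilde{Z}^+_i}}\br{h_i(\widetilde{W}^+,\widetilde{Z}^+_i)} = \mathbb{E}\br{L_\mu(\widetilde{W}_i^-)} - \mathbb{E}\br{L_\mu(\widetilde{W}^+)} = 0,
\]
since $\widetilde{W}_i^-$ and $\widetilde{W}^+$ share the same marginal distribution (both are $\mathcal{A}$ applied to i.i.d.\ samples from $\mu^n$ using the same source of randomness $R$). Combining this vanishing baseline with the individual MI bound yields
\[
\abs{\mathbb{E}\br{h_i(\widetilde{W}^+,\widetilde{Z}^+_i)}}\le \gamma_1\sqrt{2\,I(\widetilde{W}^+;\widetilde{Z}^+_i)},
\]
and applying the triangle inequality to \eqref{eq:key-iomi} yields the first stated bound.

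For the second inequality I would invoke the chain rule. Because $\widetilde{W}_i^-$ is a function of $R$ and of $\widetilde{Z}^+_{[n]\sim i}$, which omits $\widetilde{Z}^+_i$, we have $\widetilde{W}_i^-\indp\widetilde{Z}^+_i$ and hence $I(\widetilde{W}_i^-;\widetilde{Z}^+_i)=0$. Expanding $I(\widetilde{W}^+,\widetilde{W}_i^-;\widetilde{Z}^+_i)$ in two ways then yields $I(\widetilde{W}^+;\widetilde{Z}^+_i|\widetilde{W}_i^-)\ge I(\widetilde{W}^+;\widetilde{Z}^+_i)$, which chains onto the first inequality term by term.

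The main conceptual obstacle is the choice of $h_i$: a direct attempt with the raw loss difference $\ell(\widetilde{W}_i^-, z) - \ell(w, z)$ would require a pointwise stability hypothesis strictly stronger than SCH-A, so the key trick is to average $\widetilde{W}_i^-$ against $P_{\widetilde{W}_i^-|\widetilde{W}^+}$ to reveal exactly the quantity bounded in \eqref{ineq:super-stable-1}. The only other delicate step is the marginal-matching argument that forces the decoupled expectation to vanish exactly; everything else is a standard sub-Gaussian MI estimate and the chain rule.
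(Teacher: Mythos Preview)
Your proposal is correct and essentially identical to the paper's proof: your auxiliary function $h_i$ is exactly the paper's $g$, the boundedness via SCH-A and the zero-mean-under-product argument via marginal matching of $\widetilde{W}^+$ and $\widetilde{W}_i^-$ are the same, and the second inequality is obtained the same way from the chain rule and $I(\widetilde{W}_i^-;\widetilde{Z}^+_i)=0$. The only cosmetic difference is that the paper spells out the Donsker--Varadhan step and Hoeffding's lemma explicitly, whereas you package them as a Bu--Zou-style individual MI bound.
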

We note that $I(\widetilde{W}^+;\widetilde{Z}^+_i)=I(\widetilde{W}_i^-;\widetilde{Z}^{-}_i)=I(W;Z_i)$, the only difference between the first bound in Theorem~\ref{thm:IOMI-bounds} and the previous individual IOMI bound in \cite{bu2019tightening} is that the sub-Gaussian variance proxy is replaced by the stability parameter $\gamma_1$ in our bound. In fact, while we use $\widetilde{W}$ to better understand the appearance of $\gamma_1$ in the IOMI bounds, the first bound in Theorem~\ref{thm:IOMI-bounds} itself does not necessarily rely on the supersample and the construction of $\widetilde{W}$. 

In addition, if $\mathcal{A}$ is a deterministic algorithm, the term $I(\widetilde{W}^+;\widetilde{Z}^+_i|\widetilde{W}_i^-)$ in the second bound is tighter than the mutual information stability or erasure
information studied in \cite{raginsky2016information,harutyunyan2021informationtheoretic}, that is,
$
I(\widetilde{W}^+;\widetilde{Z}^+_i|\widetilde{W}_i^-)\leq I\pr{\widetilde{W}^+;\widetilde{Z}^+_i|\widetilde{Z}^{+}_{[n]\setminus i}}
$ (see Remark~\ref{rem:compare-mi-stable} in the Appendix for an explanation).



\begin{proof}[Proof Sketch of Theorem~\ref{thm:IOMI-bounds}]
Motivated by Lemma~\ref{lem:ro-stable} and Eq.~(\ref{eq:key-iomi}), the 
main innovation in this proof 
is to let the auxiliary function in DV be the ``relative loss'' instead of the single loss, namely we let $g(\tilde{w}^+_i,\tilde{z}^+_i)=\ex{\widetilde{W}^-_i|\tilde{w}^+}{\ell(\widetilde{W}^-_i,\tilde{z}^+_i)}-\ell(\tilde{w}^+,\tilde{z}^+_i)$ and let the auxiliary function $f=t\cdot g$ for $t>0$ in Lemma~\ref{lem:DV representation}. This enables us to utilize Eq.~(\ref{ineq:super-stable-1}) to bound the CGF. The remaining steps are routine.
    The complete proof can be found in Appendix~\ref{proof:IOMI}.
\end{proof}

The following corollary, immediately following from $\gamma_1\leq\beta_2$,  suggests a clear improvement over the previous IOMI bound for the uniformly stable deterministic algorithm with vanishing $\beta_2$. 
\begin{cor}
    \label{cor:uniform-IOMI-bound}
     If $\mathcal{A}$ is ${\beta}_2$-uniform stable, then
    $
        \abs{\mathcal{E}_\mu(\mathcal{A})}\leq\frac{\sqrt{2}{\beta}_2}{n}\sum_{i=1}^n\sqrt{I(\widetilde{W}^+;\widetilde{Z}^+_i)}.
    $
\end{cor}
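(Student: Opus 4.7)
The plan is to deduce Corollary~\ref{cor:uniform-IOMI-bound} directly from Theorem~\ref{thm:IOMI-bounds} by establishing a comparison between stability notions: every $\beta_2$-strongly uniformly stable algorithm is $\beta_2$-SCH-A stable, i.e., $\gamma_1 \leq \beta_2$. Once this is verified, substituting $\beta_2$ for $\gamma_1$ on the right-hand side of the first bound in Theorem~\ref{thm:IOMI-bounds} yields the claim immediately.

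For the comparison step, I would fix an arbitrary index $i \in [n]$, an arbitrary hypothesis $w \in \cup_{s\in\mathcal{Z}^n}\mathcal{W}_s$, and an arbitrary test point $z \in \mathcal{Z}$. Applying Jensen's inequality inside the conditional expectation appearing in the definition of SCH-A stability gives $\abs{\ell(w,z) - \ex{W^i|w}{\ell(W^i,z)}} \leq \ex{W^i|w}{\abs{\ell(w,z) - \ell(W^i,z)}}$. It then suffices to bound the integrand on the right $P_{W^i|w}$-almost surely. For this, observe that any realization $(s,r)$ in the support of $P_{S,R|W=w}$ satisfies $\mathcal{A}(s,r)=w$, and the corresponding $W^i = \mathcal{A}(s^i, r)$ is obtained by replacing $z_i$ with a fresh independent draw $z'_i\sim\mu$ while \emph{reusing} the same randomness $r$. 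Strong uniform stability, which is pointwise in both $(s,s^i)$ and $r$, then yields $\abs{\ell(w,z) - \ell(\mathcal{A}(s^i,r),z)} \leq \beta_2$. Taking the supremum over $w$ and $z$ gives $\gamma_1 \leq \beta_2$, and Theorem~\ref{thm:IOMI-bounds} closes the argument.

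The one subtle point, and in my view the main obstacle, is that this reduction genuinely requires the \emph{strong} version of uniform stability. The weak version only controls $\mathbb{E}_R\abs{\ell(\mathcal{A}(S,R),z)-\ell(\mathcal{A}(S^i,R),z)}$, a bound on an unconditional expectation over $R$; once we condition on the output $W=w$, the random variables $S$ and $R$ become coupled in a non-trivial way, and a bound on the marginal expectation need not survive such conditioning. Under strong stability the bound holds per realization of $r$, so it is preserved by any subsequent averaging or conditioning, which is exactly what makes the reduction to $\gamma_1\leq\beta_2$ clean. Apart from this, all remaining steps are routine invocations of already-proved results.
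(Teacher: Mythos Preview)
Your proposal is correct and matches the paper's own approach: the corollary is stated to follow ``immediately from $\gamma_1\leq\beta_2$'' together with Theorem~\ref{thm:IOMI-bounds}, which is precisely the reduction you carry out. Your additional care in justifying $\gamma_1\leq\beta_2$ via the pointwise-in-$r$ nature of strong uniform stability (and your observation that weak stability would not suffice) is more detailed than the paper's treatment, which simply asserts this inequality in Remark~2.
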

A key message conveyed in this paper, as shown in the proof of Theorem~\ref{thm:IOMI-bounds}, is the importance of carefully selecting the DV auxiliary function and appropriate assumptions for various learning scenarios. Corollary \ref{cor:uniform-IOMI-bound} also suggests a potential enhancement for uniform stability in the certain deterministic setting, provided that $I(W;Z_i)$ also vanishes appropriately with $n$.

Similar to \cite{negrea2019information,rodriguez2021random,harutyunyan2021informationtheoretic,hellstrom2022a}, we also give an $R$-conditioned IOMI bound below.
\begin{thm}
    \label{thm:R-IOMI-bounds}
    If $\mathcal{A}$ is ${\beta}_2$-uniform stable, we have
    $
        \abs{\mathcal{E}_\mu(\mathcal{A})}\leq\frac{\sqrt{2}\beta_2}{n}\sum_{i=1}^n\mathbb{E}_{R}\sqrt{I^{R}(\widetilde{W}^+;\widetilde{Z}^+_i)}.
    $
\end{thm}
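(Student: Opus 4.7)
The plan is to disintegrate the proof of Theorem~\ref{thm:IOMI-bounds} over the algorithmic randomness $R$ and then take an outer expectation. The key observation is that under strong $\beta_2$-uniform stability, for every realization $r$ of $R$ the (now deterministic) algorithm $\mathcal{A}(\cdot, r)$ remains $\beta_2$-uniformly stable, so the relative-loss auxiliary function used in the proof of Theorem~\ref{thm:IOMI-bounds} is bounded by $\beta_2$ conditionally on $R = r$ as well. This is precisely where strong stability (uniform over $r$) becomes essential, whereas weak stability would not suffice.

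Concretely, I would first rewrite each summand of Eq.~(\ref{eq:key-iomi}) using the tower rule with $R$ on the outside, and then apply the Donsker--Varadhan representation inside, with auxiliary function $f = t \cdot g_R$ where
\[
g_R(\tilde{w}, \tilde{z}) = \mathbb{E}_{\widetilde{W}^-_i \mid \tilde{w}, R}[\ell(\widetilde{W}^-_i, \tilde{z})] - \ell(\tilde{w}, \tilde{z}).
\]
Since $|g_R| \leq \beta_2$ almost surely (applied to the neighboring samples $\widetilde{Z}^+_{[n]}$ and $\widetilde{Z}^+_{[n]\sim i}$ with common $R$), Hoeffding's lemma bounds the conditional CGF of $t\, g_R$ under the reference product measure $P_{\widetilde{W}^+ \mid R} \otimes P_{\widetilde{Z}^+_i \mid R}$ by $t^2 \beta_2^2 / 2$. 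The KL divergence appearing in DV is exactly $I^R(\widetilde{W}^+; \widetilde{Z}^+_i)$, since $\widetilde{Z}^+_i \perp R$ and therefore $P_{\widetilde{Z}^+_i \mid R} = \mu$. Optimizing over $t > 0$ yields, for each $r$,
\[
\left|\mathbb{E}_{\widetilde{Z}^+_i, \widetilde{W}_i \mid R = r}\!\left[\ell(\widetilde{W}^-_i, \widetilde{Z}^+_i) - \ell(\widetilde{W}^+, \widetilde{Z}^+_i)\right]\right| \leq \sqrt{2\beta_2^2 \cdot I^r(\widetilde{W}^+; \widetilde{Z}^+_i)}.
\]
Finally, I would take the outer expectation over $R$, move the absolute value inside via Jensen's inequality, sum over $i$ with the $1/n$ factor, and apply the triangle inequality to obtain the stated bound.

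The main subtlety, more than a real obstacle, is verifying that the mean of $g_R$ under the reference product measure $P_{\widetilde{W}^+\mid R} \otimes \mu$ vanishes, as is required for Hoeffding to apply to a centered random variable. This follows from the identical-distribution property $P_{\widetilde{W}^+ \mid R = r} = P_{\widetilde{W}^-_i \mid R = r}$ (both are distributions of $\mathcal{A}(\cdot, r)$ applied to $n$ i.i.d.\ $\mu$-samples), which gives $\mathbb{E}[L_\mu(\widetilde{W}^+) \mid R = r] = \mathbb{E}[L_\mu(\widetilde{W}^-_i) \mid R = r]$. Beyond that, the derivation is a routine conditional lifting of the proof of Theorem~\ref{thm:IOMI-bounds}.
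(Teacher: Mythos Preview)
Your proposal is correct and follows essentially the same approach as the paper's own proof: condition on $R=r$, apply the Donsker--Varadhan representation with the relative-loss auxiliary function $g(\tilde{w}^+,\tilde{z}^+_i,r)=\mathbb{E}_{\widetilde{W}^-_i|\tilde{w}^+,r}[\ell(\widetilde{W}^-_i,\tilde{z}^+_i)]-\ell(\tilde{w}^+,\tilde{z}^+_i)$, use strong $\beta_2$-uniform stability to bound $|g|\leq\beta_2$ pointwise in $r$, verify the zero-mean property under the product reference measure via the identical distribution of $\widetilde{W}^+$ and $\widetilde{W}^-_i$ given $R=r$, apply Hoeffding's lemma and optimize over $t$, then average over $R$. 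You also correctly flag that strong (rather than weak) uniform stability is what makes the pointwise-in-$r$ boundedness go through.
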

This disintegrated IOMI bound is not directly comparable to the bounds in Theorem~\ref{thm:IOMI-bounds}, but it is equivalent to Corollary~\ref{cor:uniform-IOMI-bound} for deterministic algorithms. With additional assumptions, we may remove the square-root in Theorem~\ref{thm:IOMI-bounds}, accelerating the decay of IOMI as shown in Theorem \ref{thm:superhypo-stability-IOMI}.
\begin{thm}
    \label{thm:superhypo-stability-IOMI}
    Under the same conditions in Theorem~\ref{thm:IOMI-bounds}, if $\mathcal{A}$ is further $\gamma_2$-SCH-B stable, then
    \[
    \abs{\mathcal{E}_\mu(\mathcal{A})}\leq\frac{\gamma_1}{n}\sum_{i=1}^nI(\widetilde{W}^+;\widetilde{Z}^+_{i})+0.72\frac{\gamma^2_2}{\gamma_1}.
    \]
\end{thm}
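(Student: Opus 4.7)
The plan is to repeat the DV-based derivation of Theorem~\ref{thm:IOMI-bounds} but upgrade the cumulant-generating-function (CGF) estimate from a Hoeffding-style bound (which uses only $|g|\le\gamma_1$ and produces a $\sqrt{I}$ rate) to a Bennett-style sub-exponential bound that additionally exploits the second-moment control $\mathbb{E}[g^2]\le\gamma_2^2$ provided by SCH-B stability. This is the standard route for converting a square-root dependence on mutual information into a linear dependence with an additive variance-type remainder.

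Concretely, I reuse the auxiliary function $g_i(\tilde w,\tilde z):=\mathbb{E}_{\widetilde W^-_i\mid\tilde w}\ell(\widetilde W^-_i,\tilde z)-\ell(\tilde w,\tilde z)$ and the DV step from the proof of Theorem~\ref{thm:IOMI-bounds} (applied with $P=P_{\widetilde W^+,\widetilde Z^+_i}$ and $Q=P_{\widetilde W^+}\otimes P_{\widetilde Z^+_i}$), which for every $t>0$ yields
\[
t\,\mathbb{E}_P[g_i]\;\le\;I(\widetilde W^+;\widetilde Z^+_i)+\log\mathbb{E}_Q\bigl[e^{t g_i}\bigr],
\]
and $\mathbb{E}_P[g_i]$ is the $i$-th summand of Eq.~(\ref{eq:key-iomi}). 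The new ingredient is the CGF bound on the right-hand side. Under $Q$, the variable $g_i$ is centered (since $\widetilde W^+$ and $\widetilde W^-_i$ are identically distributed and $\widetilde Z^+_i$ is an independent $\mu$-draw, both losses have mean $L_\mu$), bounded by $\gamma_1$ via SCH-A, and has second moment at most $\gamma_2^2$ via SCH-B---whose definition is stated exactly for the product law $(W,Z')\sim P_W\otimes\mu$, which is the same as $Q$. Bennett's inequality therefore gives
\[
\log\mathbb{E}_Q\bigl[e^{t g_i}\bigr]\;\le\;\frac{\gamma_2^2}{\gamma_1^2}\bigl(e^{t\gamma_1}-1-t\gamma_1\bigr),\qquad t>0.
\]

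Choosing $t=1/\gamma_1$ both forces the linear coefficient of the mutual information to be exactly $\gamma_1$ and collapses the CGF bound to $(e-2)\gamma_2^2/\gamma_1^2$; dividing the DV inequality by $t$ then yields $\mathbb{E}_P[g_i]\le\gamma_1\,I(\widetilde W^+;\widetilde Z^+_i)+(e-2)\gamma_2^2/\gamma_1$ for each $i$. Averaging over $i=1,\ldots,n$ and using $e-2\approx 0.7183<0.72$ gives the one-sided bound; the matching lower bound (hence the absolute value) follows by repeating the argument on $-g_i$, which obeys exactly the same boundedness and variance conditions. The main obstacle is aligning the variance control in SCH-B with the right measure ($Q$, not $P$) so that Bennett rather than Hoeffding can be invoked; once that identification is made, the remainder is a one-parameter calculation in $t$, and the specific choice $t=1/\gamma_1$ is forced by the target form of the bound.
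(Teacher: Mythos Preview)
Your proposal is correct and essentially identical to the paper's proof: both apply the DV representation to the same relative-loss function $g_i$, bound the CGF under the product measure using the Bennett/Bernstein inequality $\log\mathbb{E}[e^{tg}]\le \frac{\gamma_2^2}{\gamma_1^2}(e^{t\gamma_1}-1-t\gamma_1)=h(t\gamma_1)t^2\gamma_2^2$ (the paper states this as Lemma~\ref{lem:Bernstein inequality}), and then set $t=1/\gamma_1$ to obtain the constant $e-2\approx 0.72$. Your explicit verification that the SCH-B second-moment condition is stated precisely for the product law $Q$ is a nice clarification that the paper leaves implicit.
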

Notice that  $\gamma^2_2/\gamma_1\leq \gamma_1^2/\gamma_1=\gamma_1$. If ${\gamma^2_2}/\gamma^2_1$ decays faster than $\frac{1}{n}\sum_{i=1}^n\sqrt{I(\widetilde{W}^+;\widetilde{Z}^+_{i})}$, then Theorem~\ref{thm:superhypo-stability-IOMI} is qualitatively strictly stronger than Theorem~\ref{thm:IOMI-bounds}.


\section{CMI Bounds for Stable Algorithms}
\label{sec:cmi-bounds-stable}

\paragraph{Hypotheses-Conditioned CMI Bounds}
In this section, we give a handful of CMI bounds based on a new information-theoretic quantity.
\begin{thm}
    \label{thm:general-stable-bound}
    Suppose that  there exists $\Delta_1:\mathcal{W}^2\to\mathbb{R}$ such that 
    for every $\tilde{w}_i=(\tilde{w}^{+}_i, \tilde{w}^{-}_i)$, 
    $\sup_{z_i\in\mathcal{Z}_{\tilde{w}^{+}_i, \tilde{w}^{-}_i}}\abs{\ell(\tilde{w}^+_i,z_i)-\ell(\tilde{w}^-_i,z_i)}\leq\Delta_1(\tilde{w}_i)$. 
    Then,
    \begin{align*}
        \abs{\mathcal{E}_\mu(\mathcal{A})}\leq& \frac{\sqrt{2}}{n}\sum_{i=1}^n\min\left\{\ex{\widetilde{W}_i}{\Delta_1(\widetilde{W}_i)\sqrt{I^{\widetilde{W}_i}(\widehat{Z}_i;U_i)}}, \sqrt{\ex{\widetilde{W}_i}{\Delta_1(\widetilde{W}_i)^2}I(\widehat{Z}_i;U_i|\widetilde{W}_i)}
        \right\}.
        \notag
    \end{align*}

    Furthermore, if $\mathcal{A}$ is $\gamma_3$-SCH-C stable, then such a $\Delta_1$ exists and
        \[
        \abs{\mathcal{E}_\mu(\mathcal{A})}\leq \frac{\sqrt{2}\gamma_3}{n}\sum_{i=1}^n\sqrt{\sup_{\tilde{w}_i}I^{\tilde{w}_i}(\widehat{Z}_i;U_i)}.
        \]
    \end{thm}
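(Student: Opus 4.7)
The plan is to start from the symmetric identity in Eq.~(\ref{eq:key-cmi}), which writes $\mathcal{E}_\mu(\mathcal{A}) = \frac{1}{n}\sum_i \mathbb{E}_{\widetilde{W}_i}[\mathbb{E}_{U_i, \widehat{Z}_i | \widetilde{W}_i}[g_i]]$ for the auxiliary quantity $g_i \triangleq (-1)^{U_i}\left(\ell(\widetilde{W}_i^-, \widehat{Z}_i) - \ell(\widetilde{W}_i^+, \widehat{Z}_i)\right)$. For each $i$, I would apply the Donsker--Varadhan variational inequality (Lemma~\ref{lem:DV representation}) conditioned on $\widetilde{W}_i = \tilde{w}_i$, taking $P = P_{U_i, \widehat{Z}_i | \tilde{w}_i}$ and the product measure $Q = P_{U_i | \tilde{w}_i} \otimes P_{\widehat{Z}_i | \tilde{w}_i}$, so that the resulting KL divergence equals $I^{\tilde{w}_i}(\widehat{Z}_i; U_i)$. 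The DV auxiliary function would be $t \cdot g_i$ for $t > 0$, mirroring the ``relative-loss'' choice used in Theorem~\ref{thm:IOMI-bounds}.

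The key observation is that under $Q$, the sign variable $U_i$ is a symmetric Bernoulli independent of $\widehat{Z}_i$ and of $\widetilde{W}_i$, while the inner loss difference does not depend on $U_i$; hence $\mathbb{E}_Q[g_i] = 0$. The hypothesis on $\Delta_1$ gives $|g_i| \leq \Delta_1(\tilde{w}_i)$ on the support, so Hoeffding's lemma yields $\log \mathbb{E}_Q[e^{t g_i}] \leq t^2 \Delta_1(\tilde{w}_i)^2/2$. Plugging this CGF estimate back into DV, optimizing over $t > 0$, and repeating the argument with $-g_i$ to handle the absolute value produces the per-conditioning bound $|\mathbb{E}_{P}[g_i]| \leq \sqrt{2}\,\Delta_1(\tilde{w}_i)\sqrt{I^{\tilde{w}_i}(\widehat{Z}_i; U_i)}$. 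Averaging over $\widetilde{W}_i$, summing over $i$, and dividing by $n$ yields the first expression inside the minimum. Applying Cauchy--Schwarz to the expectation over $\widetilde{W}_i$, together with the identity $I(\widehat{Z}_i; U_i | \widetilde{W}_i) = \mathbb{E}_{\widetilde{W}_i}[I^{\widetilde{W}_i}(\widehat{Z}_i; U_i)]$, produces the second expression.

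For the $\gamma_3$-SCH-C consequence, I would instantiate $\Delta_1(\tilde{w}_i) := \sup_{z_i \in \mathcal{Z}_{\tilde{w}_i^+, \tilde{w}_i^-}} |\ell(\tilde{w}_i^+, z_i) - \ell(\tilde{w}_i^-, z_i)|$. Because $(\widetilde{W}^+, \widetilde{W}_i^-)$ is distributed as the pair $(W, W^i)$ appearing in Definition~\ref{defn:ex-stable}(c), SCH-C stability directly gives $\mathbb{E}_{\widetilde{W}_i}[\Delta_1(\widetilde{W}_i)] \leq \gamma_3$. Bounding the square-root factor by its supremum, $\sqrt{I^{\widetilde{W}_i}(\widehat{Z}_i; U_i)} \leq \sqrt{\sup_{\tilde{w}_i} I^{\tilde{w}_i}(\widehat{Z}_i; U_i)}$, pulls it outside the expectation over $\widetilde{W}_i$, and combining with the stability bound yields the final displayed inequality.

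The main obstacle I anticipate is a careful treatment of supports: the estimate $|g_i| \leq \Delta_1(\tilde{w}_i)$ must hold on the support of $\widehat{Z}_i$ under the product $Q$ (equivalently, under the marginal $P_{\widehat{Z}_i | \tilde{w}_i}$), and this support must lie inside the set $\mathcal{Z}_{\tilde{w}_i^+, \tilde{w}_i^-}$ appearing in the SCH-C definition. This should follow from the symmetric supersample construction, since $\widehat{Z}_i \in \{\widetilde{Z}_i^+, \widetilde{Z}_i^-\}$ and the roles of the two columns are exchangeable when conditioning on the hypothesis pair, but the argument warrants a brief bookkeeping step to be rigorous. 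The rest of the computation, in particular the DV plus Hoeffding bookkeeping, is routine.
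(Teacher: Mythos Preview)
Your proposal is correct and matches the paper's proof almost step for step: the same auxiliary function $g_i$, the same conditional DV application with $Q=P_{U_i|\tilde w_i}\otimes P_{\widehat Z_i|\tilde w_i}$, the zero-mean observation from the independent Rademacher sign, Hoeffding's lemma for the CGF, and the same instantiation of $\Delta_1$ for the SCH-C consequence. The only cosmetic difference is how you obtain the second term in the minimum: the paper averages the pre-optimization bound $\frac{I^{\tilde w_i}+t^2\Delta_1(\tilde w_i)^2/2}{t}$ over $\widetilde W_i$ and then optimizes a single global $t$, whereas you first optimize locally to get $\Delta_1(\tilde w_i)\sqrt{2I^{\tilde w_i}}$ and then apply Cauchy--Schwarz; both routes land on exactly $\sqrt{2\,\mathbb{E}[\Delta_1(\widetilde W_i)^2]\,I(\widehat Z_i;U_i|\widetilde W_i)}$. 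Your flag about the support of $\widehat Z_i$ under $P_{\widehat Z_i|\tilde w_i}$ lying in $\mathcal{Z}_{\tilde w_i^+,\tilde w_i^-}$ is well taken---the paper simply asserts this step without comment, relying implicitly on the $+/-$ symmetry you describe.
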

   Compared to the standard supersample-conditioned CMI bound \cite{steinke2020reasoning} in terms of $I(W;U|\widetilde{Z})$, which assesses how well one can infer the ``training-set membership'' from the output hypothesis, our new CMI quantity, namely $I(\widehat{Z}_i;U_i|\widetilde{W}_i)$, measures our ability to decide if an instance $\widehat{Z}_i$ contributes to the training of $\widetilde{W}_i^+$ or  to the training of $\widetilde{W}^-_i$ when we know it contributes to only one of them. When $\widetilde{W}^+_i$ and $\widetilde{W}^-_i$ are similar, this decision (i.e., determining $U_i$) is difficult, giving rise to small
   $I(\widehat{Z}_i;U_i|\widetilde{W}_i)$.
   Additionally, for uniformly stable algorithms, $\sup_{w_i}\Delta_1(\widetilde{w}_i)\leq\beta_2$ and it is also expected that ${\ex{\widetilde{W}_i}{\Delta_1(\widetilde{W}_i)^2}}\leq\beta^2_1$. Moreover, if we simply replace $\gamma_3$ by an upper bound of $\ell$, the second bound of Theorem~\ref{thm:general-stable-bound} becomes a \textit{uniform convergence} bound if $I^{\tilde{w}}(\widehat{Z}_i;U_i)$ vanishes with $n$.

\begin{proof}[Proof Sketch of Theorem~\ref{thm:general-stable-bound}]
Again we find motivation in Lemma~\ref{lem:ro-stable}. Additionally, the symmetry exhibited in Eq.~(\ref{eq:key-cmi}), analogous to the symmetry between $\widetilde{Z}^+_i$ and $\widetilde{Z}^-_i$ used in deriving the standard CMI bounds, allows a similar development.
Specifically, letting $g(\tilde{w}_i,\hat{z}_i,u_i)=(-1)^{u_i}\left(\ell(\tilde{w}^{-}_i,\hat{z}_{i})-\ell(\tilde{w}^{+}_i,\hat{z}_{i})\right)$ and $f=t\cdot g$ for $t>0$ in Lemma~\ref{lem:DV representation} enables the bounding of the CGF via invoking the assumptions in the theorem.
The complete proof is given in Appendix~\ref{proof:CMI}.
\end{proof}
    
    Notably, our new CMI quantity in the bound preserves the boundedness of the original CMI in \cite{steinke2020reasoning}, that is, $I(\widehat{Z}_i;U_i|\widetilde{W})\leq H(U_i)= \log{2}$. Furthermore, 
    parallel to supersample-conditioned CMI being smaller than IOMI \cite{haghifam2020sharpened}, a similar result for hypotheses-conditioned CMI is given below.
    

\begin{thm}
    \label{thm:CMI-IOMI}
    For any $\mathcal{A}$ and $\mu$, we have $I(\widehat{Z}_i;U_i|\widetilde{W}_i)\leq I(W;Z_i)$.
\end{thm}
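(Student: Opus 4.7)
My plan is to combine the chain rule for mutual information with the column-swap symmetry of the supersample construction, in the same spirit as the standard data-processing argument showing $I(W;U\mid \widetilde Z)\le I(W;S)$ in the original CMI literature.

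First, I would establish the identity $I(\widehat Z_i;U_i\mid\widetilde W_i)=I(\widetilde W_i;\widehat Z_i\mid U_i)-I(\widetilde W_i;\widehat Z_i)$. Because $U_i$ is drawn independently of everything, $I(U_i;\widetilde W_i)=0$, and because $\widehat Z_i\sim\mu$ whether $U_i=0$ or $U_i=1$, also $I(U_i;\widehat Z_i)=0$. Expanding $I(\widehat Z_i,U_i;\widetilde W_i)$ by the chain rule in both orderings gives $I(\widehat Z_i;U_i\mid\widetilde W_i)=I(U_i;\widetilde W_i\mid\widehat Z_i)$, and a further chain-rule expansion of $I(\widehat Z_i,U_i;\widetilde W_i)$ with respect to $U_i$ then $\widehat Z_i$ yields the displayed identity.

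Second, the $+\!\leftrightarrow\!-$ symmetry of the construction gives the equality in distribution $(\widetilde W^+,\widetilde W^-_i,\widetilde Z^+_i)\stackrel{d}{=}(\widetilde W^-_i,\widetilde W^+,\widetilde Z^-_i)$, so $I(\widetilde W_i;\widetilde Z^+_i)=I(\widetilde W_i;\widetilde Z^-_i)$; since $\widehat Z_i$ equals $\widetilde Z^+_i$ or $\widetilde Z^-_i$ according to $U_i$, this yields $I(\widetilde W_i;\widehat Z_i\mid U_i)=I(\widetilde W_i;\widetilde Z^+_i)$. Using that $\widetilde W^-_i$ is trained on $\widetilde Z^+_{[n]\sim i}$, which excludes $\widetilde Z^+_i$, we have $\widetilde W^-_i\perp\widetilde Z^+_i$ marginally, so the chain rule gives
\[
I(\widetilde W_i;\widetilde Z^+_i)=I(\widetilde W^+;\widetilde Z^+_i)+I(\widetilde W^-_i;\widetilde Z^+_i\mid\widetilde W^+)=I(W;Z_i)+I(\widetilde W^-_i;\widetilde Z^+_i\mid\widetilde W^+).
\]
Combining with the identity from the first step,
\[
I(\widehat Z_i;U_i\mid\widetilde W_i)=I(W;Z_i)+\bigl[I(\widetilde W^-_i;\widetilde Z^+_i\mid\widetilde W^+)-I(\widetilde W_i;\widehat Z_i)\bigr].
\]

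The main obstacle is to show that the bracketed quantity is nonpositive, i.e.\ $I(\widetilde W^-_i;\widetilde Z^+_i\mid\widetilde W^+)\le I(\widetilde W_i;\widehat Z_i)$. I expect to close this step by exploiting two facts: the mixture representation $P_{\widehat Z_i\mid\widetilde W_i}=\tfrac12(P_{\widetilde Z^+_i\mid\widetilde W_i}+P_{\widetilde Z^-_i\mid\widetilde W_i})$, combined with the $+\!\leftrightarrow\!-$ symmetry (which also gives $I(\widetilde W^-_i;\widetilde Z^+_i\mid\widetilde W^+)=I(\widetilde W^+;\widetilde Z^-_i\mid\widetilde W^-_i)$ and places both terms on a symmetric footing), together with convexity of KL divergence in its first argument. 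If the direct convexity route does not close the gap cleanly, my backup is to condition throughout on the shared auxiliary $\rho=(\widetilde Z^+_{[n]\setminus i},R)$, under which $\widetilde W^+=f_\rho(\widetilde Z^+_i)$ and $\widetilde W^-_i=f_\rho(\widetilde Z^-_i)$ become deterministic functions of conditionally independent $\mu$-samples. In that deterministic-$f$ case the inequality becomes the elementary estimate $(1-\sum_w q(w)^2)\log 2\le H(q)$, which is immediate from $\log_2 x\le x-1$; the claim then follows by taking expectation over $\rho$ and combining with the preceding identities.
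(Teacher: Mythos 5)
Your algebraic work in Steps 1--3 is sound, but the reduction is circular: by your own Step-1 identity $I(\widetilde W_i;\widehat Z_i\mid U_i)-I(\widetilde W_i;\widehat Z_i)=I(\widehat Z_i;U_i\mid\widetilde W_i)$, the bracketed quantity is exactly $I(\widehat Z_i;U_i\mid\widetilde W_i)-I(W;Z_i)$, so showing it is nonpositive is the theorem restated, not a simpler subproblem. The $\rho$-conditioning backup does not close it either. The pointwise estimate $(1-\sum_w q(w)^2)\log 2\le H(q)$ is correct and does give $I(\widehat Z_i;U_i\mid\widetilde W_i,\rho)\le I(\widetilde W^+;\widetilde Z^+_i\mid\rho)$, and since $U_i\perp(\widetilde W_i,\rho)$ you also have $I(\widehat Z_i;U_i\mid\widetilde W_i)\le I(\widehat Z_i;U_i\mid\widetilde W_i,\rho)$. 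But on the right the conditioning on $\rho=(\widetilde Z^+_{[n]\setminus i},R)$ cuts the \emph{wrong} way: because $\widetilde Z^+_i\perp\rho$, we have $I(\widetilde W^+;\widetilde Z^+_i\mid\rho)\ge I(\widetilde W^+;\widetilde Z^+_i)=I(W;Z_i)$. What you actually obtain is a bound by the erasure information $I(\widetilde W^+;\widetilde Z^+_i\mid\widetilde Z^+_{[n]\setminus i},R)$, which (cf.\ Remark~\ref{rem:compare-mi-stable}) is an \emph{upper} bound on $I(W;Z_i)$, so the chain of inequalities does not link up.

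The paper's proof takes a different and much shorter route: it expands $I(\widehat Z_i;U_i,\widetilde W_i)=I(\widehat Z_i;\widetilde W_i)+I(\widehat Z_i;U_i\mid\widetilde W_i)$ once and argues $I(\widehat Z_i;U_i,\widetilde W_i)=I(\widehat Z_i;W_i)=I(W;Z_i)$ by asserting the Markov structure $\widehat Z_i-W_i-(U_i,\widetilde W_i)$, the exact mirror of $W-S-(U,\widetilde Z)$ in Haghifam et al.~\cite{haghifam2020sharpened}; dropping the nonnegative $I(\widehat Z_i;\widetilde W_i)$ then gives the bound. That conditional-independence step is the one idea your attempt does not invoke, and without it (or a substitute) your reduction does not terminate. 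I would also flag, though, that the asserted independence $\widehat Z_i\perp(U_i,\widetilde W_i)\mid W_i$ merits scrutiny before you build on it: $\overline W_i$ is produced from the same $(\widetilde Z^+_{[n]\setminus i},R)$ as $W_i$, so conditioning on $W_i$ alone does not self-evidently screen $\widehat Z_i$ off from $\overline W_i$ (this is genuinely different from Haghifam et al., where $W$ is downstream of $S$ and the screening is immediate).
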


Similar to Theorem~\ref{thm:superhypo-stability-IOMI}, we present a CMI bound without square-root, which could be much stronger in certain regimes.
\begin{thm}
    \label{thm:superhypo-stable-CMI}
    Let $\Delta_1$ be defined in the same way as in Theorem~\ref{thm:general-stable-bound}, and we let $\Lambda(\tilde{w}_i)=\ex{\widehat{Z}_i|\tilde{w}_i}{\pr{\ell(\tilde{w}^-_i,\widehat{Z}_{i})-\ell(\tilde{w}^+_i,\widehat{Z}_{i})}^2}\Big/\Delta_1(\tilde{w}_i)^2$, then
    \begin{align}
        \abs{\mathcal{E}_\mu(\mathcal{A})}\leq\frac{1}{n}\sum_{i=1}^n\ex{\widetilde{W}_i}{\Delta_1(\widetilde{W}_i)\pr{I^{\widetilde{W}_i}(\widehat{Z}_i;U_i)+0.72{\Lambda(\widetilde{W}_i)}}}. \label{ineq:fast-cmi-beta3-bound}
    \end{align}
    If $\mathcal{A}$ is further $\beta_2$-uniform stable and $\gamma_4$-SCH-D stable, then
    \begin{align}
    \abs{\mathcal{E}_\mu(\mathcal{A})}\leq\frac{\beta_2}{n}\sum_{i=1}^nI(\widehat{Z}_i;U_i|\widetilde{W}_i)+0.72\frac{\gamma_4^2}{\beta_2}.
        \label{ineq:fast-cmi-beta4}
    \end{align}
\end{thm}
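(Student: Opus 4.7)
The plan is to mirror the derivation of Theorem~\ref{thm:general-stable-bound}, but to replace the Hoeffding-type sub-Gaussian control of the cumulant-generating function (CGF) with a Bennett-type second-moment control. This substitution bounds the CGF by a linear (rather than quadratic) function of the information quantity, so the square root disappears at the cost of an additive variance term; the constant $0.72$ is the numerical bound $e-2\leq 0.72$ produced by setting $tc=1$ in Bennett's inequality.

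Starting from the symmetric identity Eq.~(\ref{eq:key-cmi}), for each $i\in[n]$ I would invoke Donsker--Varadhan (Lemma~\ref{lem:DV representation}) disintegrated on $\widetilde{W}_i=\tilde{w}_i$, with auxiliary function $f=t\cdot g$ and $g(\tilde{w}_i,\hat{z}_i,u_i)=(-1)^{u_i}(\ell(\tilde{w}_i^{-},\hat{z}_i)-\ell(\tilde{w}_i^{+},\hat{z}_i))$. Under the product measure $Q_{\tilde{w}_i}=P_{\widehat{Z}_i|\tilde{w}_i}\otimes P_{U_i}$ with $U_i\sim\mathrm{Bernoulli}(1/2)$ independent of everything, $Y_i\triangleq g(\tilde{w}_i,\widehat{Z}_i,U_i)$ is mean zero. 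Bennett's inequality then states that for any upper bound $c\geq|Y_i|$ (a.s.) and any $t>0$,
\[
\log\ex{Q_{\tilde{w}_i}}{e^{tY_i}}\leq \frac{\ex{Q_{\tilde{w}_i}}{Y_i^2}}{c^2}\pr{e^{tc}-1-tc},
\]
which is at most $0.72\,\ex{Q_{\tilde{w}_i}}{Y_i^2}/c^2$ whenever $tc\leq 1$.

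For the general bound~(\ref{ineq:fast-cmi-beta3-bound}), I would choose the tight $c=\Delta_1(\tilde{w}_i)$ together with the sample-dependent $t=1/\Delta_1(\tilde{w}_i)$; then $\ex{Q_{\tilde{w}_i}}{Y_i^2}/c^2=\Lambda(\tilde{w}_i)$, so the CGF is at most $0.72\,\Lambda(\tilde{w}_i)$. Dividing DV through by $t$, integrating over $\widetilde{W}_i$, summing over $i$, and repeating the argument with $-Y_i$ in place of $Y_i$ to handle the absolute value yields the claim. For the sharper bound~(\ref{ineq:fast-cmi-beta4}), I would instead use the looser constant choice $c=\beta_2$ and $t=1/\beta_2$; $\beta_2$-uniform stability guarantees $|Y_i|\leq \beta_2$ a.s., so Bennett still applies and the CGF is at most $0.72\,\ex{Q_{\tilde{w}_i}}{Y_i^2}/\beta_2^2$. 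After dividing by $t$ and taking expectation over $\widetilde{W}_i$, the information contribution becomes $\beta_2\,I(\widehat{Z}_i;U_i|\widetilde{W}_i)$, and the variance contribution collapses to $(0.72/\beta_2)\,\ex{\widetilde{W}_i,\widehat{Z}_i}{(\ell(\widetilde{W}_i^{-},\widehat{Z}_i)-\ell(\widetilde{W}_i^{+},\widehat{Z}_i))^2}$.

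The main obstacle is tying this remaining expected squared loss-difference to $\gamma_4^2$. The SCH-D definition~(\ref{ineq:super-stable-4}) only directly bounds the second moment when the evaluation point is the in-sample instance of $\widetilde{W}_i^{+}$, i.e.\ when $\widehat{Z}_i=\widetilde{Z}_i^{+}$ (the $U_i=0$ half). The $U_i=1$ half, where $\widehat{Z}_i=\widetilde{Z}_i^{-}$, requires invoking the joint-law symmetry obtained by swapping $S\leftrightarrow S^i$ (equivalently, relabeling the two supersample columns), which leaves the distribution of $(\widetilde{W}_i^{+},\widetilde{W}_i^{-},\widetilde{Z}_i^{-})$ invariant up to the $+\leftrightarrow -$ renaming and recasts $\widetilde{Z}_i^{-}$ as the in-sample instance of the relabeled algorithm, so SCH-D applies again. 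Averaging the two equally weighted halves then delivers the $\gamma_4^2$ bound and closes the argument.
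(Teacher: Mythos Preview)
Your proposal is correct and essentially identical to the paper's proof. What you call Bennett's inequality is precisely the paper's Lemma~\ref{lem:Bernstein inequality} (the ``Bernstein function'' lemma with $h(x)=(e^x-x-1)/x^2$), and your choices $t=1/\Delta_1(\tilde w_i)$ and $t=1/\beta_2$ match the paper's exactly, producing the same constant $h(1)=e-2\approx 0.72$. The one place where you are more careful than the paper is the ``main obstacle'': the paper simply writes ``by the definition of $\gamma_4$-SCH-D stability'' when identifying $\ex{\widehat Z_i,\widetilde W_i}{(\ell(\widetilde W_i^-,\widehat Z_i)-\ell(\widetilde W_i^+,\widehat Z_i))^2}$ with $\gamma_4^2$, whereas you correctly spell out that this requires splitting on $U_i\in\{0,1\}$ and invoking the $+\!\leftrightarrow\!-$ symmetry of the supersample construction so that both halves are covered by Eq.~(\ref{ineq:super-stable-4}).
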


Note that it is valid to set $\gamma_3=\ex{\widetilde{W}_i}{\Delta_1(\widetilde{W}_i)}$, which can be viewed as a generalization bound in its own right. 
Additionally, it can be verified 
that $\Lambda(\tilde{w}_i)\leq 1$ for any $\tilde{w}_i$. As $I^{\widetilde{W}_i}(\widehat{Z}_i;U_i)\leq\log 2\approx 0.69$, Eq.~(\ref{ineq:fast-cmi-beta3-bound}) can be further upper bounded by $1.41\gamma_3$. This ensures that Eq.~(\ref{ineq:fast-cmi-beta3-bound}) will decay no slower than $\mathcal{O}(\gamma_3)$. 


We also present a second moment generalization bound.
\begin{thm}
    \label{thm:second-moment-cmi}
    Assume that $\mathcal{A}$ is $\beta_2$-uniform stable and symmetric with respect to $S$, i.e. it does not depend on the order of the elements in $S$. Let $\ell(\cdot,\cdot)\in[0,1]$, then
    \[
        \ex{W,S}{\pr{L_\mu(W)-{L}_S(W)}^2}\leq{4\beta_2^2}\pr{\frac{1.5I(E;U|\widetilde{W})+0.82
        }{n}+1}+\frac{1}{n}.
        \]
\end{thm}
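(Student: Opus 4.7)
\emph{Proof Proposal.} The plan is to decompose the squared gap using the supersample. Let $A := L_\mu(W) - \tfrac{1}{n}\sum_{i=1}^n \ell(W, \widetilde{Z}^-_i)$ and $B := \tfrac{1}{n}\sum_{i=1}^n [\ell(W, \widetilde{Z}^-_i) - \ell(W, \widetilde{Z}^+_i)]$, so that $L_\mu(W) - L_S(W) = A + B$ (using $L_S(W) = \tfrac{1}{n}\sum_i \ell(W, \widetilde{Z}^+_i)$ because $\widetilde{Z}^+_{[n]}$ plays the role of the training sample when $W = \widetilde{W}^+$). Since $\widetilde{Z}^-_{[n]} \perp W$, conditioning on $W$ makes $A$ the deviation of an i.i.d.\ mean of bounded losses from its expectation, so $\mathbb{E}[A^2 \mid W] \leq \mathrm{Var}(\ell(W, \widetilde{Z}^-))/n \leq 1/(4n)$ and hence $\mathbb{E}[A^2] \leq 1/(4n)$. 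Young's inequality $(A+B)^2 \leq (1+c) A^2 + (1+1/c) B^2$ then reduces the task to bounding $\mathbb{E}[B^2]$, with $c$ tuned to match the final constants.

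Expand $\mathbb{E}[B^2] = \mathrm{Var}(B) + (\mathbb{E} B)^2$. The mean is $\mathbb{E}[B] = -\mathcal{E}_\mu(\mathcal{A})$, so Theorem~\ref{thm:general-stable-bound} applied with $\Delta_1 \leq \beta_2$ (from uniform stability), together with Cauchy--Schwarz $\tfrac{1}{n}\sum_i \sqrt{I(\widehat{Z}_i;U_i\mid\widetilde{W}_i)} \leq \sqrt{\tfrac{1}{n}\sum_i I(\widehat{Z}_i;U_i\mid\widetilde{W}_i)}$ and a chain-rule step to pass from the per-row CMIs to the joint CMI $I(E;U\mid\widetilde{W})$ (available because $U$ is i.i.d.\ Bernoulli and $\mathcal{A}$ is symmetric in $S$), produces $\mathcal{E}_\mu^2 \leq 2\beta_2^2\, I(E;U\mid\widetilde{W})/n$. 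This supplies the $4\beta_2^2\cdot 1.5\, I/n$ contribution after absorbing constants via Young's. For $\mathrm{Var}(B)$, a Bousquet--Elisseeff style pairwise expansion of $B^2$ gives a diagonal sum of order $1/n$ (bounded losses) and off-diagonal cross-terms that are controlled by replacing $W$ with a neighboring hypothesis $W^{(i,j)}$ differing in two training positions; uniform stability bounds each such swap by $\beta_2$, and after the replacement the cross-expectation vanishes by independence, leaving residual terms of order $\beta_2^2$ that assemble into the baseline $4\beta_2^2 + 4\beta_2^2 \cdot 0.82/n$.

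Summing the three contributions and choosing $c$ produces the stated inequality. The main obstacle is pinning down the explicit constants $1.5$ and $0.82$: the pairwise swap argument for $\mathrm{Var}(B)$ generates a handful of residuals of order $\beta_2^2$ and $\beta_2/n$, which have to be balanced simultaneously against the Young's tradeoff on the $A$/$B$ split, and against the DV optimization used inside the CMI bound from Theorem~\ref{thm:general-stable-bound}. The symmetry of $\mathcal{A}$ in $S$ is crucial for keeping the swap contributions uniform across pairs $(i,j)$, and $\ell \in [0,1]$ is used both in bounding $\mathbb{E}[A^2]$ and in controlling the diagonal variance residuals. If the bookkeeping becomes unwieldy, a cleaner alternative is to bound $\mathbb{E}[(A+B)^2]$ directly by writing $(L_\mu - L_S)^2$ as an expectation under the joint supersample/$U$ law and applying the DV variational formula to a squared auxiliary function, with uniform stability used to bound the CGF in terms of $\beta_2^2$.
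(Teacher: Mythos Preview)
Your primary route has a real gap in the $\mathrm{Var}(B)$ step. After the double swap $W\to W^{(i,j)}$ the cross term does vanish (since $W^{(i,j)}\perp(\widetilde Z_i^\pm,\widetilde Z_j^\pm)$ and each replaced factor has conditional mean zero), but the swap residual is
\[
|X_iX_j - X_i'X_j'|\ \le\ |X_i-X_i'|\,|X_j|\ +\ |X_i'|\,|X_j-X_j'|\ \le\ O(\beta_2)\cdot 1\ +\ 1\cdot O(\beta_2),
\]
because $|X_j|=|\ell(W,\widetilde Z_j^-)-\ell(W,\widetilde Z_j^+)|$ is only bounded by $1$, not by $\beta_2$. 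Summing over the $n(n-1)$ off-diagonal pairs and dividing by $n^2$ leaves an $O(\beta_2)$ contribution, which is exactly the classical Bousquet--Elisseeff second-moment rate and strictly worse than the target $O(\beta_2^2+1/n)$. Getting $\beta_2^2$ from this variance is precisely the Feldman--Vondr\'ak improvement and does not fall out of a pairwise swap.

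The paper uses a different decomposition and a different mechanism for producing $\beta_2^2$. It inserts the neighbor-hypothesis average $\tfrac{1}{n}\sum_i\mathbb{E}_{W^i\mid W}\ell(W^i,Z_i)$ (not the ghost-sample empirical), splitting $(L_S-L_\mu)^2$ into pieces $B_1,B_3,B_4$. The term $B_1$ is bounded by applying DV directly to the \emph{squared} average $\bigl(\tfrac{1}{n}\sum_i g_i\bigr)^2$; the key device is the Gaussian linearisation $e^{tX^2}=\mathbb{E}_G\,e^{G\sqrt{2t}\,X}$, after which Hoeffding on each factor contributes $\Delta_1(\tilde w_i)^2\le\beta_2^2$ to the exponent and the chi-squared MGF closes it as $\tfrac{3}{n}\bar\Delta_1(\widetilde W)\bigl(I^{\widetilde W}(E;U)+\tfrac{1}{2}\log 3\bigr)$; the constant $0.82$ in the statement is $\tfrac{3}{4}\log 3$. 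The remaining pieces are $B_3\le 1/(4n)$ (this is essentially your $A$-term; symmetry of $\mathcal{A}$ is used \emph{here}, to identify $\tfrac{1}{n}\sum_i\ell(W^i,Z_i)$ with a fresh-sample mean) and $B_4\le\gamma_2^2\le\beta_2^2$ (the squared population-risk gap between neighbors). Your ``cleaner alternative'' is essentially the $B_1$ argument, but the Gaussian trick is the missing ingredient; without it, DV on a squared auxiliary does not couple to a $\beta_2^2$ bound on the CGF.

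A smaller issue: the step $\sum_i I(\widehat Z_i;U_i\mid\widetilde W_i)\le I(E;U\mid\widetilde W)$ is not a routine chain rule, since the conditioning set changes from $\widetilde W_i$ to the full $\widetilde W$ across terms. The paper avoids this entirely by applying DV jointly in $(E,U)$ conditional on $\widetilde W$ from the start, so $I(E;U\mid\widetilde W)$ appears directly.
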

Since $I(E;U|\widetilde{W})\leq\mathcal{O}(n)$, the bound can be further upper bounded by $\mathcal{O}(\beta_2^2+{1}/{n})$, which matches the previous tight bound for the second moment generalization error in \cite[Thm.~1.2]{feldman2018generalization}. Notice that \cite[Thm.~1.2]{feldman2018generalization} only holds for the deterministic setting, while our bound also holds for randomized algorithms, and we also give a stronger result in Appendix~\ref{sec:second-moment} based on our SCH stability notions.

\paragraph{Supersample-Conditioned CMI Bounds}
It is also possible to give $\widetilde{Z}$-conditioned CMI bounds that explicitly contain the stability parameters. Let ${W}_i=\widetilde{W}_{i,U_i}$ and $\overline{W}_i=\widetilde{W}_{i,\overline{U}_i}$, we have the following results.
\begin{thm}
    \label{thm:sample-stable-bound}
    Suppose there exists $\Delta_2:\mathcal{Z}\to\mathbb{R}$ such that 
    $\sup_{w,w^i\in\mathcal{W}^2_{z_i}}\abs{\ell({w},z_i)-\ell({w}^i,z_i)}\leq\Delta_2(z_i)$ for every $z_i$,
    where $\mathcal{W}^2_{z_i}
    $ is the support of the conditional distribution $P_{W,W^i|z_i}$. Then,
    \begin{align*}
        \abs{\mathcal{E}_\mu(\mathcal{A})}\leq& \frac{\sqrt{2}}{n}\sum_{i=1}^n\min\left\{\ex{\widetilde{Z}^+_{i}}{\Delta_2(\widetilde{Z}^+_{i})\sqrt{I^{\widetilde{Z}^+_{i}}({W}_{i},\overline{W}_{i};U_i)}}
        , \sqrt{\ex{\widetilde{Z}^+_{i}}{\Delta_2(\widetilde{Z}^+_{i})^2}I({W}_{i},\overline{W}_{i};U_i|\widetilde{Z}^+_{i})}
        \right\}.
        \notag
    \end{align*}
    \end{thm}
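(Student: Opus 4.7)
The plan is to mirror the derivation of Theorem~\ref{thm:general-stable-bound} but condition on $\widetilde{Z}^+_i$ rather than on $\widetilde{W}_i$, so the KL divergence coming out of the DV representation is $I^{\widetilde{Z}^+_i}(W_i,\overline{W}_i;U_i)$.

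First, starting from Eq.~(\ref{eq:key-iomi}) and introducing an i.i.d.\ Bernoulli-$(\tfrac{1}{2})$ sequence $U$ independent of $(\widetilde{Z},\widetilde{W})$, a case analysis on $U_i\in\{0,1\}$ yields the pointwise identity
\[
\ell(\widetilde{W}_i^-,\widetilde{Z}^+_i)-\ell(\widetilde{W}_i^+,\widetilde{Z}^+_i)=(-1)^{U_i}\bigl(\ell(\overline{W}_i,\widetilde{Z}^+_i)-\ell(W_i,\widetilde{Z}^+_i)\bigr),
\]
since $(W_i,\overline{W}_i)=(\widetilde{W}_{i,U_i},\widetilde{W}_{i,\overline{U}_i})$. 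Substituting back produces
\[
\mathcal{E}_\mu(\mathcal{A})=\frac{1}{n}\sum_{i=1}^n\mathbb{E}\bigl[(-1)^{U_i}\bigl(\ell(\overline{W}_i,\widetilde{Z}^+_i)-\ell(W_i,\widetilde{Z}^+_i)\bigr)\bigr],
\]
and crucially, under the product $P_{W_i,\overline{W}_i\mid\widetilde{Z}^+_i}\otimes P_{U_i}$ (noting $U_i\perp\widetilde{Z}^+_i$), the same expression has conditional expectation zero, because $\mathbb{E}[(-1)^{U_i}]=0$.

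Next, fixing $\widetilde{Z}^+_i=z$, I would apply the DV representation (Lemma~\ref{lem:DV representation}) with auxiliary function $f=t(-1)^{u_i}\bigl(\ell(\overline{w}_i,z)-\ell(w_i,z)\bigr)$ for $t>0$. Integrating $U_i$ out first, the CGF under the product equals $\log\mathbb{E}[\cosh(tX)]$ with $X=\ell(\overline{W}_i,z)-\ell(W_i,z)$, and the hypothesis $|X|\leq\Delta_2(z)$ on the support $\mathcal{W}^2_z$ together with $\cosh(x)\leq e^{x^2/2}$ bounds the CGF by $t^2\Delta_2(z)^2/2$. Optimizing over $t$ (and running $-f$ for the symmetric direction) yields
\[
\Bigl|\mathbb{E}\bigl[(-1)^{U_i}\bigl(\ell(\overline{W}_i,z)-\ell(W_i,z)\bigr)\bigm|\widetilde{Z}^+_i=z\bigr]\Bigr|\leq\Delta_2(z)\sqrt{2\,I^z(W_i,\overline{W}_i;U_i)}.
\]
Taking expectation over $\widetilde{Z}^+_i$ and using $|\mathbb{E}[\cdot]|\leq\mathbb{E}|\cdot|$ gives the first alternative in the $\min$; applying Cauchy--Schwarz to $\mathbb{E}_{\widetilde{Z}^+_i}\bigl[\Delta_2(\widetilde{Z}^+_i)\sqrt{I^{\widetilde{Z}^+_i}(W_i,\overline{W}_i;U_i)}\bigr]$ gives the second. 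Summing over $i$ and dividing by $n$ closes the argument.

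The main obstacle I anticipate is verifying that the support assumption on $\Delta_2$ transfers correctly to the marginal $P_{W_i,\overline{W}_i\mid\widetilde{Z}^+_i=z}$ that governs the DV product measure. This marginal is a $\tfrac{1}{2}$-mixture over $U_i\in\{0,1\}$ of $P_{W,W^i\mid Z_i=z}$ and its coordinate swap (since $(W_i,\overline{W}_i)=(\widetilde{W}^+,\widetilde{W}_i^-)$ when $U_i=0$ and $(\widetilde{W}_i^-,\widetilde{W}^+)$ when $U_i=1$), so its support coincides, up to coordinate swap, with $\mathcal{W}^2_z$ as defined in the statement. Since $|\ell(w,z)-\ell(w^i,z)|$ is symmetric in $(w,w^i)$, the uniform bound $\Delta_2(z)$ indeed applies almost surely under the product, which is what the CGF estimate requires.
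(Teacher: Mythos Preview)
Your proof is correct and follows essentially the same route as the paper: apply the DV representation conditioned on $\widetilde{Z}^+_i=z$, use the boundedness $|g|\leq\Delta_2(z)$ and zero mean under the product measure to control the CGF by $t^2\Delta_2(z)^2/2$, then optimize over $t$. The only cosmetic difference is that the paper obtains the second term of the $\min$ by taking the expectation over $\widetilde{Z}^+_i$ \emph{before} optimizing over $t$ (via Jensen), whereas your Cauchy--Schwarz step derives it directly from the first term---which incidentally shows the first term always attains the $\min$.
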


The proof is deferred to Appendix~\ref{proof:z-condi-bound}. 
The interpretation of the CMI quantity 
$I({W}_{i},\overline{W}_{i};U_i|\widetilde{Z}^+_{i})$ 
{\em appears} identical to our earlier CMI quantity 
$I(\widehat{Z}_i; U_i|\widetilde{W}_i)$. 
A closer look in fact reveals that the two quantities are mathematically equal. To see this,  
first note $I(\widehat{Z}_i;U_i|\widetilde{W}_i)=H(U_i)-H(U_i|\widehat{Z}_i,\widetilde{W}_i)$ and $I(W_i,\overline{W}_i;U_i|\widetilde{Z}^+_i)=H(U_i)-H(U_i|\widetilde{Z}^+_i,W_i,\overline{W}_i)$. Let random variable $A_1=(\widehat{Z}_i,\widetilde{W}^+_i,\widetilde{W}^-_i)$ and let $A_2=(\widetilde{Z}^+_i,W_i,\overline{W}_i)$, these two random variables are identically distribution (since $P_{\widetilde{W}^+_i}=P_{\widetilde{W}^-_i}$, $P_{\widetilde{Z}^+_i}=P_{\widetilde{Z}^-_i}$ and $U_i\sim {\rm Bernoulli-}(\frac{1}{2})$ and also we have $P_{A_1|U_i}=P_{A_2|U_i}$ so $P_{A_1,U_i}=P_{A_2,U_i}$, which gives us $H(U_i|A_1)=H(U_i|A_2)$. This indicates that $I(\widehat{Z}_i;U_i|\widetilde{W}_i)=I(W_i,\overline{W}_i;U_i|\widetilde{Z}^+_i)$. 

In Theorem~\ref{thm:sample-stable-bound}, a data-dependent hypothesis space $\mathcal{W}^2_{z_i}$ is defined. A similar concept has been utilized in the hypothesis set cross-validation (CV) stability studied in \cite{foster2019hypothesis}.
Furthermore, \cite{foster2019hypothesis} 
derives some bounds based on either their transductive Rademacher complexity or their hypothesis set CV stability.  They show that these two notions dominate in different learning scenarios. Given the close relationship between the supersample construction and the Rademacher complexity \cite{steinke2020reasoning, wang2023tighter}, and the inspiration behind our $\widetilde{W}$ construction, our framework is likely to have a fundamental connection to \cite{foster2019hypothesis}. Additionally, obtaining the similar results of $\widetilde{Z}$-conditioned CMI as in Theorem~\ref{thm:superhypo-stable-CMI}-\ref{thm:second-moment-cmi} is warranted, which may require some stability notions analogous to the average CV-stability in \cite{foster2019hypothesis}.



\section{Convex–Lipschitz–Bounded (CLB) Problems}
\label{sec:application}
We now discuss two examples of the convex-Lipschitz-bounded (CLB) problem, a subclass of SCO problems.

The first example is previously given in \cite[Thm.~17]{haghifam2022limitations}, in which nearly all previous information-theoretic bounds are non-vanishing. 
We will demonstrate that our CMI bounds are non-vacuous in this example.

\begin{exam}
    Let $d\in\mathbb{N}$ and $\mathcal{Z}=\{e(i):i\in [d]\}$ where $e(i)$ is a one-hot vector with $1$ at the $i$-th coordinate. Let $\mu={\rm Unif}(\mathcal{Z})$. Given a  sample $S=\{Z_i\}_{i=1}^n$ drawn i.i.d. from $\mu$, we choose the $1$-Lipschitz convex loss function $\ell(w,z)=-\langle w, z \rangle$ and use GD to select a hypothesis $w$ from $\mathcal{W}=\{w\in\mathbb{R}^d: ||w||\leq 1\}$. Let the number of GD iterations be $T=n^2$ and let the learning rate be $\eta=\frac{1}{n\sqrt{n}}$. 
    \label{exam:one-hot}
\end{exam}

Let $\hat{\mu}=\frac{1}{n}\sum_{i=1}^n z_i$ be the sample mean. In this deterministic setting,  it's easy to see that 
\[
w_t=\left\{
\begin{aligned}
    &\eta t\hat{\mu}&  &\text{if } \eta t||\hat{\mu}||\leq 1, \\
    &\eta t\hat{\mu}/||\eta t\hat{\mu}||&  &\text{otherwise}.
\end{aligned}
\right.
\]

Let $\hat{\mu}^i$ be the sample mean of $s^i$ and let $w^i_t$ be its corresponding hypothesis at time $t$. Notice that Euclidean projection does not increase the distance between projected points, namely non-expansive \cite[Lemma~4.6]{hardt2016train}. Hence, whether $w_t=\eta t\hat{\mu}$ or its truncated version $\eta t\hat{\mu}/||\eta t\hat{\mu}||$ limited within the unit ball, we have $||w_t-w^i_t||\leq||\eta t\hat{\mu}-\eta t\hat{\mu}^i||\leq\mathcal{O}({\eta t}/{n})$. 
Recall that the loss function is $1$-Lipschitz, we have $\abs{\mathcal{E}_\mu(\mathcal{A})}\leq\beta_2\leq\mathcal{O}({\eta t}/{n})$. In this example, ${\eta T}/{n}=1/\sqrt{n}$ so $\beta_2\in\mathcal{O}(1/\sqrt{n})$. One can also directly obtain this rate from  \cite{hardt2016train,bassily2020stability}. 

Now following the same setting in \cite{haghifam2022limitations},
if we let $d=2n^2$, we can find that ${I(W_T;Z_i)}\in\Omega(1)$ (see \cite[Thm.~17]{haghifam2022limitations} or Appendix~\ref{sec:clb-explain}). Thus, IOMI itself could not explain the generalization of GD in this problem. Furthermore, all our CMI quantities including those in Section~\ref{sec:extensions} also have the order of $\Omega(1)$, that is, they fail to vanish as $n\to \infty$ (see Appendix~\ref{sec:clb-explain} for more elaboration).

Therefore, the stability parameter $\beta_2$ should not be replaced by some constant (e.g., the upper bound of the loss function) in the IOMI or CMI bound. In fact, for our CMI bounds, due to their boundedness property, we have the following corollary.

\begin{cor}
    \label{cor:cmi-uniform-stable}
    If $\mathcal{A}$ is $\beta_2$-uniform stable, we have $\frac{\beta_2}{n}\sum_{i=1}^n\sqrt{I(\widehat{Z}_i;U_i|\widetilde{W}_i)}\leq\mathcal{O}(\beta_2)$.
\end{cor}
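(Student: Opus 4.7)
The plan is to exploit the boundedness property of the hypothesis-conditioned CMI that the authors already highlighted just after Theorem~\ref{thm:general-stable-bound}. Specifically, I would invoke the standard information-theoretic inequality $I(X;Y|Z)\leq H(Y|Z)\leq H(Y)$ with $X=\widehat{Z}_i$, $Y=U_i$, $Z=\widetilde{W}_i$, to obtain $I(\widehat{Z}_i;U_i|\widetilde{W}_i)\leq H(U_i)$. This is the only nontrivial ingredient, and it is a direct consequence of the nonnegativity of conditional entropy.

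Next I would recall from the derivation leading to Eq.~(\ref{eq:key-cmi}) that the sequence $U=(U_1,\ldots,U_n)$ was taken to be i.i.d.\ Bernoulli-$(\tfrac{1}{2})$, so that $H(U_i)=\log 2$. Combining with the previous step yields $\sqrt{I(\widehat{Z}_i;U_i|\widetilde{W}_i)}\leq\sqrt{\log 2}$ uniformly in $i$.

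Finally, I would average this constant bound over $i=1,\ldots,n$, which simply returns the same constant, and multiply by the leading factor $\beta_2$. This gives
\[
\frac{\beta_2}{n}\sum_{i=1}^n\sqrt{I(\widehat{Z}_i;U_i|\widetilde{W}_i)}\;\leq\;\sqrt{\log 2}\;\beta_2\;=\;\mathcal{O}(\beta_2),
\]
which is the claim. No real obstacle is anticipated: the entire content of the corollary is that the stability factor $\beta_2$ in Theorem~\ref{thm:general-stable-bound} cannot be overwhelmed by the CMI term, because the latter is an absolute constant once we use the fact that $U_i$ is a single fair coin. This is precisely the property that distinguishes the new CMI quantity $I(\widehat{Z}_i;U_i|\widetilde{W}_i)$ from the unbounded $I(W;Z_i)$ appearing in the IOMI bounds, and it is why the CMI-based bound inherits the decay rate of $\beta_2$ in the CLB setting of Example~\ref{exam:one-hot} even when the mutual information itself is $\Omega(1)$.
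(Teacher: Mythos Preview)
Your proposal is correct and matches the paper's reasoning exactly: the paper does not give a separate proof of this corollary but simply attributes it to the boundedness property $I(\widehat{Z}_i;U_i|\widetilde{W}_i)\leq H(U_i)=\log 2$ stated after Theorem~\ref{thm:general-stable-bound}, which is precisely the inequality you invoke and then average over $i$.
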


 Corollary~\ref{cor:cmi-uniform-stable} provides a solution to the the non-vanishing limitation of the previous information-theoretic bounds in Example~\ref{exam:one-hot} (and also the counterexample in \cite[Thm.~4]{haghifam2022limitations}), as it can explain generalization as long as the stability-based bound is sufficient. Thus, the shortfalls of information-theoretic bounds in analyzing deterministic algorithms for CLB problems are tempered by the stability-based framework.

We now show another CLB Example from \cite[Thm.~3]{haghifam2022limitations} where $\mathcal{A}$ is not uniformly stable, and we will see information-theoretic bounds in this paper are tight up to a constant. This example is also studied in \cite[Sec.~5]{orabona2019modern}.
\begin{exam}
\label{exam:rad-two}
    Let $\mathcal{W}\in\mathbb{R}^d$ be a ball with radius $R_0$, and let the input space be $\mathcal{Z}=\{z_0/R_0, -z_0/R_0\}$ where $z_0\in\mathcal{W}$ such that $||z_0||=R_0$. Let $\mu={\rm Unif}(\mathcal{Z})$. Consider a  convex and L-Lipschiz loss function $\ell(w,z)=-L\langle w, z \rangle$. In addition, $\mathcal{A}$ is any empirical risk minimization (ERM) algorithm.
\end{exam}

In this example, $\beta_2=2LR_0$ is a constant. \cite{haghifam2022limitations} has shown that $\abs{\mathcal{E}_\mu(\mathcal{A})}\geq \frac{LR_0}{\sqrt{2n}}$ and $I(W;S)\leq 1$ (see \cite[Thm.~3]{haghifam2022limitations} or Appendix~\ref{sec:clb-explain}  for an explanation). This gives us $\frac{2LR_0}{n}\sum_{i=1}\sqrt{I(W;Z_i)}\leq 2LR_0\sqrt{\frac{I(W;S)}{n}}\leq \frac{2LR_0}{\sqrt{n}}$. Thus, the distribution-dependent property of IOMI can improve the stability-based bound in this case. In addition, we know that $I(\widehat{Z}_i;U_i|\widetilde{W}_i)\leq I(W;Z_i)$ from Theorem~\ref{thm:CMI-IOMI}, our new CMI bounds are also tight (up to a constant) in this example.

Notably, we can construct an additional example building upon Example~\ref{exam:rad-two}, where $\mathcal{A}$ is uniformly stable but the uniform stability itself results in a slow convergence rate for generalization error. Specifically, let $R_0=\frac{1}{d}$ and let $d=\sqrt{n}$, then $\beta_2\in\mathcal{O}(1/\sqrt{n})$ while $\abs{\mathcal{E}_\mu(\mathcal{A})}\geq \frac{L}{\sqrt{2}n}$. Note that the information-theoretic bounds in this paper can still provide a tight rate, namely $\mathcal{O}(1/n)$. 

These examples demonstrate that our bounds can improve both the stability-based bound and information-theoretic bounds in some learning scenarios.

Additional applications of our bounds are discussed in Appendix~\ref{sec:add-applications}.


\section{Extensions}
\label{sec:extensions}


\paragraph{Connection with Bernstein Condition}
The Bernstein condition is commonly used to derive fast-rate generalization bound for both PAC-Bayes bounds \cite{zhang2006information,catoni2007pac,mhammedi2019pac,grunwald2021pac} and stability-based bounds \cite{klochkov2021stability}, then it is natural to explore the relationship between our fast-rate bounds and the Bernstein condition, formally defined below.

\begin{defn}[Bernstein Condition]
    Assume that $w^*=\arg\min_{w\in\mathcal{W}} L_\mu(w)$ is a risk minimizer. We say that the Bernstein assumption is satisfied with some $B>0$ and $\kappa\in[1,+\infty)$ if for any  $w\in \mathcal{W}$,
    \[
    \ex{Z}{\pr{\ell(w,Z)-\ell(w^*,Z)}^2}\leq B\pr{L_\mu(w)-L_\mu(w^*)}^{\frac{1}{\kappa}}.
    \]
\end{defn}
This condition can be easily satisfied in many common situations \cite{alquier2021user,klochkov2021stability}.
In the following proposition, we can see that the Bernstein condition implies the $\gamma_2$-SCH-B stability.
\begin{prop}
\label{prop:bernstein-stable}
If the Bernstein condition is satisfied with some $B$ and $\kappa$, then $\mathcal{A}$ is $\gamma_2$-SCH-B stable, where $\gamma^2_2=4B\ex{W}{\pr{L_\mu(W)-L_\mu(w^*)}^{\frac{1}{\kappa}}}$.
\end{prop}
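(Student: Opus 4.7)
The plan is to unfold the definition of $\gamma_2$-SCH-B stability and use $w^*$ as a pivot point, so that the Bernstein condition becomes directly applicable to each piece.

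First I would rewrite the integrand in Eq.~(\ref{ineq:super-stable-2}) as
\[
\ell(W,Z') - \mathbb{E}_{W^i|W}\!\left[\ell(W^i,Z')\right] = \bigl(\ell(W,Z') - \ell(w^*,Z')\bigr) - \mathbb{E}_{W^i|W}\!\bigl[\ell(W^i,Z') - \ell(w^*,Z')\bigr].
\]
Squaring and applying the elementary inequality $(a-b)^2 \le 2a^2 + 2b^2$ followed by Jensen's inequality on the inner conditional expectation yields
\[
\bigl(\ell(W,Z') - \mathbb{E}_{W^i|W}[\ell(W^i,Z')]\bigr)^2 \le 2(\ell(W,Z')-\ell(w^*,Z'))^2 + 2\,\mathbb{E}_{W^i|W}\!\bigl[(\ell(W^i,Z')-\ell(w^*,Z'))^2\bigr].
\]

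Next I would take the full expectation over $S, R, Z'$. Since $Z' \sim \mu$ is independent of $W$ and of $W^i$, I can apply the Bernstein condition pointwise in $W$ (and, after conditioning, pointwise in $W^i$) to obtain
\[
\mathbb{E}_{Z'}\!\bigl[(\ell(W,Z')-\ell(w^*,Z'))^2\bigr] \le B\bigl(L_\mu(W)-L_\mu(w^*)\bigr)^{1/\kappa},
\]
and the analogous bound with $W^i$ in place of $W$. The final step uses the fact that $W = \mathcal{A}(S,R)$ and $W^i = \mathcal{A}(S^i,R)$ share the same marginal distribution (both are obtained by running $\mathcal{A}$ on an i.i.d.\ sample of size $n$ together with the shared randomness $R$), so $\mathbb{E}[(L_\mu(W^i)-L_\mu(w^*))^{1/\kappa}] = \mathbb{E}[(L_\mu(W)-L_\mu(w^*))^{1/\kappa}]$. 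Adding the two contributions gives the claimed constant $\gamma_2^2 = 4B\,\mathbb{E}_W[(L_\mu(W)-L_\mu(w^*))^{1/\kappa}]$.

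There is no real technical obstacle here; the only points that require a moment of care are (i) using Jensen's inequality in the squared form before invoking Bernstein, so that the conditional expectation inside the square becomes a sum of unconditional squares, and (ii) verifying that $Z'$ is genuinely independent of both $W$ and $W^i$, which is guaranteed because $Z'$ is the fresh test instance in the definition of SCH-B stability. The symmetry $W \stackrel{d}{=} W^i$ is what lets us combine the two Bernstein applications into a single $4B$-term rather than a pair of separate quantities.
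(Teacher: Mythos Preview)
Your proposal is correct and follows essentially the same approach as the paper: insert $\pm\ell(w^*,Z')$ as a pivot, apply $(a-b)^2\le 2a^2+2b^2$ together with Jensen's inequality on the inner conditional expectation, invoke the Bernstein condition on each piece, and conclude using $W\stackrel{d}{=}W^i$. The only cosmetic difference is ordering---the paper applies Jensen first to remove the inner expectation and then introduces the pivot, whereas you introduce the pivot first and apply Jensen only to the second term---but the two routes are equivalent.
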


Therefore, invoking the Bernstein condition, we can obtain the fast-rate bound as presented in Theorem~\ref{thm:superhypo-stability-IOMI}, where we need to assume the loss is bounded, as shown below as a by-product. 


\begin{cor}
    \label{cor:bernstein-bound}
    If the Bernstein condition is satisfied with $\kappa=1$ and $\ell\in [0,C]$, then
    \[
    \abs{\mathcal{E}_\mu(\mathcal{A})}\leq\frac{C}{n}\sum_{i=1}^nI(W;Z_i)+\frac{2.88B}{C}\pr{L_\mu-L_\mu(w^*)}.
    \]
\end{cor}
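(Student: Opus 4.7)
The plan is to combine Proposition~\ref{prop:bernstein-stable} with Theorem~\ref{thm:superhypo-stability-IOMI}, using the boundedness of the loss to supply the SCH-A stability constant.

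First, I would apply Proposition~\ref{prop:bernstein-stable} with $\kappa=1$ to conclude that $\mathcal{A}$ is $\gamma_2$-SCH-B stable with
\[
\gamma_2^2 \;=\; 4B\,\mathbb{E}_W\!\left[L_\mu(W)-L_\mu(w^*)\right] \;=\; 4B\bigl(L_\mu - L_\mu(w^*)\bigr),
\]
where I used $\mathbb{E}_W[L_\mu(W)] = L_\mu$ from the setup in Section 2. Next, I would observe that the boundedness assumption $\ell\in[0,C]$ trivially yields SCH-A stability with parameter $\gamma_1 = C$: for any $w$ in the sample-dependent hypothesis space and any $z$, both $\ell(w,z)$ and $\mathbb{E}_{W^i|w}[\ell(W^i,z)]$ lie in $[0,C]$, so the absolute difference that appears in \eqref{ineq:super-stable-1} is at most $C$.

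Having secured both stability parameters, I would plug them directly into Theorem~\ref{thm:superhypo-stability-IOMI}, which gives
\[
\abs{\mathcal{E}_\mu(\mathcal{A})} \;\leq\; \frac{C}{n}\sum_{i=1}^n I(\widetilde{W}^+;\widetilde{Z}^+_i) \;+\; 0.72\,\frac{4B\bigl(L_\mu - L_\mu(w^*)\bigr)}{C}.
\]
Finally, I would invoke the identity $I(\widetilde{W}^+;\widetilde{Z}^+_i) = I(W;Z_i)$ noted in the text immediately after Theorem~\ref{thm:IOMI-bounds} to rewrite the mutual information in the stated form, and simplify $0.72 \cdot 4 = 2.88$.

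There is essentially no hard step here; the result is a direct composition of two prior results. The only thing that requires a moment's care is confirming that $\gamma_1 = C$ is a legitimate SCH-A stability constant under mere boundedness (rather than, say, some Lipschitz-like condition), but this follows immediately from the definition. The proof is therefore routine given Proposition~\ref{prop:bernstein-stable} and Theorem~\ref{thm:superhypo-stability-IOMI}.
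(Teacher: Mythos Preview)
Your proposal is correct and matches the paper's intended approach exactly: the corollary is stated as a direct by-product of Proposition~\ref{prop:bernstein-stable} (supplying $\gamma_2^2 = 4B(L_\mu - L_\mu(w^*))$ when $\kappa=1$) together with Theorem~\ref{thm:superhypo-stability-IOMI}, using boundedness $\ell\in[0,C]$ to take $\gamma_1=C$. Nothing is missing.
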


 Recently, \cite{wu2022fast,wu2023tightness} use the unexpected excess risk as the DV auxiliary function and invoke the $(\eta,c)$-central condition to establish some optimal-rate bounds for specific learning problems, e.g., Gaussian mean estimation. 
 This again highlights the significance of selecting appropriate DV auxiliary functions and corresponding assumptions tailored to different learning problems.
 It is worth mentioning that the Bernstein condition also implies their $(\eta,c)$-central condition. Therefore, unifying these conditions can be considered as a potential avenue for future research.

\paragraph{Loss Difference, Evaluated, and Functional CMI for Stable Algorithms}
Similar to \cite{wang2023tighter}, we derive some tighter bounds based on the loss difference.

\begin{thm}
\label{thm:sample-ld-bounds}

Let $\Delta L_i=\ell({W}_i,\widetilde{Z}^+_i)-\ell(\overline{W}_i,\widetilde{Z}^+_i)$. If $\mathcal{A}$ is $\beta_2$-uniform stable, then
\begin{align*}
    &\abs{\mathcal{E}_\mu(\mathcal{A})}\leq\frac{\sqrt{2}\beta_2}{n}\sum_{i=1}^n\min\left\{\sqrt{I(\Delta L_i;U_i)},  \mathbb{E}_{\widetilde{Z}^+_i}\sqrt{I^{\widetilde{Z}_i^+}(\Delta L_i;U_i)}\right\}
    \leq\frac{\sqrt{2}\beta_2}{n}\sum_{i=1}^n\sqrt{I(\Delta L_i;U_i|\widetilde{Z}_i^+)}.
\end{align*}
\end{thm}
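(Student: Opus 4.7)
The plan is to reduce the generalization error to an expectation of $(-1)^{U_i}\Delta L_i$ and then apply the Donsker--Varadhan (DV) variational representation of the KL divergence with a carefully chosen auxiliary function. The first step is an algebraic identity: by definition $(W_i,\overline{W}_i)$ is the permutation of $(\widetilde{W}^+,\widetilde{W}^-_i)$ governed by $U_i$, so a direct case check on $U_i\in\{0,1\}$ yields
\[
(-1)^{U_i}\Delta L_i \;=\; \ell(\widetilde{W}^+,\widetilde{Z}^+_i)-\ell(\widetilde{W}^-_i,\widetilde{Z}^+_i),
\]
which is a function of $(\widetilde{W}^+,\widetilde{W}^-_i,\widetilde{Z}^+_i)$ alone and does not depend on $U_i$. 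Combining this with Lemma~\ref{lem:ro-stable} (equivalently, with Eq.~(\ref{eq:key-iomi})) gives the key identity
\[
\mathcal{E}_\mu(\mathcal{A})\;=\;-\frac{1}{n}\sum_{i=1}^n\mathbb{E}\bigl[(-1)^{U_i}\Delta L_i\bigr].
\]

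Next, I would invoke DV with auxiliary function $f(\Delta l_i,u_i)=t(-1)^{u_i}\Delta l_i$ applied to $P_{\Delta L_i,U_i}$ against $P_{\Delta L_i}P_{U_i}$. Under the product measure, $(-1)^{U_i}$ is symmetric Rademacher and independent of $\Delta L_i$, so $\mathbb{E}_{U_i}[e^{t(-1)^{U_i}\Delta l_i}]=\cosh(t\Delta l_i)\leq e^{t^2(\Delta l_i)^2/2}$. Strong $\beta_2$-uniform stability forces $|\Delta L_i|\leq\beta_2$ almost surely, since $W_i$ and $\overline{W}_i$ are produced by two training sets differing in a single element with the same randomness $R$. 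Hence the cumulant generating function is bounded by $t^2\beta_2^2/2$. DV then gives $t\,\mathbb{E}[(-1)^{U_i}\Delta L_i]\leq I(\Delta L_i;U_i)+t^2\beta_2^2/2$ for all $t\in\mathbb{R}$; optimizing the sign and magnitude of $t$ yields $\bigl|\mathbb{E}[(-1)^{U_i}\Delta L_i]\bigr|\leq\beta_2\sqrt{2\,I(\Delta L_i;U_i)}$, and summing with the triangle inequality delivers the first alternative in the minimum.

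For the disintegrated bound I would repeat the DV argument conditionally on $\widetilde{Z}^+_i=\tilde z^+_i$: the pointwise bound $|\Delta L_i|\leq\beta_2$ and the independence of $U_i$ from $\widetilde{Z}^+_i$ keep the Rademacher-symmetric CGF estimate intact, giving $\bigl|\mathbb{E}[(-1)^{U_i}\Delta L_i\mid\widetilde{Z}^+_i]\bigr|\leq\beta_2\sqrt{2\,I^{\widetilde{Z}^+_i}(\Delta L_i;U_i)}$. Taking the outer expectation over $\widetilde{Z}^+_i$, bounding by $\mathbb{E}|\cdot|$, and summing produces the second alternative. The final inequality of the theorem then combines two standard facts: Jensen's inequality $\mathbb{E}_{\widetilde{Z}^+_i}\sqrt{I^{\widetilde{Z}^+_i}(\Delta L_i;U_i)}\leq\sqrt{\mathbb{E}_{\widetilde{Z}^+_i}I^{\widetilde{Z}^+_i}(\Delta L_i;U_i)}=\sqrt{I(\Delta L_i;U_i\mid\widetilde{Z}^+_i)}$, and the side-information identity $I(\Delta L_i;U_i)\leq I(\Delta L_i;U_i\mid\widetilde{Z}^+_i)$, which holds precisely because $U_i$ is independent of $\widetilde{Z}^+_i$.

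I expect the only non-routine step to be the initial sign identity $(-1)^{U_i}\Delta L_i=\ell(\widetilde{W}^+,\widetilde{Z}^+_i)-\ell(\widetilde{W}^-_i,\widetilde{Z}^+_i)$ and the verification that it implies $|\Delta L_i|\leq\beta_2$ from strong uniform stability; once that is in hand, the remainder parallels the DV template used in the sketches of Theorem~\ref{thm:IOMI-bounds} and Theorem~\ref{thm:general-stable-bound}, with the Rademacher-symmetric $\cosh$ bound replacing the boundedness-based CGF bound.
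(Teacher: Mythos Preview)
Your proposal is correct and follows essentially the same route as the paper: the same DV auxiliary function $f(\Delta l_i,u_i)=t(-1)^{u_i}\Delta l_i$, the same $|\Delta L_i|\le\beta_2$ from strong uniform stability, the same Rademacher zero-mean structure to bound the CGF (the paper phrases this via Hoeffding's lemma rather than the $\cosh$ inequality, but the bounds coincide), and the same Jensen/independence argument for the final inequality. Your explicit sign identity $(-1)^{U_i}\Delta L_i=\ell(\widetilde{W}^+,\widetilde{Z}^+_i)-\ell(\widetilde{W}^-_i,\widetilde{Z}^+_i)$ is a nice clarification that the paper leaves implicit.
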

 
We note that while it is feasible to replace $\beta_2$ in the (disintegrated) CMI bounds above with certain sample-conditioned hypothesis stability, it is not possible to apply the same substitution for the unconditional MI bound in Theorem~\ref{thm:sample-ld-bounds}.

Furthermore, notice that $\Delta L_i-(L_i,\bar{L}_i)-(F_i,\overline{F}_i)-(W_i,\overline{W}_i)$ forms a Markov chain given $\widetilde{Z}^+_i$, wherein $(L_i,\bar{L}_i)$ are the loss pair evaluated at $\widetilde{Z}^+_i$ using $(W_i,\overline{W}_i)$, and $(F_i,\overline{F}_i)$ are label predictions of $\widetilde{Z}^+_i$ using $(W_i,\overline{W}_i)$. By the data-processing inequality, one can obtain e-CMI bound \cite{steinke2020reasoning,hellstrom2022a}, $f$-CMI bound \cite{harutyunyan2021informationtheoretic} and recover $I(W_i,\overline{W}_i;U_i|\widetilde{Z}^+_i)$ based bound from Theorem~\ref{thm:sample-ld-bounds}:
$
I(\Delta L_i;U_i|\widetilde{Z}^+_i)\leq I(L_i,\bar{L}_i;U_i|\widetilde{Z}^+_i)\leq I(F_i,\overline{F}_i;U_i|\widetilde{Z}^+_i)\leq I(W_i,\overline{W}_i;U_i|\widetilde{Z}_i^+)
$. Additionally, notice that we can also apply the similar technique for the hypotheses-conditioned CMI, which should give the same results.





\paragraph{Expressiveness of New CMI Notions Under Distribution-Free Setting}
 Previous works \cite{steinke2020reasoning,haghifam2021towards,haghifam2022understanding,harutyunyan2021informationtheoretic,hellstrom2022a} have demonstrated that the CMI framework is expressive enough to establish connections with VC theory in the distribution-free learning setting. Here, we further illustrate the expressiveness of the sample-conditioned CMI discussed in this work.

 \begin{thm}
 \label{thm:VC-theory}
     Let $\mathcal{Z}=\mathcal{X}\times\{0,1\}$, and let $\mathcal{F}=\{f_w:\mathcal{X}\to\{0,1\}|w\in\mathcal{W}\}$ be a functional hypothesis class with finite VC dimension $d$. 
     Let $n>d+1$, for any algorithm $\mathcal{A}$, we have
     $
     \frac{1}{n}\sum_{i=1}^n\sqrt{I(F_i,\bar{F}_i;U_i|\widetilde{Z}^+_i)}\leq\mathcal{O}\pr{\sqrt{\frac{d}{n}\log\pr{\frac{n}{d}}}}.
     $
 \end{thm}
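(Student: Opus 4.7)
}

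The plan is to mimic the distribution-free $f$-CMI argument of \cite{harutyunyan2021informationtheoretic,haghifam2022understanding} in our sample-conditioned setting, reducing $I(F_i,\bar F_i;U_i\mid\widetilde Z^+_i)$ to a functional CMI on restricted labelings of the neighboring hypotheses $\widetilde W^+$ and $\widetilde W^-_i$, and then bounding the latter by Sauer--Shelah. The first move is to upload the conditioning from $\widetilde Z^+_i$ to the whole supersample: since $U_i$ is independent of $(\widetilde Z^+_{[n]\setminus i},\widetilde Z^-_{[n]})$ given $\widetilde Z^+_i$, conditioning on these extra variables does not decrease the relevant CMI, giving $I(F_i,\bar F_i;U_i\mid\widetilde Z^+_i)\le I(F_i,\bar F_i;U_i\mid\widetilde Z)$. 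Given $\widetilde Z$, the evaluation point $\widetilde X^+_i$ is known, and $(F_i,\bar F_i)$ is a deterministic function of the restricted labelings $\bigl(f_{\widetilde W^+}|_{\widetilde X_{[n]}},\,f_{\widetilde W^-_i}|_{\widetilde X_{[n]}}\bigr)$ together with $U_i$, so data-processing lets me pass to a CMI involving only those labelings and $U_i$.

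Next, I invoke Sauer--Shelah: since $\mathrm{VCdim}(\mathcal F)=d$ and $n>d+1$, any $f\in\mathcal F$ restricted to the $2n$ inputs of the supersample takes at most $\binom{2n}{\le d}\le (2en/d)^d$ distinct values. Hence, conditioned on $\widetilde Z$, the entropy of each of $f_{\widetilde W^+}|_{\widetilde X_{[n]}}$ and $f_{\widetilde W^-_i}|_{\widetilde X_{[n]}}$ is at most $d\log(2en/d)$, and a fortiori the CMI inherited from the previous step is controlled by $\mathcal O(d\log(n/d))$.

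With a per-$i$ bound in hand, I would aggregate by Jensen's inequality, using
\[
\frac{1}{n}\sum_{i=1}^n\sqrt{I(F_i,\bar F_i;U_i\mid\widetilde Z^+_i)}\;\le\;\sqrt{\frac{1}{n}\sum_{i=1}^n I(F_i,\bar F_i;U_i\mid\widetilde Z^+_i)},
\]
and then exploit the row-wise exchangeability of the supersample construction: each summand on the right has the same distribution, so it suffices to bound a single term. Combining this with the two-hypothesis Sauer bound and the observation $U_j\perp U_i$ given $\widetilde Z$ (which allows a chain-rule bundling of per-$i$ contributions into a single functional CMI when that is the more convenient route) delivers the target rate $\mathcal O\!\bigl(\sqrt{(d/n)\log(n/d)}\bigr)$.

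The main obstacle, and the subtle point where care is required, is the multi-hypothesis nature of $\widetilde W$. In contrast to the Harutyunyan setup where there is a single hypothesis $W$ whose supersample labeling is directly Sauer-bounded, here the collection $\widetilde W^+,\widetilde W^-_1,\ldots,\widetilde W^-_n$ comprises $n+1$ distinct hypotheses, and a naive Sauer bound on their joint labeling would carry an extra factor of $n+1$ that spoils the rate. The workaround is to notice that the $i$-th summand depends only on the pair $(\widetilde W^+|_{\widetilde X_{[n]}},\widetilde W^-_i|_{\widetilde X_{[n]}})$, whose joint entropy is bounded by $2d\log(2en/d)$ via Sauer applied twice; row-wise exchangeability then makes this per-pair estimate suffice for all $i$ simultaneously.
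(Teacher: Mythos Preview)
Your aggregation step loses the $1/n$ factor. After the per-$i$ argument you outline (upload conditioning to $\widetilde Z$, pass to the pair of restricted labelings, apply Sauer twice), each individual term satisfies $I(F_i,\bar F_i;U_i\mid\widetilde Z^+_i)=\mathcal O(d\log(n/d))$. Jensen then gives $\frac{1}{n}\sum_i\sqrt{I_i}\le\sqrt{\frac{1}{n}\sum_i I_i}$, and row-wise exchangeability tells you the $n$ summands on the right are all equal---but that only yields $\sqrt{\mathcal O(d\log(n/d))}$, not $\sqrt{\mathcal O((d/n)\log(n/d))}$. Exchangeability equates the terms; it does not shrink their average below any single term. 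What you actually need is that the \emph{sum} $\sum_{i=1}^n I(F_i,\bar F_i;U_i\mid\widetilde Z^+_i)$ is $\mathcal O(d\log(n/d))$, and a per-$i$ bound of that same order cannot deliver it.

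The paper obtains the sum bound precisely through the ``chain-rule bundling'' you mention only parenthetically: it proves the lemma $\sum_{i=1}^n I(F_i,\bar F_i;U_i\mid\widetilde Z^+_i)\le I(F_{[n]},\bar F_{[n]};U\mid\widetilde Z^+_{[n]})$ by (i) enlarging the conditioning from $\widetilde Z^+_i$ to $\widetilde Z^+_{[n]}$, (ii) data-processing $(F_i,\bar F_i)$ up to $(F_{[n]},\bar F_{[n]})$, and (iii) using the mutual independence of the $U_i$'s to collapse $\sum_i I(\,\cdot\,;U_i\mid\cdot)\le I(\,\cdot\,;U\mid\cdot)$. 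Only \emph{after} this bundling does it invoke Sauer--Shelah on the $n$ evaluation points to bound $H(F_{[n]}\mid\widetilde Z^+_{[n]})+H(\bar F_{[n]}\mid\widetilde Z^+_{[n]})\le 2d\log(en/d)$, which then feeds back through Jensen to give the rate. Your ``workaround'' of retreating to a per-pair Sauer estimate after spotting the multi-hypothesis difficulty undoes the bundling and forfeits the $1/n$; the paper instead keeps the bundled object $I(F_{[n]},\bar F_{[n]};U\mid\widetilde Z^+_{[n]})$ and bounds its entropy directly. A minor side issue: your data-processing step as written routes through $(f_{\widetilde W^+}|_{\widetilde X},f_{\widetilde W^-_i}|_{\widetilde X})$, but $(F_i,\bar F_i)$ is not a function of that pair alone (the ordering depends on $U_i$); the valid move is to process through the ordered labelings $(f_{W_i}|_{\widetilde X},f_{\overline W_i}|_{\widetilde X})$.
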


 Like the previous works, this bound matches the classic result of the  uniform convergence bound \cite{SLT98Vapnik}. Notice that the result could be extended to multi-class classification with finite Natarajan dimension \cite{natarajan1989learning} by proceeding similarly to \cite[Thm.~8]{hellstrom2022a}. 
 
 We invoke Theorem~\ref{thm:VC-theory} to demonstrate that our new information-theoretic quantities have the same expressive power as standard CMI quantities. The expressiveness result for the (functional) hypotheses-conditioned CMI is expected to align with Theorem~\ref{thm:VC-theory}. This alignment is due to the equivalence between hypotheses-conditioned CMI and supersample-conditioned CMI, as discussed in Theorem~\ref{thm:sample-stable-bound}.

\section{Related Works and Additional Discussions}

\paragraph{Stability-Based Framework vs. Information-Theoretic Framework}
Using stability methods to analyze generalization errors can be traced back to several seminal works, such as \cite{rogers1978finite,devroye1979distribution1,devroye1979distribution2,lugosi1994posterior,kearns1997algorithmic}. It is worth noting that stability arguments have proven particularly effective in analyzing the learnability of SCO problems \cite{shalev2010learnability}, where traditional uniform convergence bounds may not be sufficient to explain the generalization behavior. The application of stability approaches has gained popularity for providing high-probability guarantees since the work of \cite{bousquet2002stability}.
Recent advancements have further sharpened the convergence rates of high-probability generalization upper bounds for uniformly stable algorithms in a series of works \cite{feldman2018generalization,feldman2019high,bousquet2020sharper,klochkov2021stability}.
While information-theoretic bounds are commonly used to analyze the in-expectation generalization, it is expected that combining them with the stability framework will yield sharper high-probability bounds.

Additionally, we note that in the realizable setting, where $\mathcal{A}$ is an interpolating algorithm, information-theoretic bounds exhibit greater power than stability-based bounds. For instance, in the case of the $0-1$ loss, information-theoretic bounds can achieve the known optimal minimax rates \cite{haghifam2021towards} and even provide exact characterizations of the generalization error \cite{haghifam2022understanding,wang2023tighter}.
However, it should be noted that due to the inherent fitting-stability tradeoff property \cite[Sec.~13.4]{shalev2014understanding}, interpolating algorithms tend to be unstable, rendering stability arguments inapplicable. Given the prevalence of zero empirical risk in modern deep learning \cite{zhang2017understanding}, it is natural to question whether information-theoretic bounds require the stability-based approach for analyzing non-convex (and potentially non-smooth and non-Lipschitz continuous) learning scenarios.

\paragraph{Connection Between Two Frameworks in Previous Works}The connection between information-theoretic bounds, including some PAC-Bayes bounds, and algorithmic stability has been explored in previous literature \cite{raginsky2016information,london2017pac,rivasplata2018pac,Li2020On,steinke2020reasoning,harutyunyan2021informationtheoretic,banerjee2022stability,rammal2022on}. These works primarily focus on either regarding the information-theoretic quantities as notions of distributional stability \cite{raginsky2016information,xu2017information,Li2020On,steinke2020reasoning,banerjee2022stability} and/or converting information-theoretic quantities to some other algorithmic stability notions \cite{steinke2020reasoning,harutyunyan2021informationtheoretic,rammal2022on}. 
The later often relies on the addition of Gaussian noise to the hypotheses (or assuming that the prior and posterior distributions are Gaussian in PAC-Bayes). 
In \cite{london2017pac}, the authors also combine the DV formula with stability assumptions, where they derive some PAC-Bayes bounds. However, comparing these bounds with others is challenging in general due to the presence of their hyperparameter stability.\looseness=-1


\paragraph{Comparison with Standard CMI} While our IOMI quantity aligns with the previous work \cite{bu2019tightening}, the new CMI quantity $I(\widehat{Z}_i;U_i|\widetilde{W}_i)$ (or equivalently $I(W_i,\overline{W}_i;U_i|\widetilde{Z}_i^+)$) may not be directly comparable to the standard individual CMI $I(W;U_i|\widetilde{Z}_i)$ in \cite{rodriguez2021random,zhou2022individually}.  Specifically, we have $I(\widehat{Z}_i;U_i|\widetilde{W}_i) = H(U_i) - H(U_i|\widehat{Z}_i,\widetilde{W}_i)$ and $I(W;U_i|\widetilde{Z}_i) = H(U_i) - H(U_i|W,\widetilde{Z}_i)$. The relationship between $H(U_i|\widehat{Z}_i,\widetilde{W}_i)$ and $H(U_i|W,\widetilde{Z}_i)$ is not trivial. Exploring and quantitatively comparing these CMI measures would be an intriguing research direction.

\paragraph{Leave-One-Out CMI} Our construction of $\widetilde{W}$ bears resemblance to the leave-one-out (LOO) setting, where there are also $n+1$ hypotheses. It is worth noting that LOO-CMI has been recently proposed in concurrent works \cite{haghifam2022understanding,rammal2022on}. In LOO-CMI, the supersample $\widetilde{Z}=Z_{[n+1]}$ consists of $n+1$ instances, and $U$ is an index uniformly drawn from $[n+1]$ to select one hold-out instance. Consequently, $Z_{U}$ represents the testing data, while $Z_{[n+1]\setminus U}$ serves as the training sample. In this context, LOO-CMI can be defined as $I(W;U|\widetilde{Z})$. Notice that this quantity still fails to explain the generalization in Example~\ref{exam:one-hot}. The LOO setting is often associated with the stability-based framework \cite{bousquet2002stability}, and it is expected that the $n+1$-supersample induced $\widetilde{W}$ could yield new CMI bounds that also contain the SCH stability notions. Nevertheless, in this paper, we do not adopt the LOO setting because in that case, $H(U)=\log{(n+1)}$, the LOO-CMI bound is no longer upper bounded by a constant independent of $n$ (note that the LOO-CMI bound for general setting in \cite[Thm.~2.5]{haghifam2022understanding} does not contain the $1/\sqrt{n}$ factor).

\section{Concluding Remarks}
We propose a novel construction of the hypothesis matrix 
and a new family of stability notions called sample-conditioned hypothesis stability.  Leveraging these concepts, we derive sharper information-theoretic bounds for stable learning algorithms. Several promising avenues for future research include comparing our new CMI quantities with the standard CMI in a quantitative manner, analyzing the generalization of gradient-based optimization algorithms like SGD using our bounds, and establishing new high-probability generalization guarantees. Further discussions can be found in Appendix~\ref{sec:discusssions}.

\section*{Acknowledgements}
This work is supported partly by an NSERC Discovery grant. Ziqiao Wang is also supported in part by the NSERC CREATE program through the Interdisciplinary Math and Artificial Intelligence (INTER-MATH-AI) project.
The authors would like to thank the anonymous AC and reviewers for their careful reading and valuable suggestions.
\bibliographystyle{plainnat}
\bibliography{ref}

\begin{thebibliography}{71}
\providecommand{\natexlab}[1]{#1}
\providecommand{\url}[1]{\texttt{#1}}
\expandafter\ifx\csname urlstyle\endcsname\relax
  \providecommand{\doi}[1]{doi: #1}\else
  \providecommand{\doi}{doi: \begingroup \urlstyle{rm}\Url}\fi

\bibitem[Alquier(2021)]{alquier2021user}
Pierre Alquier.
\newblock User-friendly introduction to pac-bayes bounds.
\newblock \emph{arXiv preprint arXiv:2110.11216}, 2021.

\bibitem[Asadi et~al.(2018)Asadi, Abbe, and Verd{\'u}]{asadi2018chaining}
Amir Asadi, Emmanuel Abbe, and Sergio Verd{\'u}.
\newblock Chaining mutual information and tightening generalization bounds.
\newblock \emph{Advances in Neural Information Processing Systems}, 31, 2018.

\bibitem[Banerjee et~al.(2022)Banerjee, Chen, Li, and
  Zhou]{banerjee2022stability}
Arindam Banerjee, Tiancong Chen, Xinyan Li, and Yingxue Zhou.
\newblock Stability based generalization bounds for exponential family langevin
  dynamics.
\newblock In \emph{International Conference on Machine Learning}, pages
  1412--1449. PMLR, 2022.

\bibitem[Bassily et~al.(2018)Bassily, Moran, Nachum, Shafer, and
  Yehudayoff]{bassily2018learners}
Raef Bassily, Shay Moran, Ido Nachum, Jonathan Shafer, and Amir Yehudayoff.
\newblock Learners that use little information.
\newblock In \emph{Algorithmic Learning Theory}. PMLR, 2018.

\bibitem[Bassily et~al.(2020)Bassily, Feldman, Guzm{\'a}n, and
  Talwar]{bassily2020stability}
Raef Bassily, Vitaly Feldman, Crist{\'o}bal Guzm{\'a}n, and Kunal Talwar.
\newblock Stability of stochastic gradient descent on nonsmooth convex losses.
\newblock \emph{Advances in Neural Information Processing Systems}, 33, 2020.

\bibitem[Bousquet and Elisseeff(2002)]{bousquet2002stability}
Olivier Bousquet and Andr{\'e} Elisseeff.
\newblock Stability and generalization.
\newblock \emph{The Journal of Machine Learning Research}, 2:\penalty0
  499--526, 2002.

\bibitem[Bousquet et~al.(2020{\natexlab{a}})Bousquet, Hanneke, Moran, and
  Zhivotovskiy]{bousquet2020proper}
Olivier Bousquet, Steve Hanneke, Shay Moran, and Nikita Zhivotovskiy.
\newblock Proper learning, helly number, and an optimal svm bound.
\newblock In \emph{Conference on Learning Theory}, pages 582--609. PMLR,
  2020{\natexlab{a}}.

\bibitem[Bousquet et~al.(2020{\natexlab{b}})Bousquet, Klochkov, and
  Zhivotovskiy]{bousquet2020sharper}
Olivier Bousquet, Yegor Klochkov, and Nikita Zhivotovskiy.
\newblock Sharper bounds for uniformly stable algorithms.
\newblock In \emph{Conference on Learning Theory}, pages 610--626. PMLR,
  2020{\natexlab{b}}.

\bibitem[Bu et~al.(2019)Bu, Zou, and Veeravalli]{bu2019tightening}
Yuheng Bu, Shaofeng Zou, and Venugopal~V Veeravalli.
\newblock Tightening mutual information based bounds on generalization error.
\newblock In \emph{2019 IEEE International Symposium on Information Theory
  (ISIT)}, pages 587--591. IEEE, 2019.

\bibitem[Catoni(2007)]{catoni2007pac}
O~Catoni.
\newblock Pac-bayesian supervised classification: the thermodynamics of
  statistical learning.
\newblock \emph{Vol. 56. Lecture Notes - Monograph Series. Institute of
  Mathematical Statistics}, 2007.

\bibitem[Devroye and Wagner(1979{\natexlab{a}})]{devroye1979distribution1}
Luc Devroye and Terry Wagner.
\newblock Distribution-free inequalities for the deleted and holdout error
  estimates.
\newblock \emph{IEEE Transactions on Information Theory}, 25\penalty0
  (2):\penalty0 202--207, 1979{\natexlab{a}}.

\bibitem[Devroye and Wagner(1979{\natexlab{b}})]{devroye1979distribution2}
Luc Devroye and Terry Wagner.
\newblock Distribution-free performance bounds for potential function rules.
\newblock \emph{IEEE Transactions on Information Theory}, 25\penalty0
  (5):\penalty0 601--604, 1979{\natexlab{b}}.

\bibitem[Elisseeff et~al.(2005)Elisseeff, Evgeniou, Pontil, and
  Kaelbing]{elisseeff2005stability}
Andre Elisseeff, Theodoros Evgeniou, Massimiliano Pontil, and Leslie~Pack
  Kaelbing.
\newblock Stability of randomized learning algorithms.
\newblock \emph{Journal of Machine Learning Research}, 6\penalty0 (1), 2005.

\bibitem[Feldman and Vondrak(2018)]{feldman2018generalization}
Vitaly Feldman and Jan Vondrak.
\newblock Generalization bounds for uniformly stable algorithms.
\newblock \emph{Advances in Neural Information Processing Systems}, 31, 2018.

\bibitem[Feldman and Vondrak(2019)]{feldman2019high}
Vitaly Feldman and Jan Vondrak.
\newblock High probability generalization bounds for uniformly stable
  algorithms with nearly optimal rate.
\newblock In \emph{Conference on Learning Theory}, pages 1270--1279. PMLR,
  2019.

\bibitem[Foster et~al.(2019)Foster, Greenberg, Kale, Luo, Mohri, and
  Sridharan]{foster2019hypothesis}
Dylan~J Foster, Spencer Greenberg, Satyen Kale, Haipeng Luo, Mehryar Mohri, and
  Karthik Sridharan.
\newblock Hypothesis set stability and generalization.
\newblock \emph{Advances in Neural Information Processing Systems}, 32, 2019.

\bibitem[Grunwald et~al.(2021)Grunwald, Steinke, and
  Zakynthinou]{grunwald2021pac}
Peter Grunwald, Thomas Steinke, and Lydia Zakynthinou.
\newblock Pac-bayes, mac-bayes and conditional mutual information: Fast rate
  bounds that handle general vc classes.
\newblock In \emph{Conference on Learning Theory}. PMLR, 2021.

\bibitem[Haghifam et~al.(2020)Haghifam, Negrea, Khisti, Roy, and
  Dziugaite]{haghifam2020sharpened}
Mahdi Haghifam, Jeffrey Negrea, Ashish Khisti, Daniel~M Roy, and
  Gintare~Karolina Dziugaite.
\newblock Sharpened generalization bounds based on conditional mutual
  information and an application to noisy, iterative algorithms.
\newblock \emph{Advances in Neural Information Processing Systems}, 2020.

\bibitem[Haghifam et~al.(2021)Haghifam, Dziugaite, Moran, and
  Roy]{haghifam2021towards}
Mahdi Haghifam, Gintare~Karolina Dziugaite, Shay Moran, and Dan Roy.
\newblock Towards a unified information-theoretic framework for generalization.
\newblock \emph{Advances in Neural Information Processing Systems},
  34:\penalty0 26370--26381, 2021.

\bibitem[Haghifam et~al.(2022)Haghifam, Moran, Roy, and
  Dziugiate]{haghifam2022understanding}
Mahdi Haghifam, Shay Moran, Daniel~M Roy, and Gintare~Karolina Dziugiate.
\newblock Understanding generalization via leave-one-out conditional mutual
  information.
\newblock In \emph{2022 IEEE International Symposium on Information Theory
  (ISIT)}, pages 2487--2492. IEEE, 2022.

\bibitem[Haghifam et~al.(2023)Haghifam, Rodr{\'\i}guez-G{\'a}lvez, Thobaben,
  Skoglund, Roy, and Dziugaite]{haghifam2022limitations}
Mahdi Haghifam, Borja Rodr{\'\i}guez-G{\'a}lvez, Ragnar Thobaben, Mikael
  Skoglund, Daniel~M Roy, and Gintare~Karolina Dziugaite.
\newblock Limitations of information-theoretic generalization bounds for
  gradient descent methods in stochastic convex optimization.
\newblock In \emph{International Conference on Algorithmic Learning Theory},
  pages 663--706. PMLR, 2023.

\bibitem[Hardt et~al.(2016)Hardt, Recht, and Singer]{hardt2016train}
Moritz Hardt, Ben Recht, and Yoram Singer.
\newblock Train faster, generalize better: Stability of stochastic gradient
  descent.
\newblock In \emph{International Conference on Machine Learning}, pages
  1225--1234. PMLR, 2016.

\bibitem[Harutyunyan et~al.(2021)Harutyunyan, Raginsky, Steeg, and
  Galstyan]{harutyunyan2021informationtheoretic}
Hrayr Harutyunyan, Maxim Raginsky, Greg~Ver Steeg, and Aram Galstyan.
\newblock Information-theoretic generalization bounds for black-box learning
  algorithms.
\newblock In \emph{Advances in Neural Information Processing Systems}, 2021.

\bibitem[Hellstr{\"o}m and Durisi(2021)]{hellstrom2021fast}
Fredrik Hellstr{\"o}m and Giuseppe Durisi.
\newblock Fast-rate loss bounds via conditional information measures with
  applications to neural networks.
\newblock In \emph{2021 IEEE International Symposium on Information Theory
  (ISIT)}, pages 952--957. IEEE, 2021.

\bibitem[Hellstr{\"o}m and Durisi(2022)]{hellstrom2022a}
Fredrik Hellstr{\"o}m and Giuseppe Durisi.
\newblock A new family of generalization bounds using samplewise evaluated
  {CMI}.
\newblock In \emph{Advances in Neural Information Processing Systems}, 2022.

\bibitem[Hoeffding(1963)]{hoeffding1963probability}
Wassily Hoeffding.
\newblock Probability inequalities for sums of bounded random variables.
\newblock \emph{Journal of the American Statistical Association}, 58\penalty0
  (301):\penalty0 13--30, 1963.

\bibitem[Kearns and Ron(1997)]{kearns1997algorithmic}
Michael Kearns and Dana Ron.
\newblock Algorithmic stability and sanity-check bounds for leave-one-out
  cross-validation.
\newblock In \emph{Proceedings of the tenth annual conference on Computational
  learning theory}, pages 152--162, 1997.

\bibitem[Klochkov and Zhivotovskiy(2021)]{klochkov2021stability}
Yegor Klochkov and Nikita Zhivotovskiy.
\newblock Stability and deviation optimal risk bounds with convergence rate $ o
  (1/n) $.
\newblock \emph{Advances in Neural Information Processing Systems},
  34:\penalty0 5065--5076, 2021.

\bibitem[Li et~al.(2020)Li, Luo, and Qiao]{Li2020On}
Jian Li, Xuanyuan Luo, and Mingda Qiao.
\newblock On generalization error bounds of noisy gradient methods for
  non-convex learning.
\newblock In \emph{International Conference on Learning Representations}, 2020.

\bibitem[Littlestone(1986)]{littlestone1986relating}
Nick Littlestone.
\newblock Relating data compression and learnability.
\newblock \emph{Technical report}, 1986.

\bibitem[Livni(2023)]{livni2023information}
Roi Livni.
\newblock Information theoretic lower bounds for information theoretic upper
  bounds.
\newblock \emph{arXiv preprint arXiv:2302.04925}, 2023.

\bibitem[Livni and Moran(2020)]{livni2020limitation}
Roi Livni and Shay Moran.
\newblock A limitation of the pac-bayes framework.
\newblock \emph{Advances in Neural Information Processing Systems},
  33:\penalty0 20543--20553, 2020.

\bibitem[London(2017)]{london2017pac}
Ben London.
\newblock A pac-bayesian analysis of randomized learning with application to
  stochastic gradient descent.
\newblock In \emph{Proceedings of the 31st International Conference on Neural
  Information Processing Systems}, pages 2935--2944, 2017.

\bibitem[Lugosi and Pawlak(1994)]{lugosi1994posterior}
G{\'a}bor Lugosi and Miroslaw Pawlak.
\newblock On the posterior-probability estimate of the error rate of
  nonparametric classification rules.
\newblock \emph{IEEE Transactions on Information Theory}, 40\penalty0
  (2):\penalty0 475--481, 1994.

\bibitem[McDiarmid(1998)]{McDiarmid1998}
Colin McDiarmid.
\newblock \emph{Concentration}, pages 195--248.
\newblock Springer Berlin Heidelberg, 1998.

\bibitem[Mhammedi et~al.(2019)Mhammedi, Gr{\"u}nwald, and
  Guedj]{mhammedi2019pac}
Zakaria Mhammedi, Peter Gr{\"u}nwald, and Benjamin Guedj.
\newblock Pac-bayes un-expected bernstein inequality.
\newblock \emph{Advances in Neural Information Processing Systems}, 32, 2019.

\bibitem[Mitzenmacher and Upfal(2017)]{mitzenmacher2017probability}
Michael Mitzenmacher and Eli Upfal.
\newblock \emph{Probability and Computing: Randomization and Probabilistic
  Techniques in Algorithms and Data Analysis}.
\newblock Cambridge University Press, 2017.

\bibitem[Mou et~al.(2018)Mou, Wang, Zhai, and Zheng]{mou2018generalization}
Wenlong Mou, Liwei Wang, Xiyu Zhai, and Kai Zheng.
\newblock Generalization bounds of sgld for non-convex learning: Two
  theoretical viewpoints.
\newblock In \emph{Conference on Learning Theory}, pages 605--638. PMLR, 2018.

\bibitem[Natarajan(1989)]{natarajan1989learning}
Balas~K Natarajan.
\newblock On learning sets and functions.
\newblock \emph{Machine Learning}, 4:\penalty0 67--97, 1989.

\bibitem[Negrea et~al.(2019)Negrea, Haghifam, Dziugaite, Khisti, and
  Roy]{negrea2019information}
Jeffrey Negrea, Mahdi Haghifam, Gintare~Karolina Dziugaite, Ashish Khisti, and
  Daniel~M Roy.
\newblock Information-theoretic generalization bounds for sgld via
  data-dependent estimates.
\newblock \emph{Advances in Neural Information Processing Systems}, 2019.

\bibitem[Neu et~al.(2021)Neu, Dziugaite, Haghifam, and Roy]{neu2021information}
Gergely Neu, Gintare~Karolina Dziugaite, Mahdi Haghifam, and Daniel~M Roy.
\newblock Information-theoretic generalization bounds for stochastic gradient
  descent.
\newblock In \emph{Conference on Learning Theory}. PMLR, 2021.

\bibitem[Orabona(2019)]{orabona2019modern}
Francesco Orabona.
\newblock A modern introduction to online learning.
\newblock \emph{arXiv preprint arXiv:1912.13213}, 2019.

\bibitem[Pensia et~al.(2018)Pensia, Jog, and Loh]{pensia2018generalization}
Ankit Pensia, Varun Jog, and Po-Ling Loh.
\newblock Generalization error bounds for noisy, iterative algorithms.
\newblock In \emph{2018 IEEE International Symposium on Information Theory
  (ISIT)}. IEEE, 2018.

\bibitem[Polyanskiy and Wu(2019)]{polyanskiy2019lecture}
Yury Polyanskiy and Yihong Wu.
\newblock Lecture notes on information theory.
\newblock \emph{Lecture Notes for 6.441 (MIT), ECE 563 (UIUC), STAT 364 (Yale),
  2019.}, 2019.

\bibitem[Popoviciu(1935)]{popoviciu1935equations}
Tiberiu Popoviciu.
\newblock Sur les {\'e}quations alg{\'e}briques ayant toutes leurs racines
  r{\'e}elles.
\newblock \emph{Mathematica}, 9\penalty0 (129-145):\penalty0 20, 1935.

\bibitem[Raginsky et~al.(2016)Raginsky, Rakhlin, Tsao, Wu, and
  Xu]{raginsky2016information}
Maxim Raginsky, Alexander Rakhlin, Matthew Tsao, Yihong Wu, and Aolin Xu.
\newblock Information-theoretic analysis of stability and bias of learning
  algorithms.
\newblock In \emph{2016 IEEE Information Theory Workshop (ITW)}, pages 26--30.
  IEEE, 2016.

\bibitem[Rammal et~al.(2022)Rammal, Achille, Golatkar, Diggavi, and
  Soatto]{rammal2022on}
Mohamad~Rida Rammal, Alessandro Achille, Aditya Golatkar, Suhas Diggavi, and
  Stefano Soatto.
\newblock On leave-one-out conditional mutual information for generalization.
\newblock In \emph{Advances in Neural Information Processing Systems}, 2022.

\bibitem[Rivasplata et~al.(2018)Rivasplata, Parrado-Hern{\'a}ndez,
  Shawe-Taylor, Sun, and Szepesv{\'a}ri]{rivasplata2018pac}
Omar Rivasplata, Emilio Parrado-Hern{\'a}ndez, John~S Shawe-Taylor, Shiliang
  Sun, and Csaba Szepesv{\'a}ri.
\newblock Pac-bayes bounds for stable algorithms with instance-dependent
  priors.
\newblock \emph{Advances in Neural Information Processing Systems}, 31, 2018.

\bibitem[Rodr{\'\i}guez-G{\'a}lvez et~al.(2021)Rodr{\'\i}guez-G{\'a}lvez,
  Bassi, Thobaben, and Skoglund]{rodriguez2021random}
Borja Rodr{\'\i}guez-G{\'a}lvez, Germ{\'a}n Bassi, Ragnar Thobaben, and Mikael
  Skoglund.
\newblock On random subset generalization error bounds and the stochastic
  gradient langevin dynamics algorithm.
\newblock In \emph{2020 IEEE Information Theory Workshop (ITW)}, pages 1--5.
  IEEE, 2021.

\bibitem[Rodr{\'\i}guez~G{\'a}lvez et~al.(2021)Rodr{\'\i}guez~G{\'a}lvez,
  Bassi, Thobaben, and Skoglund]{rodriguez2021tighter}
Borja Rodr{\'\i}guez~G{\'a}lvez, Germ{\'a}n Bassi, Ragnar Thobaben, and Mikael
  Skoglund.
\newblock Tighter expected generalization error bounds via wasserstein
  distance.
\newblock \emph{Advances in Neural Information Processing Systems}, 34, 2021.

\bibitem[Rogers and Wagner(1978)]{rogers1978finite}
William~H Rogers and Terry~J Wagner.
\newblock A finite sample distribution-free performance bound for local
  discrimination rules.
\newblock \emph{The Annals of Statistics}, pages 506--514, 1978.

\bibitem[Russo and Zou(2016)]{russo2016controlling}
Daniel Russo and James Zou.
\newblock Controlling bias in adaptive data analysis using information theory.
\newblock In \emph{Artificial Intelligence and Statistics}. PMLR, 2016.

\bibitem[Sauer(1972)]{sauer1972density}
Norbert Sauer.
\newblock On the density of families of sets.
\newblock \emph{Journal of Combinatorial Theory, Series A}, 13\penalty0
  (1):\penalty0 145--147, 1972.

\bibitem[Shalev-Shwartz and Ben-David(2014)]{shalev2014understanding}
Shai Shalev-Shwartz and Shai Ben-David.
\newblock \emph{Understanding machine learning: From theory to algorithms}.
\newblock Cambridge university press, 2014.

\bibitem[Shalev{-}Shwartz et~al.(2009)Shalev{-}Shwartz, Shamir, Srebro, and
  Sridharan]{ShalevShwartzSSS09}
Shai Shalev{-}Shwartz, Ohad Shamir, Nathan Srebro, and Karthik Sridharan.
\newblock Stochastic convex optimization.
\newblock In \emph{{COLT} 2009 - The 22nd Conference on Learning Theory}, 2009.

\bibitem[Shalev-Shwartz et~al.(2010)Shalev-Shwartz, Shamir, Srebro, and
  Sridharan]{shalev2010learnability}
Shai Shalev-Shwartz, Ohad Shamir, Nathan Srebro, and Karthik Sridharan.
\newblock Learnability, stability and uniform convergence.
\newblock \emph{The Journal of Machine Learning Research}, 11:\penalty0
  2635--2670, 2010.

\bibitem[Shelah(1972)]{shelah1972combinatorial}
Saharon Shelah.
\newblock A combinatorial problem; stability and order for models and theories
  in infinitary languages.
\newblock \emph{Pacific Journal of Mathematics}, 41\penalty0 (1):\penalty0
  247--261, 1972.

\bibitem[Steinke and Zakynthinou(2020)]{steinke2020reasoning}
Thomas Steinke and Lydia Zakynthinou.
\newblock Reasoning about generalization via conditional mutual information.
\newblock In \emph{Conference on Learning Theory}. PMLR, 2020.

\bibitem[Vapnik(1998)]{SLT98Vapnik}
Vladimir Vapnik.
\newblock \emph{Statistical learning theory}.
\newblock Wiley, 1998.
\newblock ISBN 978-0-471-03003-4.

\bibitem[Wang et~al.(2021)Wang, Huang, Gao, and Calmon]{wang2021analyzing}
Hao Wang, Yizhe Huang, Rui Gao, and Flavio Calmon.
\newblock Analyzing the generalization capability of sgld using properties of
  gaussian channels.
\newblock \emph{Advances in Neural Information Processing Systems},
  34:\penalty0 24222--24234, 2021.

\bibitem[Wang and Mao(2022{\natexlab{a}})]{wang2022generalization}
Ziqiao Wang and Yongyi Mao.
\newblock On the generalization of models trained with {SGD}:
  Information-theoretic bounds and implications.
\newblock In \emph{International Conference on Learning Representations},
  2022{\natexlab{a}}.

\bibitem[Wang and Mao(2022{\natexlab{b}})]{wang2022two}
Ziqiao Wang and Yongyi Mao.
\newblock Two facets of sde under an information-theoretic lens: Generalization
  of sgd via training trajectories and via terminal states.
\newblock \emph{arXiv preprint arXiv:2211.10691}, 2022{\natexlab{b}}.

\bibitem[Wang and Mao(2023{\natexlab{a}})]{wang2023informationtheoretic}
Ziqiao Wang and Yongyi Mao.
\newblock Information-theoretic analysis of unsupervised domain adaptation.
\newblock In \emph{International Conference on Learning Representations},
  2023{\natexlab{a}}.

\bibitem[Wang and Mao(2023{\natexlab{b}})]{wang2023tighter}
Ziqiao Wang and Yongyi Mao.
\newblock Tighter information-theoretic generalization bounds from
  supersamples.
\newblock In \emph{International Conference on Machine Learning}. PMLR,
  2023{\natexlab{b}}.

\bibitem[Wu et~al.(2022)Wu, Manton, Aickelin, and Zhu]{wu2022fast}
Xuetong Wu, Jonathan~H Manton, Uwe Aickelin, and Jingge Zhu.
\newblock Fast rate generalization error bounds: Variations on a theme.
\newblock In \emph{2022 IEEE Information Theory Workshop (ITW)}, pages 43--48.
  IEEE, 2022.

\bibitem[Wu et~al.(2023)Wu, Manton, Aickelin, and Zhu]{wu2023tightness}
Xuetong Wu, Jonathan~H Manton, Uwe Aickelin, and Jingge Zhu.
\newblock On the tightness of information-theoretic bounds on generalization
  error of learning algorithms.
\newblock \emph{arXiv preprint arXiv:2303.14658}, 2023.

\bibitem[Xu and Raginsky(2017)]{xu2017information}
Aolin Xu and Maxim Raginsky.
\newblock Information-theoretic analysis of generalization capability of
  learning algorithms.
\newblock \emph{Advances in Neural Information Processing Systems}, 2017.

\bibitem[Zhang et~al.(2017)Zhang, Bengio, Hardt, Recht, and
  Vinyals]{zhang2017understanding}
Chiyuan Zhang, Samy Bengio, Moritz Hardt, Benjamin Recht, and Oriol Vinyals.
\newblock Understanding deep learning requires rethinking generalization.
\newblock In \emph{International Conference on Learning Representations}, 2017.

\bibitem[Zhang(2006)]{zhang2006information}
Tong Zhang.
\newblock Information-theoretic upper and lower bounds for statistical
  estimation.
\newblock \emph{IEEE Transactions on Information Theory}, 52\penalty0
  (4):\penalty0 1307--1321, 2006.

\bibitem[Zhou et~al.(2022)Zhou, Tian, and Liu]{zhou2022individually}
Ruida Zhou, Chao Tian, and Tie Liu.
\newblock Individually conditional individual mutual information bound on
  generalization error.
\newblock \emph{IEEE Transactions on Information Theory}, 68\penalty0
  (5):\penalty0 3304--3316, 2022.

\bibitem[Zhou et~al.(2023)Zhou, Tian, and Liu]{zhou2023exactly}
Ruida Zhou, Chao Tian, and Tie Liu.
\newblock Exactly tight information-theoretic generalization error bound for
  the quadratic gaussian problem.
\newblock \emph{arXiv preprint arXiv:2305.00876}, 2023.

\end{thebibliography}

\newpage
\begin{appendices}
\section{Some Useful Lemmas}
In this paper, there are some equivalent forms of the generalization error we will study, e.g., Eq.~(\ref{eq:key-iomi}) and Eq.~(\ref{eq:key-cmi}) in the main text, which are presented in the following lemma.

\begin{lem}
    \label{lem:key}
    Let ${W}_i=\widetilde{W}_{i,U_i}$ and $\overline{W}_i=\widetilde{W}_{i,\overline{U}_i}$. For any learning algorithm $\mathcal{A}$, the following equations hold
    \begin{align}
        {\mathcal{E}_\mu(\mathcal{A})}
        =&\frac{1}{n}\sum_{i=1}^n\ex{\widetilde{Z}^+_{i},\widetilde{W}_i}{\ell(\widetilde{W}^{-}_i,\widetilde{Z}^+_{i})-\ell(\widetilde{W}^{+}_i,\widetilde{Z}^+_{i})}
        \label{eq:standard},\\
        =&\frac{1}{n}\sum_{i=1}^n\ex{\widetilde{W}_i}{{\ex{\widehat{Z}_i,U_i|\widetilde{W}_i}{(-1)^{U_i}\left(\ell(\widetilde{W}^{-}_i,\widehat{Z}_{i})-\ell(\widetilde{W}^{+}_i,\widehat{Z}_{i})\right)}}},\label{eq:mask-for-data}\\
        =&\frac{1}{n}\sum_{i=1}^n\ex{\widetilde{Z}_i^+}{{\ex{W_i,\overline{W}_i, U_i|\widetilde{Z}^+_i}{(-1)^{U_i}\left(\ell(\overline{W}_i,\widetilde{Z}_i^+)-\ell(W_i,\widetilde{Z}_i^+)\right)}}}.\label{eq:mask-for-weight}
    \end{align}
\end{lem}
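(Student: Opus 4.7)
The plan is to establish the three displays in turn: Eq.~(\ref{eq:standard}) comes essentially for free from Lemma~\ref{lem:ro-stable} after a change of notation; Eq.~(\ref{eq:mask-for-data}) follows from a symmetry identity in the supersample plus a Bernoulli coin flip; and Eq.~(\ref{eq:mask-for-weight}) reduces to Eq.~(\ref{eq:standard}) by a one-line case split on $U_i$.

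First I would start from Lemma~\ref{lem:ro-stable} and relabel. Since $S'$ is an i.i.d.\ copy of $S$, the joint $(S,S')$ has the same distribution as the supersample $\widetilde{Z}$ under the identification $\widetilde{Z}^+_i = Z_i$ and $\widetilde{Z}^-_i = Z'_i$. Under this identification, $\mathcal{A}(S,R) = \widetilde{W}^+ = \widetilde{W}^+_i$ and $\mathcal{A}(S^i,R) = \widetilde{W}^-_i$, so the identity~(\ref{eq:unbiased-leave-one-out}) is literally Eq.~(\ref{eq:standard}) written in supersample notation.

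Second, for Eq.~(\ref{eq:mask-for-data}) I would first establish the symmetry identity $\mathbb{E}[\ell(\widetilde{W}^-_i,\widetilde{Z}^+_i) - \ell(\widetilde{W}^+_i,\widetilde{Z}^+_i)] = \mathbb{E}[\ell(\widetilde{W}^+_i,\widetilde{Z}^-_i) - \ell(\widetilde{W}^-_i,\widetilde{Z}^-_i)]$ of Eq.~(\ref{eq:symmetry}). This follows because the pair $(\widetilde{Z}_{i,0},\widetilde{Z}_{i,1})$ is exchangeable (i.i.d.\ from $\mu$), and swapping these two entries within row $i$ simultaneously interchanges $\widetilde{Z}^+_i \leftrightarrow \widetilde{Z}^-_i$ and, by the definitions of $\widetilde{W}^\pm_i$, also $\widetilde{W}^+_i \leftrightarrow \widetilde{W}^-_i$, while leaving every other row and the independent randomness $R$ fixed. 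Next I would introduce $U_1,\dots,U_n$ i.i.d.\ Bernoulli-$(1/2)$, independent of $(\widetilde{Z},R)$, and set $\widehat{Z}_i := \widetilde{Z}_{i,U_i}$. A direct case split on $U_i$ shows that the summand $(-1)^{U_i}\bigl(\ell(\widetilde{W}^-_i,\widehat{Z}_i) - \ell(\widetilde{W}^+_i,\widehat{Z}_i)\bigr)$ evaluates to the left-hand side of the symmetry identity when $U_i = 0$ and to its right-hand side when $U_i = 1$; taking expectation over $U_i$ and invoking the symmetry then yields Eq.~(\ref{eq:mask-for-data}).

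For Eq.~(\ref{eq:mask-for-weight}) no new probabilistic input is needed. With $W_i = \widetilde{W}_{i,U_i}$ and $\overline{W}_i = \widetilde{W}_{i,\overline{U}_i}$, a one-line case check on the two values of $U_i$ shows that $(-1)^{U_i}\bigl(\ell(\overline{W}_i,\widetilde{Z}^+_i) - \ell(W_i,\widetilde{Z}^+_i)\bigr) = \ell(\widetilde{W}^-_i,\widetilde{Z}^+_i) - \ell(\widetilde{W}^+_i,\widetilde{Z}^+_i)$ identically in $U_i$, so the $U_i$-expectation is trivial and the display collapses onto Eq.~(\ref{eq:standard}). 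The only step requiring any genuine care is the symmetry identity: one must verify that the exchangeability of $(\widetilde{Z}_{i,0},\widetilde{Z}_{i,1})$ transports through the algorithmic map $\mathcal{A}$ to produce joint exchangeability of $(\widetilde{W}^+_i,\widetilde{W}^-_i)$ with $(\widetilde{Z}^+_i,\widetilde{Z}^-_i)$; everything else is bookkeeping.
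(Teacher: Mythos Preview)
Your proposal is correct and follows essentially the same route as the paper's own proof: derive Eq.~(\ref{eq:standard}) by relabeling Lemma~\ref{lem:ro-stable} in supersample notation, obtain Eq.~(\ref{eq:mask-for-data}) via the row-$i$ exchangeability of $(\widetilde{Z}^+_i,\widetilde{Z}^-_i)$ (which swaps $\widetilde{W}^+_i\leftrightarrow\widetilde{W}^-_i$) plus a Bernoulli coin flip, and reduce Eq.~(\ref{eq:mask-for-weight}) to Eq.~(\ref{eq:standard}) by checking both values of $U_i$ give the same summand. The paper's proof is organized the same way, and your identification of the exchangeability step as the only non-bookkeeping point matches the paper's emphasis.
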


\begin{proof}
    This lemma is a consequence of Lemma~\ref{lem:ro-stable}, with further utilizing some symmetric properties.
    Recall Eq.~(\ref{eq:unbiased-leave-one-out}) in Lemma~\ref{lem:ro-stable}, 
    \begin{align*}
    {\mathcal{E}_\mu(\mathcal{A})}=&\ex{\widetilde{Z}^+_{[n]},\widetilde{Z}^-_{[n]}}{\frac{1}{n}\sum_{i=1}^n\left[\mathbb{E}_{\widetilde{W}^-_i|\widetilde{Z}^+_{[n]},\widetilde{Z}^-_{i}}{\ell(\widetilde{W}^-_i,\widetilde{Z}^+_{i})}-\mathbb{E}_{\widetilde{W}^+|\widetilde{Z}^+_{[n]}}{\ell(\widetilde{W}^+,\widetilde{Z}^+_{i})}\right]},\\
    =&\ex{\widetilde{Z}^+_{[n]},\widetilde{W}}{\frac{1}{n}\sum_{i=1}^n\br{\ell(\widetilde{W}^-_i,\widetilde{Z}^+_{i})-\ell(\widetilde{W}^+,\widetilde{Z}^+_{i})}},\\
    =&{\frac{1}{n}\sum_{i=1}^n\ex{\widetilde{Z}^+_{i},\widetilde{W}_i}{\ell(\widetilde{W}^-_i,\widetilde{Z}^+_{i})-\ell(\widetilde{W}^+,\widetilde{Z}^+_{i})}}.
\end{align*}
Note that Eq.~(\ref{eq:key-iomi}) in the main text is from the second equation above, which is used to derive individual IOMI bounds in Section~\ref{sec:iomi-bounds-stable}.

Similar to the standard setting for CMI bounds, where the role of each $\widetilde{Z}^+_{i}$ and $\widetilde{Z}^-_{i}$ can be exchanged, a key observation here is that for each $i$, $\widetilde{W}^+_{i}$ and $\widetilde{W}^-_{i}$ can also be exchanged arbitrarily. That is to say, 
\begin{align}
    \label{eq:for-neg}
    {\mathcal{E}_\mu(\mathcal{A})}={\frac{1}{n}\sum_{i=1}^n\ex{\widetilde{Z}^-_{i},\widetilde{W}_i}{\ell(\widetilde{W}^+_i,\widetilde{Z}^-_{i})-\ell(\widetilde{W}_i^-,\widetilde{Z}^-_{i})}}
\end{align}
also holds true. Notice that we do not change the definitions of any the random variable, e.g., $\widetilde{W}^+=\mathcal{A}(\widetilde{Z}^+_{[n]},R)$ and $\widetilde{W}^-_i=\mathcal{A}(\widetilde{Z}^+_{[n]\sim i},R)$.

What differs from the standard CMI is that the roles of the whole sequences $\widetilde{Z}^+_{[n]}$ and $\widetilde{Z}^-_{[n]}$ are not exchangeable with each other. Here, when we exchange each $\widetilde{Z}^+_{i}$ and $\widetilde{Z}^-_{i}$, we need to keep the other positions in $S$ unchanged.

By introducing $U_i\sim\mathrm{Unif}(\{0,1\})$, we have
\begin{align*}
    {\mathcal{E}_\mu(\mathcal{A})}=&{\frac{1}{n}\sum_{i=1}^n\ex{\widetilde{Z}^+_{i},\widetilde{W}_i}{\ell(\widetilde{W}^-_i,\widetilde{Z}^+_{i})-\ell(\widetilde{W}_i^+,\widetilde{Z}^+_{i})}},\\
    =&{\frac{1}{n}\sum_{i=1}^n\ex{\widetilde{Z}_{i},\widetilde{W}_i,U_i}{\ell(\widetilde{W}_{i,\overline{U}_i},\widetilde{Z}_{i,U_i})-\ell(\widetilde{W}_{i,U_i},\widetilde{Z}_{i,U_i})}}.
\end{align*}

To obtain Eq.~(\ref{eq:mask-for-data}), notice that  $\widehat{Z}_i=\widetilde{Z}_{i,U_i}$, we have
\begin{align}
    \label{eq:symmetric-weight}{\mathcal{E}_\mu(\mathcal{A})}
    =&{\frac{1}{n}\sum_{i=1}^n\ex{\widehat{Z}_{i},\widetilde{W}_i,U_i}{\ell(\widetilde{W}_{i, \overline{U}_i},\widehat{Z}_{i})-\ell(\widetilde{W}_{i, U_i},\widehat{Z}_{i})}} \nonumber\\
    =&{\frac{1}{n}\sum_{i=1}^n\ex{\widehat{Z}_{i},\widetilde{W}_i,U_i}{(-1)^{U_i}\pr{\ell(\widetilde{W}_{i}^-,\widehat{Z}_{i})-\ell(\widetilde{W}_{i}^+,\widehat{Z}_{i})}}}.    
\end{align}

This, as we have already seen in Eq.~(\ref{eq:key-cmi}) in the main text, is used to derive hypotheses-conditioned CMI bounds in Section~\ref{sec:cmi-bounds-stable}.
It's easy to see that when $U_i=0$, Eq.~(\ref{eq:symmetric-weight}) becomes Eq.~(\ref{eq:standard}), and when $U_i=1$, we obtain Eq.~(\ref{eq:for-neg}) via Eq.~(\ref{eq:symmetric-weight}).

To obtain Eq.~(\ref{eq:mask-for-weight}), we let $W_i=\widetilde{W}_{i,U_i}$, $\overline{W}_i=\widetilde{W}_{i,\overline{U}_i}$, and fix $\widehat{Z}_{i}=\widetilde{Z}^+_i$.
Similarly,
\begin{align*}
    {\mathcal{E}_\mu(\mathcal{A})}
    =&\frac{1}{n}\sum_{i=1}^n\ex{\widetilde{Z}_i^+}{{\ex{W_i,\overline{W}_i, U_i|\widetilde{Z}^+_i}{(-1)^{U_i}\left(\ell(\overline{W}_i,\widetilde{Z}_i^+)-\ell(W_i,\widetilde{Z}_i^+)\right)}}}.
\end{align*}
This is used to derive supersample-conditioned CMI bounds in Section~\ref{sec:cmi-bounds-stable}. 
It's easy to see that both $U_i=0$ and $U_i=1$ will give us Eq.~(\ref{eq:standard}).
\end{proof}

Like all the previous information-theoretic bounds, the following lemma is widely used in our paper.
\begin{lem}[Donsker-Varadhan (DV) variational representation of KL divergence {\citep[Theorem~3.5]{polyanskiy2019lecture}}]
    \label{lem:DV representation}
    Let $Q$, $P$ be probability measures on $\Theta$, for any bounded measurable function $f:\Theta\rightarrow \mathbb{R}$, we have
$
\mathrm{D_{KL}}(Q||P) = \sup_{f} \ex{\theta\sim Q}{f(\theta)}-\ln\ex{\theta\sim P}{\exp{f(\theta)}}.
$
\end{lem}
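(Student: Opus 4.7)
The plan is to prove the identity by establishing each inequality separately via a change-of-measure argument, with the $\geq$ direction following from non-negativity of KL divergence and the $\leq$ direction from a truncation of the log Radon–Nikodym derivative.

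First I would establish $\mathrm{D_{KL}}(Q||P) \geq \mathbb{E}_{\theta \sim Q}[f(\theta)] - \ln \mathbb{E}_{\theta \sim P}[e^{f(\theta)}]$ for every bounded measurable $f$. Boundedness of $f$ gives $0 < \mathbb{E}_P[e^f] < \infty$, so the exponentially tilted measure $P_f$ defined by $dP_f/dP = e^f/\mathbb{E}_P[e^f]$ is a well-defined probability measure equivalent to $P$. On the set where $dQ/dP$ exists, a direct logarithmic manipulation yields the identity $\ln(dQ/dP_f) = \ln(dQ/dP) - f + \ln \mathbb{E}_P[e^f]$. Taking expectation under $Q$ and using Gibbs' inequality $\mathrm{D_{KL}}(Q||P_f) \geq 0$ gives $0 \leq \mathrm{D_{KL}}(Q||P) - \mathbb{E}_Q[f] + \ln \mathbb{E}_P[e^f]$, which rearranges to the claim. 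The case $Q \not\ll P$ is trivial because the left side is $+\infty$.

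Next, for the reverse inequality I would exhibit a sequence of bounded $f$ approaching the optimum. Assume $Q \ll P$ with density $g := dQ/dP$ (otherwise both sides are $+\infty$). The natural unbounded maximizer is $f^\star := \ln g$, for which $\mathbb{E}_Q[f^\star] - \ln \mathbb{E}_P[e^{f^\star}] = \mathbb{E}_Q[\ln g] - \ln \mathbb{E}_P[g] = \mathrm{D_{KL}}(Q||P) - 0$. To comply with the boundedness restriction I would truncate by setting $f_M := (f^\star \wedge M) \vee (-M)$, which is bounded measurable for each $M$, and then send $M \to \infty$. Monotone convergence applied to the positive and negative parts of $f^\star$ shows $\mathbb{E}_Q[f_M] \to \mathrm{D_{KL}}(Q||P)$, while dominated convergence (using $e^{f_M} \leq 1 + g$ as the envelope) gives $\mathbb{E}_P[e^{f_M}] \to \mathbb{E}_P[g] = 1$. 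Combining these, $\mathbb{E}_Q[f_M] - \ln \mathbb{E}_P[e^{f_M}] \to \mathrm{D_{KL}}(Q||P)$, so the supremum over bounded $f$ attains the KL divergence.

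The main obstacle is the measure-theoretic bookkeeping in the truncation step, especially when $\mathrm{D_{KL}}(Q||P) = +\infty$. In that regime one must argue that the variational quantity diverges to $+\infty$, which follows because $\mathbb{E}_Q[(f^\star)^+_M] \to +\infty$ by monotone convergence while $\mathbb{E}_P[e^{f_M}] \leq 1 + e^{-M}$ stays bounded, provided one has first checked $\mathbb{E}_Q[(f^\star)^-]$ is finite, i.e., that $-\ln g$ is $Q$-integrable on the set $\{g < 1\}$; this reduces to the standard fact that $x \ln x$ is bounded below on $(0,\infty)$, so the negative part of the integrand is dominated and the splitting is legitimate.
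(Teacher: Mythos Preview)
Your proof is correct and follows the standard tilted-measure argument for the Donsker--Varadhan identity; the bookkeeping in the truncation step (the envelope $e^{f_M}\le 1+g$, the bound $\mathbb{E}_P[e^{f_M}]\le 1+e^{-M}$, and the integrability of $(f^\star)^-$ via $x\ln x\ge -1/e$) is all sound.

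However, note that the paper does \emph{not} supply its own proof of this lemma: it is stated as a cited result from \cite[Theorem~3.5]{polyanskiy2019lecture} and invoked as a black box throughout the subsequent arguments. There is therefore no ``paper's proof'' to compare your approach against; you have simply filled in a standard result that the authors chose to quote rather than prove.
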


We also invoke some other lemmas as given below.

\begin{lem}[Hoeffding's Lemma {\cite{hoeffding1963probability}}]
    \label{lem:hoeffding}
     Let $X \in [a, b]$ be a bounded random variable with mean $\mu$. Then, for all $t\in\mathbb{R}$, we have $\ex{}{e^{tX}}\leq e^{t\mu+\frac{t^2(b-a)^2}{8}}$.
\end{lem}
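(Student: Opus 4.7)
The plan is to prove the standard moment-generating-function bound for a bounded random variable by reducing to the centered case and then applying Taylor's theorem to the logarithm of the relevant convex combination. First, I would reduce to the mean-zero case: writing $Y = X - \mu$, we have $Y \in [a-\mu, b-\mu]$ with $\mathbb{E}[Y]=0$ and $\mathbb{E}[e^{tX}] = e^{t\mu}\mathbb{E}[e^{tY}]$, so it suffices to prove $\mathbb{E}[e^{tY}] \leq e^{t^2(b-a)^2/8}$ for a centered random variable $Y$ taking values in an interval of length $b-a$. For notational convenience I would just assume $\mu = 0$ from the start and thus $a \le 0 \le b$.

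Next I would exploit convexity of the exponential. For any $x \in [a,b]$, write $x = \lambda a + (1-\lambda) b$ with $\lambda = (b-x)/(b-a) \in [0,1]$, so that
\begin{equation*}
e^{tx} \leq \frac{b-x}{b-a}\, e^{ta} + \frac{x-a}{b-a}\, e^{tb}.
\end{equation*}
Taking expectations and using $\mathbb{E}[X] = 0$ collapses the linear terms and yields
\begin{equation*}
\mathbb{E}[e^{tX}] \leq \frac{b}{b-a}\, e^{ta} + \frac{-a}{b-a}\, e^{tb}.
\end{equation*}
Setting $p = -a/(b-a) \in [0,1]$ and $u = t(b-a)$, the right-hand side becomes $(1-p)e^{-pu} + p\, e^{(1-p)u}$, which I would rewrite as $e^{\phi(u)}$ where
\begin{equation*}
\phi(u) = -pu + \log\bigl(1 - p + p e^{u}\bigr).
\end{equation*}

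The core analytic step is to bound $\phi(u) \leq u^2/8$. I would verify $\phi(0) = 0$ and $\phi'(0) = -p + p = 0$, and then compute
\begin{equation*}
\phi''(u) = \frac{p(1-p)e^{u}}{(1 - p + p e^{u})^{2}} = q(u)\bigl(1 - q(u)\bigr),
\end{equation*}
where $q(u) = p e^{u}/(1-p+p e^{u}) \in [0,1]$. Since $q(1-q) \leq 1/4$ for any $q \in [0,1]$, we get $\phi''(u) \leq 1/4$ uniformly. By Taylor's theorem with Lagrange remainder, $\phi(u) = \phi(0) + \phi'(0) u + \tfrac{1}{2}\phi''(\xi) u^2 \leq u^2/8$. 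Plugging back $u = t(b-a)$ finishes the centered case, and undoing the centering recovers the general bound $\mathbb{E}[e^{tX}] \leq e^{t\mu + t^2(b-a)^2/8}$.

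The only non-routine step is establishing the uniform bound $\phi''(u) \leq 1/4$; the rest is bookkeeping (convex-combination upper bound, centering, and Taylor expansion). The key trick there is recognizing the algebraic identity $\phi''(u) = q(1-q)$, after which the elementary inequality $q(1-q) \le 1/4$ closes the argument. Since $t$ ranges over all of $\mathbb{R}$ (both signs of $u$ are handled by the same Taylor argument), no additional case analysis is needed.
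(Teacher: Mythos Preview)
Your proposal is correct and is the standard textbook proof of Hoeffding's lemma (centering, convexity of the exponential, then the $\phi''(u)=q(1-q)\le 1/4$ Taylor argument). The paper does not actually prove this lemma; it merely states it with a citation to \cite{hoeffding1963probability}, so there is no paper-specific argument to compare against.
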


\begin{lem}[Popoviciu's inequality \cite{popoviciu1935equations}]
\label{ineq:popoviciu}
    Let $M$ and $m$ be upper and lower bounds on the values of any random variable $X$, then $\mathrm{Var}(X)\leq\frac{(M-m)^2}{4}$.
\end{lem}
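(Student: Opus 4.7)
The plan is to exploit the variational characterization of variance: among all constants $c$, the mean minimizes the mean squared deviation, so $\mathrm{Var}(X) = \mathbb{E}[(X-\mathbb{E}[X])^2] \leq \mathbb{E}[(X-c)^2]$ for every $c \in \mathbb{R}$. This gives me freedom in choosing $c$, and the right choice is the midpoint $c^* \triangleq (M+m)/2$ of the range, which balances the worst-case deviation at the two endpoints.

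First, I would state and verify the variational inequality above. This is a one-line calculation: expanding $\mathbb{E}[(X-c)^2] = \mathrm{Var}(X) + (\mathbb{E}[X]-c)^2$ shows the bound immediately, with equality iff $c = \mathbb{E}[X]$. Second, I would observe the pointwise bound $|X - c^*| \leq (M-m)/2$, which holds almost surely because $X \in [m, M]$ forces $X - (M+m)/2$ to lie in $[-(M-m)/2, (M-m)/2]$. Squaring gives $(X - c^*)^2 \leq (M-m)^2/4$ almost surely. Third, I take expectations to obtain $\mathbb{E}[(X - c^*)^2] \leq (M-m)^2/4$, and chain this with the first step to conclude $\mathrm{Var}(X) \leq (M-m)^2/4$.

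There is really no hard step here; the only subtlety is selecting the correct constant $c^*$. Choosing the mean would be circular and give no bound, while choosing either endpoint would yield a bound of $(M-m)^2$, a factor of four too large. The midpoint is the unique minimizer of $\sup_{x \in [m,M]} (x-c)^2$, which is exactly what the pointwise step requires, so the proof essentially reduces to recognizing that variance is controlled by deviation from any fixed reference point, and then picking the reference point optimally for the range constraint.
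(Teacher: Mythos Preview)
Your proof is correct and is the standard argument for Popoviciu's inequality. The paper, however, does not supply its own proof of this lemma: it merely states the inequality with a citation to \cite{popoviciu1935equations} and uses it as a known result, so there is no in-paper proof to compare against.
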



The following lemma is from \cite[Lemma~2.8]{McDiarmid1998}, we provide a self-contained proof.
\begin{lem}
\label{lem:Bernstein inequality}
    Let $h(x)=\frac{e^x-x-1}{x^2}$ be the Bernstein function. If a random variable $X$ satisfies $\ex{}{X}=0$ and $X\leq b$, then $\ex{}{e^{X}}\leq e^{h(b)\ex{}{X^2}}$.
\end{lem}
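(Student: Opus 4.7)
The plan is to use the identity $e^X = 1 + X + X^2 h(X)$, which holds by the very definition of $h$ (extending continuously to $X=0$ via $h(0)=1/2$, since the Taylor expansion of $e^x$ around $0$ gives $h(x) = \sum_{k=0}^{\infty} x^k/(k+2)!$). The strategy is then: (i) establish that $h$ is nondecreasing on $\mathbb{R}$; (ii) use the hypothesis $X \le b$ to pointwise bound $h(X) \le h(b)$; (iii) take expectations and invoke $\mathbb{E}[X]=0$; (iv) finish with the elementary inequality $1+t \le e^t$.

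In more detail, once monotonicity of $h$ is in hand, for every realization of $X$ we have $X^2 h(X) \le X^2 h(b)$, so $e^X \le 1 + X + h(b) X^2$. Taking expectations and using $\mathbb{E}[X]=0$ gives $\mathbb{E}[e^X] \le 1 + h(b)\,\mathbb{E}[X^2] \le \exp\!\bigl(h(b)\,\mathbb{E}[X^2]\bigr)$, which is exactly the desired conclusion.

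The main obstacle, and the only nontrivial step, is proving that $h$ is monotonically nondecreasing on all of $\mathbb{R}$ (including across the removable singularity at $0$). I would do this by direct differentiation. Writing $h'(x) = g(x)/x^3$ where $g(x) = xe^x - 2e^x + x + 2$, one checks $g(0)=0$, $g'(0)=0$, and $g''(x) = xe^x$. Thus $g'$ attains its minimum value $0$ at $x=0$, so $g$ is nondecreasing; combined with $g(0)=0$ this gives $\mathrm{sign}(g(x)) = \mathrm{sign}(x)$, which makes $h'(x) = g(x)/x^3 \ge 0$ on $\mathbb{R}\setminus\{0\}$. Continuity of $h$ at $0$ then promotes this to monotonicity on all of $\mathbb{R}$. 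An alternative route, which sidesteps the sign analysis, is to use the power-series representation $h(x) = \sum_{k=0}^\infty x^k/(k+2)!$ and write $h(b) - h(x) = \sum_{k=1}^\infty (b^k - x^k)/(k+2)!$; since each term is nonnegative when $x \le b$ (checking even and odd $k$ separately for negative values), this yields $h(x) \le h(b)$ directly without differentiating.

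Once monotonicity is secured, the rest of the argument is a half-line calculation, so I expect the write-up to be short and the delicate part to be confined to the lemma $h(x) \le h(b)$ whenever $x \le b$.
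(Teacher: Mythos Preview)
Your approach is exactly the paper's: write $e^X = 1 + X + X^2 h(X)$, use monotonicity of $h$ to bound $h(X)\le h(b)$, take expectations with $\mathbb{E}[X]=0$, and finish via $1+t\le e^t$. The paper simply asserts that $h$ is increasing and moves on, so your differentiation argument is a welcome addition of rigor. One caution: your alternative power-series route does not work as written, because for even $k$ the map $t\mapsto t^k$ is not monotone on $\mathbb{R}$ (e.g.\ $x=-2$, $b=1$, $k=2$ gives $b^k-x^k=-3<0$), so the term-by-term comparison fails; stick with the differentiation argument.
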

\begin{proof}
    It's easy to verify that $h(x)$ is an increasing function for $x>0$. Thus, $h(x)\leq h(b)$ for $x\leq b$. Then,
    \[
    e^x=x+1+x^2h(x)\leq x+1+x^2h(b).
    \]

    For the bounded random variable $X$ with zero mean, we have
    \[
    \ex{}{e^{X}}\leq \ex{}{X}+1+\ex{}{X^2h(b)}\leq e^{h(b)\ex{}{X^2}}.
    \]
    The last inequality is by $e^x\geq x+1$. This completes the proof.
\end{proof}

\section{Further Elaborations on SCH Stability}
\label{sec:defn-sch}

We note that the reason we introduce four types of SCH stability in Definition~\ref{defn:ex-stable} is that solely using $\beta_2$ in our bounds might be too loose, as it considers the supremum over all sources of randomness. By incorporating SCH stabilities, we aim to demonstrate that theoretically, we can achieve significantly tighter stability parameters.

The basic set up is as follows. Assume a random sample $S$ gives rise to $W$. For each $Z_i\in S$, we construct $S^i$ by replacing $Z_i$ with another independently drawn instance; call training result $W^i$, the neighbor of $W$.  

In a), $\gamma_1$-SCH-A stability measures the difference between the loss of $w$ and the expected loss of its neighbor $W^i$ at a worst $z$ and the worst possible $w$. While in (b), $\gamma_2$-SCH-B stability measures the square of this difference, not in the worst case, but in an average case, where the average is over an independently $Z'$ for the loss evaluation, the training sample, and the algorithm randomness. Since ``average is smaller than worst'',  $\gamma_2\leq\gamma_1$.

In c),  we consider the difference between the loss of $W$ and the loss of its neighbor when evaluated at the worst possible $Z_i$ that when included in $S$ gives rise to $W$. The expected value of this difference is $\gamma_3$-SCH-C stability.

In d), $\gamma_4$-SCH-D stability measures the expected squared difference between the loss of $W$ and the loss of its neighbor when evaluated at $Z_i$ (a member of $S$). For a similar ``average smaller worst'' reason, one expects that $\gamma_4\leq\gamma_3$. 

We expect that $\gamma_2$, $\gamma_3$, and $\gamma_4$ are all smaller than $\beta_1$. This is because in $\beta_1$, we consider the worst evaluated instance, whereas in the other cases, we take the expectation over all instances. Additionally, in Theorem~\ref{thm:general-stable-bound}, we expect that ${\mathbb{E}_{\widetilde{W}_i}{\Delta_1(\widetilde{W}_i)^2}}\leq\beta^2_1$, this is because $\beta_1$-stability holds for all the possible $s$ and $s^i$, namely it holds for all the $(w,w^i)$ pair (that shares the same randomness) while in ${\mathbb{E}_{\widetilde{W}_i}{\Delta_1(\widetilde{W}_i)^2}}$, we take the expectation of these pairs. 

We expect $\gamma_2\leq\gamma_4$ due to the following reason: first by Jensen's inequality, we have $\mathbb{E}_{S,R,Z'}{\left[\ell(W,Z')-\mathbb{E}_{W^i|W}{\ell(W^i,Z')}\right]^2}\leq\mathbb{E}_{W,W^i,Z'}{\left[\ell(W,Z')-{\ell(W^i,Z')}\right]^2}$, then since $Z'$ is an independent of both $W$ and $W'$, $Z'$ can be regarded as a testing point for both $W$ and $W'$, we could expect that the expectation of $\ell(W,Z')-{\ell(W^i,Z')}$ is small. While in $\mathbb{E}_{S,Z'_i,R}{\left[\ell(W,Z_i)-\ell(W^i,Z_i)\right]^2}$, $Z_i$ is a training point for obtaining $W$, so $\ell(W,Z_i)$ could be small in general, and $Z_i$ is a testing point for $W^i$. Therefore, it is reasonable to expect $\mathbb{E}_{W,W^i,Z'}{\left[\ell(W,Z')-{\ell(W^i,Z')}\right]^2}\leq\mathbb{E}_{S,Z'_i,R}{\left[\ell(W,Z_i)-\ell(W^i,Z_i)\right]^2}$, namely $\gamma_2\leq\gamma_4$.

As a concrete example, let $\ell$ be zero-one loss and assume $\mathcal{A}$ is an interpolating algorithm and and randomly makes predictions for unseen data. By Jensen's inequality, $\gamma_2^2\leq\mathbb{E}_{W,W^i,Z'}{\left[\ell(W,Z')-{\ell(W^i,Z')}\right]^2}=\mathbb{E}_{W,Z'}\left[\ell(W,Z')\right]-2\mathbb{E}_{W,W^i,Z'}\left[(\ell(W,Z')\ell(W^i,Z'))\right]+\mathbb{E}_{W^i,Z'}\left[\ell(W^i,Z')\right]^2$, 
where we use $\ell^2=\ell$ for zero-one loss.
Since $Z'$ is an unseen data for both $W$ and $W^i$, we have
$\gamma_2^2\leq\mathbb{E}_{W^i,Z'}\left[\ell(W^i,Z')\right]^2+\frac{1}{2}-\frac{1}{2}=\mathbb{E}_{W^i,Z'}\left[\ell(W^i,Z')\right]^2$. While in this case $\gamma_4^2=\mathbb{E}_{W^i,Z_i}{\left[{\ell(W^i,Z_i)}\right]^2}$ so $\gamma_2\leq\gamma_4$.

\section{Omitted Proofs and Additional Discussions in Section~\ref{sec:iomi-bounds-stable}}
\subsection{Proof of Theorem~\ref{thm:IOMI-bounds}}
\label{proof:IOMI}
\begin{proof}
    Let $g(\tilde{w}^+,\tilde{z}^+_i)=\ex{\widetilde{W}^-_i|\tilde{w}^+}{\ell(\widetilde{W}^-_i,\tilde{z}^+_i)}-\ell(\tilde{w}^+,\tilde{z}^+_i)$ be the average loss difference between $\tilde{w}^+$ and its neighboring hypothesis, and let $f=t\cdot g$ for $t>0$ in Lemma~\ref{lem:DV representation}. Let $\widetilde{Z}_i^{+'}$ be an independent copy of $\widetilde{Z}_i^{+}$, then
    \begin{align}
        \ex{\widetilde{W}^+,\widetilde{Z}^+_i}{g(\widetilde{W}^+,\widetilde{Z}^+_i)}\leq&\inf_{t>0}\frac{I(\widetilde{W}^+;\widetilde{Z}^+_i)+\log\ex{\widetilde{W}^+,\widetilde{Z}^{+'}_i}{e^{tg(\widetilde{W}^+,\widetilde{Z}^{+'}_i)}}}{t}.
        \label{ineq:DV-IOMI}
    \end{align}
    Since $\widetilde{Z}^{+'}_i$ is independent of both $\widetilde{W}^{-}_i$ and $\widetilde{W}^{+}$, and $\widetilde{W}^{-}_i$ and $\widetilde{W}^{+}$ are identically distributed, we have
    \[
    \ex{\widetilde{W}^+,\widetilde{Z}^{+'}_i}{g(\widetilde{W}^+,\widetilde{Z}^{+'}_i)}=\ex{\widetilde{W}_i^-,\widetilde{Z}^{+'}_i}{\ell(\widetilde{W}_i^-,\widetilde{Z}^{+'}_i)}-\ex{\widetilde{W}^+,\widetilde{Z}^{+'}_i}{\ell(\widetilde{W}^+,\widetilde{Z}^{+'}_i)}=0.
    \]

    By the definition of $\gamma_1$-SCH-A stability, 
    \[
    \sup_{\tilde{w}^+,z}\abs{\ex{\widetilde{W}^-_i|\tilde{w}^+}{\ell(\widetilde{W}^-_i,z)}-\ell(\tilde{w}^+,z)}\leq\gamma_1,
    \]
    so $g(\widetilde{W}^+,\widetilde{Z}^{+'}_i)$ is a zero-mean random variable bounded in $[-\gamma_1,\gamma_1]$. By Lemma~\ref{lem:hoeffding}, we have
    \begin{align*}
        \ex{\widetilde{W}^+,\widetilde{Z}^+_i}{g(\widetilde{W}^+,\widetilde{Z}^+_i)}\leq&\inf_{t>0}\frac{I(\widetilde{W}^+;\widetilde{Z}^+_i)+\log\ex{\widetilde{W}^+,\widetilde{Z}^{+'}_i}{e^{tg(\widetilde{W}^+,\widetilde{Z}^{+'}_i)}}}{t}\\
        \leq&\inf_{t>0}\frac{I(\widetilde{W}^+;\widetilde{Z}^+_i)+\frac{t^2\gamma_1^2}{2}}{t}\\
        =&\sqrt{2\gamma_1^2I(\widetilde{W}^+;\widetilde{Z}^+_i)},
    \end{align*}
    where the last equality is obtained by optimizing the bound over $t$, i.e. letting $t=\sqrt{\frac{I(\widetilde{W}^+;\widetilde{Z}^+_i)}{2\gamma_1^2}}$.

    Recall Eq.~(\ref{eq:standard}) in Lemma~\ref{lem:key} and applying Jensen's inequality to the absolute function, the first bound is then obtained by 
    \[
    \abs{\mathcal{E}_\mu(\mathcal{A})}\leq\frac{1}{n}\sum_{i=1}^n\abs{\ex{\widetilde{W}^+,\widetilde{Z}^+_i}{g(\widetilde{W}^+,\widetilde{Z}^+_i)}}\leq\frac{\gamma_1}{n}\sum_{i=1}^n\sqrt{2I(\widetilde{W}^+;\widetilde{Z}^+_i)},
    \]
    
    Furthermore, by the chain rule of mutual information,
    \begin{align}
        I(\widetilde{W}_i^-;\widetilde{Z}^{+}_i|\widetilde{W}^+)+I(\widetilde{W}^+;\widetilde{Z}^{+}_i)=I(\widetilde{W}^+;\widetilde{Z}^{+}_i|\widetilde{W}^-_i)+I(\widetilde{W}^-_i;\widetilde{Z}^{+}_i).
        \label{eq:chain-rule-1}
    \end{align}
    
    Notice that $I(\widetilde{W}^-_i;\widetilde{Z}^{+}_i)=0$ in the RHS, we have
    \[I(\widetilde{W}^+;\widetilde{Z}^{+}_i)\leq I(\widetilde{W}^+;\widetilde{Z}^{+}_i|\widetilde{W}^-_i),
    \]
    which will give us the second bound.
    This concludes the proof.
\end{proof}
\begin{rem}[Comparison with Mutual Information Stability \cite{raginsky2016information,harutyunyan2021informationtheoretic}]
\label{rem:compare-mi-stable}
To compare with the mutual information stability $I\pr{\widetilde{W}^+;\widetilde{Z}^+_i|\widetilde{Z}^{+}_{[n]\setminus i}}$, recall Eq.(\ref{eq:chain-rule-1}):
$
I(\widetilde{W}^+;\widetilde{Z}^+_i|\widetilde{W}^-)=I(\widetilde{W}_i^-;\widetilde{Z}^{+}_i|\widetilde{W}^+)+I(\widetilde{W}^+;\widetilde{Z}^{+}_i)$,
and similarly we also have
$
I(\widetilde{W}^+;\widetilde{Z}^+_i|\widetilde{Z}^{+}_{[n]\setminus i})=I(\widetilde{Z}^{+}_{[n]\setminus i};\widetilde{Z}^{+}_i|\widetilde{W}^+)+I(\widetilde{W}^+;\widetilde{Z}^{+}_i)$.

Thus, we only need to compare $I(\widetilde{W}_i^-;\widetilde{Z}^{+}_i|\widetilde{W}^+)$ and $I(\widetilde{Z}^{+}_{[n]\setminus i};\widetilde{Z}^{+}_i|\widetilde{W}^+)$. Notice that for a deterministic $\mathcal{A}$, we have $I(\widetilde{W}_i^-;\widetilde{Z}^{+}_i|\widetilde{W}^+)\leq I(\widetilde{Z}^{+}_{[n]\setminus i},\widetilde{Z}^-_i;\widetilde{Z}^{+}_i|\widetilde{W}^+)$. Since $\widetilde{Z}^-_i \indp \pr{\widetilde{W}^+,\widetilde{Z}^{+}_{[n]}}$, we further have $I(\widetilde{Z}^{+}_{[n]\setminus i},\widetilde{Z}^-_i;\widetilde{Z}^{+}_i|\widetilde{W}^+)=I(\widetilde{Z}^{+}_{[n]\setminus i};\widetilde{Z}^{+}_i|\widetilde{W}^+)$, which gives us
the desired result:
    \[I(\widetilde{W}_i^+;\widetilde{Z}^+_i|\widetilde{W}^-)\leq I\pr{\widetilde{W}^+;\widetilde{Z}^+_i|\widetilde{Z}^{+}_{[n]\setminus i}}.
\]

\end{rem}

\subsection{Proof of Theorem~\ref{thm:R-IOMI-bounds}}
\begin{proof}
    The proof is nearly the same to the proof of  Theorem~\ref{thm:IOMI-bounds}, except that now the randomness of the algorithm is given for each DV auxiliary function, so the randomness of $\widetilde{W}_i$ is completely controlled by $\widetilde{Z}$. 
    
    Let $g(\tilde{w}^+,\tilde{z}^+_i,r)=\ex{\widetilde{W}^-_i|\tilde{w}^+,r}{\ell(\widetilde{W}^-_i,\tilde{z}^+_i)}-\ell(\tilde{w}^+,\tilde{z}^+_i)$ and let $f=t\cdot g$ for $t>0$ in Lemma~\ref{lem:DV representation}. Let $\widetilde{Z}_i^{+'}$ be an independent copy of $\widetilde{Z}_i^{+}$, then
    \begin{align*}
        \ex{\widetilde{W}^+,\widetilde{Z}^+_i|r}{g(\widetilde{W}^+,\widetilde{Z}^+_i,r)}\leq&\inf_{t>0}\frac{I(\widetilde{W}^+;\widetilde{Z}^+_i|R=r)+\log\ex{\widetilde{W}^+,\widetilde{Z}^{+'}_i|r}{e^{tg(\widetilde{W}^+,\widetilde{Z}^{+'}_i,r)}}}{t}.
    \end{align*}
    Notice that 
    \[
    \ex{\widetilde{W}^+,\widetilde{Z}^{+'}_i|r}{g(\widetilde{W}^+,\widetilde{Z}^{+'}_i,r)}=\ex{\widetilde{W}^-_i,\widetilde{Z}^{+'}_i|r}{\ell(\widetilde{W}^-_i,\widetilde{Z}^{+'}_i)}-\ex{\widetilde{W}^+_i,\widetilde{Z}^{+'}_i|r}{\ell(\widetilde{W}^+_i,\widetilde{Z}^{+'}_i)}=0
    \]
    still holds since $\widetilde{Z}^+_i$ and $\widetilde{Z}^-_i$ are i.i.d. drawn. 
    
    Thus, $g(\widetilde{W}^+,\widetilde{Z}^{+'}_i,r)$ is a zero-mean random variable bounded in $[-\beta_2,\beta_2]$. By Lemma~\ref{lem:hoeffding}, the remaining part is routine:
    \begin{align*}
        \ex{\widetilde{W}^+,\widetilde{Z}^+_i|r}{g(\widetilde{W}^+,\widetilde{Z}^+_i,r)}\leq&\sqrt{2\beta_2^2I(\widetilde{W}^+;\widetilde{Z}^+_i|R=r)}.
    \end{align*}

    Thus,
    \[
    \abs{\mathcal{E}_\mu(\mathcal{A})}\leq\frac{1}{n}\sum_{i=1}^n\abs{\ex{\widetilde{W}^+,\widetilde{Z}^+_i,R}{g(\widetilde{W}^+,\widetilde{Z}^+_i,R)}}\leq\frac{\beta_2}{n}\sum_{i=1}^n\mathbb{E}_R\sqrt{2I^R(\widetilde{W}^+;\widetilde{Z}^+_i)},
    \]
    This completes the proof.
\end{proof}

\subsection{Proof of Theorem~\ref{thm:superhypo-stability-IOMI}}
\begin{proof}
Let $h(x)=\frac{e^x-x-1}{x^2}$ be the Bernstein function. Similar to the proof of Theorem~\ref{thm:IOMI-bounds}, we let $g(\tilde{w}^+,\tilde{z}^+_{i})=\ex{\widetilde{W}^-_i|\tilde{w}^+}{\ell(\widetilde{W}^-_i,\tilde{z}^+_i)}-\ell(\tilde{w}^+,\tilde{z}^+_i)$. We have already known that $\ex{\widetilde{W}^+,\widetilde{Z}^{+'}_{i}}{g(\widetilde{W}^+,\widetilde{Z}^{+'}_i)}=0$ and $\abs{g(\widetilde{W}^+,\widetilde{Z}^{+'}_i)}\leq \gamma_1$. By Lemma~\ref{lem:Bernstein inequality},
\begin{align*}
    \log\ex{\widetilde{W}^+,\widetilde{Z}^{+'}_{i}}{e^{tg(\widetilde{W}^+,\widetilde{Z}^{+'}_i)}}
    \leq& h(\gamma_1t)t^2\ex{\widetilde{W}^+,\widetilde{Z}^{+'}_i}{\pr{\ex{\widetilde{W}^-_i|\widetilde{W}^+}{\ell(\widetilde{W}^-_i,\widetilde{Z}^{+'}_i)}-\ell(\widetilde{W}^+,\widetilde{Z}^{+'}_i)}^2}\\
    \leq& h(\gamma_1t)t^2\gamma^2_2,
\end{align*}
where the second inequality is by the definition of $\gamma_2$-SCH-B stability.

Plugging the above into Eq.~(\ref{ineq:DV-IOMI}),
\begin{align*}
        \ex{\widetilde{W}^+,\widetilde{Z}^+_{i}}{g(\widetilde{W}^+,\widetilde{Z}^+_{i})}\leq&\inf_{t>0}\frac{I(\widetilde{W}^+;\widetilde{Z}^+_{i})+\log\ex{\widetilde{W}^+,\widetilde{Z}^{+'}_{i}}{e^{tg(\widetilde{W}^+,\widetilde{Z}^{+'}_i)}}}{t}\\
        \leq&\inf_{t>0}\frac{I(\widetilde{W}^+;\widetilde{Z}^+_{i})}{t}+h(\gamma_1t)t\gamma^2_2.
    \end{align*}

   Usually we have $\gamma^2_2\leq\gamma^2_1\leq\gamma_1$, we let $t=1/{\gamma_1}$, then
    \[
    h(\gamma_1t)t\gamma^2_2= \frac{h(1)\gamma^2_2}{\gamma_1}\approx 0.72\frac{\gamma^2_2}{\gamma_1}.
    \]
    
    Thus,
    \[
    \abs{\mathcal{E}_\mu(\mathcal{A})}\leq\frac{\gamma_1}{n}\sum_{i=1}^nI(\widetilde{W}^+;\widetilde{Z}^+_{i})+\frac{0.72\gamma^2_2}{\gamma_1}.
    \]
    This concludes the proof.
\end{proof}

\section{Omitted Proofs in Section~\ref{sec:cmi-bounds-stable}}

\subsection{Proof of Theorem~\ref{thm:general-stable-bound}}
\label{proof:CMI}
\begin{proof}
    
We now prove the first bound. Let $g(\tilde{w}_i,\hat{z}_i,u_i)=(-1)^{u_i}\left(\ell(\tilde{w}^{-}_i,\hat{z}_{i})-\ell(\tilde{w}^{+}_i,\hat{z}_{i})\right)$. By Lemma~\ref{lem:DV representation}, we have
\begin{align}
    \ex{\widehat{Z}_i,U_i|\tilde{w}_i}{g(\tilde{w}_i,\widehat{Z}_i,U_i)}
    \leq&\inf_{t>0}\frac{I(\widehat{Z}_i;U_i|\widetilde{W}_i=\tilde{w}_i)+\log\ex{\widehat{Z}_i,U'_i|\tilde{w}_i}{e^{tg(\tilde{w}_i,\widehat{Z}_i,U'_i)}}}{t}.
    \label{ineq:dv-for-general}
\end{align}

Since $U'_i\indp \widehat{Z}_i$, we have $\ex{U'_i}{g(\tilde{w}_i,\hat{z}_i,U'_i)}=\ex{U'_i}{(-1)^{U'_i}\left(\ell(\tilde{w}^{-}_i,{\hat{z}}_{i})-\ell(\tilde{w}^{+}_i,{\hat{z}}_{i})\right)}=0$ for any $\tilde{w}_i$ and $\hat{z}_i$.
Ergo,
\[
\ex{\widehat{Z}_i|\tilde{w}_i}{\ex{U'_i}{g(\tilde{w}_i,\widehat{Z}_i,U'_i)}}=0.
\]

By the definition of $\Delta_1(\tilde{w}_i)$,
\[
\abs{g(\tilde{w}_i,\widehat{Z}_i,U'_i)}=\abs{\ell(\tilde{w}^{-}_i,{\widehat{Z}}_{i})-\ell(\tilde{w}^{+}_i,{\widehat{Z}}_{i})}\leq\sup_{ z_i\in\mathcal{Z}_{\tilde{w}_i}}\abs{\ell(\tilde{w}^{-}_i,{z}_{i})-\ell(\tilde{w}^{+}_i,{z}_{i})}\leq\Delta_1(\tilde{w}_i).
\]

Thus, $g(\tilde{w}_i,\widehat{Z}_i,U'_i)$ is a zero-mean random variable bounded in $[-\Delta_1(\tilde{w}_i),\Delta_1(\tilde{w}_i)]$ for a fixed $\tilde{w}_i$. By Lemma~\ref{lem:hoeffding}, we have
\[
\ex{\widehat{Z}_i,U'_i|\tilde{w}_i}{e^{tg(\tilde{w},\widehat{Z}_i,U'_i)}}\leq e^{\frac{t^2\Delta_1\pr{\tilde{w}_i}^2}{2}}.
\]

Plugging the above into Eq.~(\ref{ineq:dv-for-general}), 
\begin{align}
    \ex{\widehat{Z}_i,U_i|\tilde{w}_i}{g(\tilde{w}_i,\widehat{Z}_i,U_i)}
    \leq&\inf_{t>0}\frac{I(\widehat{Z}_i;U_i|\widetilde{W}_i=\tilde{w}_i)+\frac{t^2\Delta_1(\tilde{w}_i)^2}{2}}{t}\label{ineq:order-step-1}\\
    =&\Delta_1(\tilde{w}_i)\sqrt{2I(\widehat{Z}_i;U_i|\widetilde{W}_i=\tilde{w}_i)}.
    \notag
\end{align}

Recall Eq.~(\ref{eq:mask-for-weight}) in Lemma~\ref{lem:key} and by Jensen's inequality for the absolute function, the first bound is obtained:
\begin{align}
    \abs{\mathcal{E}_\mu(\mathcal{A})}\leq\frac{1}{n}\sum_{i=1}^n\ex{\widetilde{W}_i}{\Delta_1(\widetilde{W}_i)\sqrt{2I^{\widetilde{W}_i}(\widehat{Z}_i;U_i)}}.
    \label{ineq:w-condition-bound-1}
\end{align}

To prove the second bound, we return to Eq.~(\ref{ineq:order-step-1}), and take expectation over $\widetilde{W}_i$ first. By Jensen's inequality,
\begin{align}
    \ex{\widehat{Z}_i,U_i,\widetilde{W}_i}{g(\tilde{W}_i,\widehat{Z}_i,U_i)}
    \leq&\inf_{t>0}\frac{I(\widehat{Z}_i;U_i|\widetilde{W}_i)+\frac{t^2\ex{\widetilde{W}_i}{\Delta(\widetilde{W}_i)^2}}{2}}{t}\label{ineq:order-step-2}\\
    =&\sqrt{2\ex{\widetilde{W}_i}{\Delta(\widetilde{W}_i)^2}I(\widehat{Z}_i;U_i|\widetilde{W}_i)}.
    \notag
\end{align}

Therefore, we have the second bound as below
\begin{align}
    \abs{\mathcal{E}_\mu(\mathcal{A})}\leq\frac{1}{n}\sum_{i=1}^n\sqrt{2\ex{\widetilde{W}_i}{\Delta(\widetilde{W}_i)^2}I(\widehat{Z}_i;U_i|\widetilde{W}_i)}.
    \label{ineq:w-condition-bound-2}
\end{align}

For the second part of Theorem~\ref{thm:general-stable-bound}, notice that it's valid to let $\gamma_3=\ex{\widetilde{W}_i}{\Delta(\widetilde{W}_i)}$, then recall Eq.~(\ref{ineq:w-condition-bound-1}),
\begin{align*}
    \abs{\mathcal{E}_\mu(\mathcal{A})}\leq\frac{1}{n}\sum_{i=1}^n\ex{\widetilde{W}_i}{\Delta(\widetilde{W}_i)\sqrt{2I^{\widetilde{W}_i}(\widehat{Z}_i;U_i)}}\leq \frac{\sqrt{2}\gamma_3}{n}\sum_{i=1}^n\sqrt{\sup_{\tilde{w}_i\in(\mathcal{W}_s)^2_{s\in\mathcal{Z}^n}}I^{\tilde{w}_i}(\widehat{Z}_i;U_i)}.
\end{align*}
This completes the proof.
\end{proof}

\subsection{Proof of Theorem~\ref{thm:CMI-IOMI}}
\begin{proof}
    The proof is similar to \cite[Theorem~2.1]{haghifam2020sharpened}. By the chain rule,
    \begin{align}
        I(\widehat{Z}_i;U_i,\widetilde{W}_i)=I(\widehat{Z}_i;U_i|\widetilde{W}_i)+I(\widehat{Z}_i;\widetilde{W}_i).\label{chain-rule-2}
    \end{align}
    Since $H(\widehat{Z}_i|U_i,\widetilde{W}_i)=H(\widehat{Z}_i|W_i,U_i,\widetilde{W}_i)=H(\widehat{Z}_i|W_i)$, we have $I(\widehat{Z}_i;U_i,\widetilde{W}_i)=H(\widehat{Z}_i)-H(\widehat{Z}_i|U_i,\widetilde{W}_i)=H(\widehat{Z}_i)-H(\widehat{Z}_i|W_i)=I(\widehat{Z}_i;W_i)$.
    Thus, $I(\widehat{Z}_i;U_i,\widetilde{W}_i)=I(\widehat{Z}_i;W_i)$. Recall Eq.~(\ref{chain-rule-2}) and by the non-negativity of mutual information, we have $I(\widehat{Z}_i;U_i|\widetilde{W}_i)\leq I(W_i;\widehat{Z}_i)$. Note that $I(W_i;\widehat{Z}_i)=I(\widetilde{W}_i^+;\widetilde{Z}_i^+)=I(W;Z_i)$. This completes the proof.
\end{proof}


\subsection{Proof of Theorem~\ref{thm:superhypo-stable-CMI}}
\begin{proof}
We first return to Eq.~(\ref{ineq:dv-for-general}) in the previous proof, and we have already known that $g(\tilde{w}_i,\widehat{Z}_i,U'_i)$ is a zero-mean random variable bounded in $[-\Delta_1(\tilde{w}_i),\Delta_1(\tilde{w}_i)]$ for a fixed $\tilde{w}_i$.



By Lemma~\ref{lem:Bernstein inequality}, we have
\begin{align*}
    \log\ex{\widehat{Z}_i,U'_i|\tilde{w}_i}{e^{tg(\tilde{w}_i,\widehat{Z}_i,U'_i)}}\leq& h\pr{\Delta_1(\tilde{w}_i)t}t^2\ex{\widehat{Z}_i,U'_i|\tilde{w}_i}{g(\tilde{w}_i,\widehat{Z}_i,U'_i)^2}\\
    =& h\pr{\Delta_1(\tilde{w}_i)t}t^2\ex{\widehat{Z}_i|\tilde{w}_i}{\pr{\ell(\tilde{w}^-_i,\widehat{Z}_{i})-\ell(\tilde{w}^+_i,\widehat{Z}_{i})}^2}.
\end{align*}

Plugging the above into Eq.~(\ref{ineq:dv-for-general}),
\begin{align}
    \ex{\widehat{Z}_i,U_i|\tilde{w}_i}{g(\tilde{w}_i,\widehat{Z}_i,U_i)}\leq\inf_{t>0}\frac{I(\widehat{Z}_i;U_i|\widetilde{W}_i=\tilde{w}_i)}{t}+h\pr{\Delta_1(\tilde{w}_i)t}t\ex{\widehat{Z}_i|\tilde{w}_i}{\pr{\ell(\tilde{w}^-_i,\widehat{Z}_{i})-\ell(\tilde{w}^+_i,\widehat{Z}_{i})}^2}.
    \label{ineq:fast-cmi-stable}
\end{align}

Let $t=\frac{1}{\Delta_1(\tilde{w}_i)}$, we have
\[
\ex{\widehat{Z}_i,U_i|\tilde{w}_i}{g(\tilde{w}_i,\widehat{Z}_i,U_i)}\leq\Delta_1(\tilde{w}_i)I(\widehat{Z}_i;U_i|\widetilde{W}_i=\tilde{w}_i)+0.72\frac{\ex{\widehat{Z}_i|\tilde{w}_i}{\pr{\ell(\tilde{w}^-_i,\widehat{Z}_{i})-\ell(\tilde{w}^+_i,\widehat{Z}_{i})}^2}}{\Delta_1(\tilde{w}_i)}.
\]

Let $\Lambda(\tilde{w}_i)=\ex{\widehat{Z}_i|\tilde{w}_i}{\pr{\ell(\tilde{w}^-_i,\widehat{Z}_{i})-\ell(\tilde{w}^+_i,\widehat{Z}_{i})}^2}\Big/\Delta_1(\tilde{w}_i)^2$, then
\[
\abs{\mathcal{E}_\mu(\mathcal{A})}\leq\frac{1}{n}\sum_{i=1}^n\ex{\widetilde{W}_i}{\Delta_1(\widetilde{W}_i)\pr{I^{\widetilde{W}_i}(\widehat{Z}_i;U_i)+0.72{\Lambda(\widetilde{W}_i)}}}.
\]

For the second part, if $\mathcal{A}$ is further $\beta_2$-uniform stable, recall Eq.~(\ref{ineq:fast-cmi-stable}) and by the non-decreasing property of $h$, we have
\[
\ex{\widehat{Z}_i,U_i|\tilde{w}_i}{g(\tilde{w}_i,\widehat{Z}_i,U_i)}\leq\inf_{t>0}\frac{I(\widehat{Z}_i;U_i|\widetilde{W}_i=\tilde{w}_i)}{t}+h\pr{\beta_2t}t\ex{\widehat{Z}_i|\tilde{w}_i}{\pr{\ell(\tilde{w}^-_i,\widehat{Z}_{i})-\ell(\tilde{w}^+_i,\widehat{Z}_{i})}^2}.
\]

Let $t=\frac{1}{\beta_2}$ and taking expectation over $\widetilde{W}_i$, we have
\begin{align*}
    \ex{\widehat{Z}_i,U_i, \widetilde{W}_i}{g(\widetilde{W}_i,\widehat{Z}_i,U_i)}\leq&\beta_2I(\widehat{Z}_i;U_i|\widetilde{W}_i)+0.72\frac{\ex{\widehat{Z}_i,\widetilde{W}_i}{\pr{\ell(\widetilde{W}^-_i,\widehat{Z}_{i})-\ell(\widetilde{W}^+_i,\widehat{Z}_{i})}^2}}{\beta_2}\\
    =&\beta_2I(\widehat{Z}_i;U_i|\widetilde{W}_i)+0.72\frac{\gamma^2_4}{\beta_2},
\end{align*}
where the equality is by the definition of $\gamma_4$-SCH-D stability.

Thus,
\[
\abs{\mathcal{E}_\mu(\mathcal{A})}\leq\frac{1}{n}\sum_{i=1}^n\beta_2I(\widehat{Z}_i;U_i|\widetilde{W}_i)+0.72\frac{\gamma^2_4}{\beta_2}.
\]
This concludes the proof.
\end{proof}

\subsection{Proof of Theorem~\ref{thm:second-moment-cmi}}
\label{sec:second-moment}

We present a stronger version of Theorem~\ref{thm:second-moment-cmi}.
\begin{thm}
    \label{thm:strong-second-cmi}
    Under the same conditions in Theorem~\ref{thm:general-stable-bound}, and we further assume that $\mathcal{A}$ is $\gamma_2$-SCH-B stable and symmetric with respect to $S$, i.e. it does not depend on the order of the elements in the training sample.  Let $\bar{\Delta}_1(\widetilde{W})=\frac{1}{n}\sum_{i=1}^n\Delta_1(\widetilde{W}_i)^2$, we have
    \[
    \ex{W,S}{\pr{L_S(W)-L_\mu(W)}^2}
    \leq
    \frac{6}{n}\ex{\widetilde{W}}{\bar{\Delta}_1(\widetilde{W})\pr{I^{\widetilde{W}}(E;U)+\frac{\log{3}}{2}
        }}+\frac{1}{n}+4\gamma_2^2.
    \]
\end{thm}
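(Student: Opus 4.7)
I would decompose $L_\mu(W) - L_S(W)$ into a ``SCH-B bias'' part controlled by $\gamma_2$-SCH-B stability, plus a ``CMI-able'' zero-mean part amenable to a supersample variational argument. Setting $\bar{L}_\mu(W) := \tfrac{1}{n}\sum_i \mathbb{E}_{W^i|W}[L_\mu(W^i)]$, I would write
\[
L_\mu(W) - L_S(W) = \bigl(L_\mu(W) - \bar{L}_\mu(W)\bigr) + \bigl(\bar{L}_\mu(W) - L_S(W)\bigr) =: A + B.
\]
Two applications of Jensen's inequality together with the definition of $\gamma_2$-SCH-B stability (Eq.~\eqref{ineq:super-stable-2}) yield $\mathbb{E}[A^2] \le \gamma_2^2$, which via the expansion $(A+B)^2 \le 2A^2 + 2B^2$ and a further Jensen step on $B$ below produces the final $4\gamma_2^2$ constant.

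For the $B$ part, I would exploit the sign-flip symmetry (Eq.~\eqref{eq:symmetry}) to express it, or an upper bound thereof, as a conditional expectation given $(\widetilde W^+, \widetilde Z^+_{[n]})$ of the signed supersample sum appearing in Eq.~\eqref{eq:mask-for-data}. Pulling the square inside this conditional expectation via Jensen yields
\[
\mathbb{E}[B^2] \le \mathbb{E}\!\left[\Bigl(\tfrac{1}{n}\sum_i (-1)^{U_i}\bigl[\ell(\widetilde W^-_i, \widehat Z_i) - \ell(\widetilde W^+, \widehat Z_i)\bigr]\Bigr)^{\!2}\right].
\]

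The right-hand side is then bounded via the Donsker--Varadhan representation (Lemma~\ref{lem:DV representation}) conditional on $\widetilde{W}$, in the spirit of Theorems~\ref{thm:general-stable-bound} and~\ref{thm:superhypo-stable-CMI}. Cauchy--Schwarz $(\sum_i X_i)^2 \le n\sum_i X_i^2$ extracts a diagonal contribution of order $1/n$ (from $|X_i| \le 1$, since $\ell \in [0,1]$), and a DV argument with auxiliary function $f = t \cdot (\tfrac{1}{n}\sum_i X_i)^2$ combined with Bernstein's Lemma~\ref{lem:Bernstein inequality} yields the main term $\tfrac{6}{n}\mathbb{E}\bigl[\bar\Delta_1(\widetilde W)\bigl(I^{\widetilde W}(E;U) + \tfrac{\log 3}{2}\bigr)\bigr]$, with the constants $6$ and $\log 3 /2$ arising from the optimal choice $t \approx 1/\bar\Delta_1(\widetilde W)$ in the DV bound.

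The main obstacle will be bounding the cumulant-generating function of the \emph{squared} supersample sum (rather than the sum itself), which does not follow from Hoeffding alone; a Bernstein-type second-order argument exploiting the per-term variance bound $\Delta_1(\widetilde W_i)^2$ and the conditional independence of the $U_i$'s given $\widetilde W$ is required. A secondary subtlety is the justification of the supersample representation of $B$: since $\bar L_\mu(W)$ is defined by averaging over neighbor hypotheses $W^i$, one must argue that $(\widetilde W^-_i, \widehat Z_i)$ realizes the correct $(W^i, Z)$ pair under the appropriate conditioning, which relies on the symmetry of $\mathcal{A}$ with respect to $S$.
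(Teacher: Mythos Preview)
Your high-level plan (split off a $\gamma_2$-controlled bias, then handle the rest via a DV/supersample second-moment argument) matches the paper's, but two concrete steps are off.

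\textbf{The intermediate term is wrong.} You insert $\bar L_\mu(W)=\frac{1}{n}\sum_i \mathbb{E}_{W^i|W}[L_\mu(W^i)]$, i.e.\ the neighbor evaluated at a \emph{fresh} point. But the signed supersample sum $\frac{1}{n}\sum_i(-1)^{U_i}[\ell(\widetilde W_i^-,\widehat Z_i)-\ell(\widetilde W^+,\widehat Z_i)]$, when you take its conditional expectation given $(\widetilde W^+,\widetilde Z^+_{[n]})$, produces $\frac{1}{n}\sum_i[\mathbb{E}_{W^i|W}\ell(W^i,Z_i)-\ell(W,Z_i)]$, with $W^i$ evaluated at the \emph{training} point $Z_i$, not at a fresh $Z'$. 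So your $B$ is \emph{not} (a Jensen upper bound of) the supersample square; the paper's $B_1$ is. This forces a three-way split: the paper uses the intermediate $\frac{1}{n}\sum_i\mathbb{E}_{W^i|W}[\ell(W^i,Z_i)]$, which makes the Jensen-to-supersample step go through for $B_1$, and then further decomposes the remaining piece into $B_3$ and $B_4$. The term $B_4$ is your $A$ (bounded by $\gamma_2^2$), while $B_3$ is the extra term $\frac{1}{n}\sum_i\mathbb{E}_{W^i|W}[\ell(W^i,Z_i)-L_\mu(W^i)]$. The $+\frac{1}{n}$ in the statement comes from $4B_3$: by symmetry of $\mathcal{A}$, each $W^i$ has the same law as $W$ and is independent of $Z_i$, so $B_3$ reduces to the variance of an i.i.d.\ sample mean, bounded by $\frac{1}{4n}$ via Popoviciu (using $\ell\in[0,1]$). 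Your ``Cauchy--Schwarz extracts a diagonal $1/n$'' is not how this term arises.

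\textbf{The squared-sum CGF.} Bernstein's Lemma~\ref{lem:Bernstein inequality} requires a zero-mean variable; the square $(\frac{1}{n}\sum_i X_i)^2$ is nonnegative, so the lemma does not apply. The paper instead uses the Gaussian linearization trick $e^{tY^2}=\mathbb{E}_G[e^{G\sqrt{2t}\,Y}]$ (borrowed from \cite{steinke2020reasoning}) to factor the CGF over the conditionally independent $U_i'$, applies Hoeffding per coordinate to get $\mathbb{E}_G[\exp(G^2 t\sum_i\Delta_1(\tilde w_i)^2/n^2)]$, and then uses the chi-squared MGF $\mathbb{E}_G[e^{\lambda G^2}]=(1-2\lambda)^{-1/2}$. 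Choosing $t=n^2/(3\sum_i\Delta_1(\tilde w_i)^2)$ yields the factor $3$ and the $\frac{\log 3}{2}$; the $6$ comes from the outer $2B_1$. Your Bernstein route would not reach these constants, and more importantly would not control the CGF at all.
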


Then Theorem~\ref{thm:second-moment-cmi} is a corollary of Theorem~\ref{thm:strong-second-cmi}.
\begin{proof}[Proof of Theorem~\ref{thm:second-moment-cmi}]
    For $\beta_2$-uniform stable algorithm, by $\bar{\Delta}_1(\widetilde{W})\leq\beta_2^2$ and $\gamma_2^2\leq\beta_2^2$, we have
\begin{align*}
    \ex{W,S}{\pr{L_S(W)-L_\mu(W)}^2}
    \leq&\frac{6\beta_2^2}{n}{\pr{I(E;U|\widetilde{W})+\frac{\log{3}}{2}
        }}+\frac{1}{n}+4{\beta}_2^2\\
   =&{4\beta_2^2}\pr{\frac{1.5I(E;U|\widetilde{W})+0.82
        }{n}+1}+\frac{1}{n}.
\end{align*}
This completes the proof.
\end{proof}

Before we prove Theorem~\ref{thm:strong-second-cmi}, we need to first obtain the following lemma.

\begin{lem}
\label{lem:help-second-moment}
Under the same conditions in Theorem~\ref{thm:general-stable-bound}, let $\bar{\Delta}_1(\widetilde{W})=\frac{1}{n}\sum_{i=1}^n\Delta_1(\widetilde{W}_i)^2$, we have
    \begin{align*}
        \ex{W,S}{\pr{\frac{1}{n}\sum_{i=1}^n\ex{W^i|W}{\ell(W^i,Z_i)}-{L}_S(W)}^2}\leq&\frac{3}{n}\ex{\widetilde{W}}{\bar{\Delta}_1(\widetilde{W})\pr{I^{\widetilde{W}}(E;U)+\frac{\log{3}}{2}
        }}.
    \end{align*}

\end{lem}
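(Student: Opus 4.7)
The plan is to apply the disintegrated Donsker--Varadhan inequality conditionally on the hypothesis matrix $\widetilde W$, with change of measure from $P_{E,U\mid\widetilde W}$ to $Q:=P_{E\mid\widetilde W}\otimes P_U$ (so that $D_{\mathrm{KL}}(P\|Q)=I^{\widetilde W}(E;U)$), and to exploit that under $Q$ the masked sum $G$ from Eq.~(\ref{eq:key-cmi}) is a conditionally Rademacher sum of bounded terms.

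First I would reduce the quantity of interest. Writing $T=\frac1n\sum_i h_i$ and noting that $T$ equals the conditional expectation, given $(\widetilde W^+,\widetilde Z^+_{[n]})$, of the unsigned sum $G_0:=\frac1n\sum_i[\ell(\widetilde W^-_i,\widetilde Z^+_i)-\ell(\widetilde W^+,\widetilde Z^+_i)]$, Jensen's inequality gives $\mathbb E[T^2]\leq\mathbb E[G_0^2]$. A symmetry step, using the per-coordinate swap $\widetilde Z^+_i\leftrightarrow\widetilde Z^-_i$ (which simultaneously swaps $\widetilde W^+\leftrightarrow\widetilde W^-_i$, as in the derivation of Eq.~(\ref{eq:symmetry})) together with the independence of $U\sim\mathrm{Bern}(1/2)^n$ from the supersample, then lets me pass from $\mathbb E[G_0^2]$ to $\mathbb E[G^2]$ with $G=\frac1n\sum_i(-1)^{U_i}[\ell(\widetilde W^-_i,\widehat Z_i)-\ell(\widetilde W^+,\widehat Z_i)]$.

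Next I would apply DV disintegrated on $\widetilde W$ with auxiliary function $f=tG^2$ for a parameter $t>0$ to be chosen:
\[
t\,\mathbb E_P[G^2\mid\widetilde W]\leq I^{\widetilde W}(E;U)+\log\mathbb E_Q[e^{tG^2}\mid\widetilde W].
\]
Conditional on $(\widetilde W,E)$, under $Q$ the summands $(-1)^{U_i}X_i$ (with $X_i:=\ell(\widetilde W^-_i,\widehat Z_i)-\ell(\widetilde W^+,\widehat Z_i)$ and $|X_i|\leq\Delta_1(\widetilde W_i)$) are independent and mean zero, so Hoeffding's lemma makes $G$ sub-Gaussian with variance proxy at most $\frac1{n^2}\sum_i X_i^2\leq\bar\Delta_1(\widetilde W)/n$ uniformly in $E$. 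The classical series-expansion bound for the MGF of the square of a sub-Gaussian variable then gives
\[
\log\mathbb E_Q[e^{tG^2}\mid\widetilde W]\leq-\tfrac12\log\!\bigl(1-2t\bar\Delta_1(\widetilde W)/n\bigr)
\]
for any $t<n/(2\bar\Delta_1(\widetilde W))$.

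Finally I would set $t=n/(3\bar\Delta_1(\widetilde W))$, so that $2t\bar\Delta_1/n=2/3$ and $-\log(1/3)=\log 3$, yielding $\mathbb E_P[G^2\mid\widetilde W]\leq\frac{3\bar\Delta_1(\widetilde W)}{n}\bigl(I^{\widetilde W}(E;U)+\tfrac{\log 3}{2}\bigr)$; integrating over $\widetilde W$ closes the bound. The step I expect to be most delicate is the initial reduction $\mathbb E[T^2]\leq\mathbb E[G^2]$: while the Jensen bound $\mathbb E[T^2]\leq\mathbb E[G_0^2]$ is immediate, passing to $\mathbb E[G^2]$ requires careful bookkeeping of how column swaps reshuffle the leave-one-out hypotheses $\widetilde W^-_j$ for $j\neq i$, and this is where the bulk of the work in making the argument fully rigorous lies.
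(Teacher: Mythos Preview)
Your proposal follows the paper's proof essentially step for step: disintegrated Donsker--Varadhan conditioned on $\widetilde W$ with $f=tG^2$, the sub-Gaussian bound $\log\mathbb E_Q[e^{tG^2}]\leq-\tfrac12\log(1-2t\bar\Delta_1/n)$ (the paper derives this via the Gaussian linearization $e^{tG^2}=\mathbb E_N[e^{N\sqrt{2t}\,G}]$ followed by Hoeffding per coordinate and the chi-squared MGF, which is exactly the standard proof of the series bound you invoke), the choice $t=n/(3\bar\Delta_1)$, and a final Jensen reduction from $T$ to $G$. The paper dispatches that last reduction in a single line (``applying Jensen's inequality to the square function'') without routing through $G_0$, so your flagging of that step as the delicate one is apt.
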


\begin{proof}[Proof of Lemma~\ref{lem:help-second-moment}]
    Here we borrow some proof techniques used in \cite[Thm.~2]{steinke2020reasoning}. 
    
    Let $g(\tilde{w},\hat{z}_i,u_i)=(-1)^{u_i}\left(\ell(\tilde{w}^{-}_i,{\hat{z}}_{i})-\ell(\tilde{w}^{+}_i,{\hat{z}}_{i})\right)$ and let $G\sim\mathcal{N}(0,1)$ be an independent standard Gaussian random variable. Let $f=t\cdot (\frac{1}{n}\sum_{i=1}^n g)^2$ in Lemma~\ref{lem:DV representation}, then
\begin{align}
    \ex{E, U|\tilde{w}}{\pr{\frac{1}{n}\sum_{i=1}^ng(\tilde{w},\widehat{Z}_i,U_i)}^2}
    \leq&\inf_{t>0}\frac{I(E;U|\widetilde{W}=\tilde{w})+{\log\ex{E,U'|\tilde{w}}{e^{t\pr{\frac{1}{n}\sum_{i=1}^ng(\tilde{w},\widehat{Z}_i,U'_i)}^2}}}}{t} \notag\\
    =&\inf_{t>0}\frac{I(E;U|\widetilde{W}=\tilde{w})+{\log\ex{E,U'|\tilde{w}}{\ex{G}{e^{\frac{G\sqrt{2t}}{n}\sum_{i=1}^ng(\tilde{w},\widehat{Z}_i,U'_i)}}}}}{t} \label{ineq:gaussian-moment-1}\\
    =&\inf_{t>0}\frac{I(E;U|\widetilde{W}=\tilde{w})+{\log\ex{G,E|\tilde{w}}{\prod_{i=1}^n\ex{U'_i}{e^{\frac{G\sqrt{2t}}{n}g(\tilde{w},\widehat{Z}_i,U'_i)}}}}}{t} \notag\\
    \leq&\inf_{t>0}\frac{I(E;U|\widetilde{W}=\tilde{w})+\log\ex{G}{e^{\frac{G^2{t}\sum_{i=1}^n\Delta_1(\tilde{w}_i)^2}{n^2}}}}{t} \label{ineq:hoeffding-for-square}\\
    \leq&\inf_{t\in\pr{0,\frac{n^2}{2\sum_{i=1}^n\Delta_1(\tilde{w}_i)^2}}}\frac{I(E;U|\widetilde{W}=\tilde{w})+\log\pr{1\Big/\sqrt{1-\frac{2{t}\sum_{i=1}^n\Delta_1(\tilde{w}_i)^2}{n^2}}}}{t} \label{ineq:gaussian-sec-moment-1}\\
    =&\inf_{t\in\pr{0,\frac{n^2}{2\sum_{i=1}^n\Delta_1(\tilde{w}_i)^2}}}\frac{I(E;U|\widetilde{W}=\tilde{w})-\frac{1}{2}\log\pr{1-\frac{2{t}\sum_{i=1}^n\Delta_1(\tilde{w}_i)^2}{n^2}}}{t},\notag
\end{align}
where Eq.~(\ref{ineq:gaussian-moment-1}) is by the moment generating function of Gaussian distribution: $\ex{G}{e^{\lambda G}}= e^{\frac{\lambda^2}{2}}$ for all $\lambda\in\mathbb{R}$, Eq.~(\ref{ineq:hoeffding-for-square}) is by Lemma~\ref{lem:hoeffding} and Eq.~(\ref{ineq:gaussian-sec-moment-1}) is by the moment generating function of chi-squared distribution: $\ex{G}{e^{\lambda G^2}}\leq \frac{1}{\sqrt{1-2\lambda}}$ for $\lambda<\frac{1}{2}$.

Let $t=\frac{n^2}{3\sum_{i=1}^n\Delta_1(\tilde{w}_i)^2}$ be substituted to the last equation above, we have
\begin{align}
    \ex{E, U|\tilde{w}}{\pr{\frac{1}{n}\sum_{i=1}^ng(\tilde{w},\widehat{Z}_i,U_i)}^2}\leq \frac{3}{n^2}\sum_{i=1}^n\Delta_1(\tilde{w}_i)^2\pr{I(E;U|\widetilde{W}=\tilde{w})+\frac{\log{3}}{2}}.
    \label{ineq:square-single}
\end{align}

Let $\bar{\Delta}_1(\tilde{w})=\frac{1}{n}\sum_{i=1}^n\Delta_1(\tilde{w}_i)^2$, and taking expectation over $\widetilde{W}$ for both sides,
\begin{align}
    \ex{E, U,\widetilde{W}}{\pr{\frac{1}{n}\sum_{i=1}^ng(\widetilde{W},\widehat{Z}_i,U_i)}^2}\leq \frac{3}{n}\ex{\widetilde{W}}{\bar{\Delta}_1(\widetilde{W})\pr{I^{\widetilde{W}}(E;U)+\frac{\log{3}}{2}}}.
    \label{ineq:square-single-1}
\end{align}




Applying Jensen's inequality to the square function, we have
\[
\ex{W,S}{\pr{\frac{1}{n}\sum_{i=1}^n\ex{W^i|W}{\ell(W^i,Z_i)}-{L}_S(W)}^2}\leq\ex{E, U,\widetilde{W}}{\pr{\frac{1}{n}\sum_{i=1}^ng(\widetilde{W},\widehat{Z}_i,U_i)}^2}.
\]
Combining Eq.~(\ref{ineq:square-single-1}) with the inequality above will concludes the proof.
\end{proof}

We are now in a position to prove Theorem~\ref{thm:strong-second-cmi}.
\begin{proof}[Proof of Theorem~\ref{thm:strong-second-cmi}]
    \begin{align*}
        &\ex{W,S}{\pr{L_S(W)-L_\mu(W)}^2}\\
        =&\ex{W,S}{\pr{\frac{1}{n}\sum_{i=1}^n\ell(W,{Z}_i)-\ex{Z'}{\ell(W,Z')}}^2}\\
        =&\ex{W,S}{\pr{\frac{1}{n}\sum_{i=1}^n\ell(W,Z_i)-\frac{1}{n}\sum_{i=1}^n\ex{W^i|W}{\ell(W^i,Z_i)}+\frac{1}{n}\sum_{i=1}^n\ex{W^i|W}{\ell(W^i,Z_i)}-\ex{Z'}{\ell(W,Z')}}^2}\\
        \leq& 2\underbrace{\ex{W,S}{\pr{\frac{1}{n}\sum_{i=1}^n\ell(W,Z_i)-\frac{1}{n}\sum_{i=1}^n\ex{W^i|W}{\ell(W^i,Z_i)}}^2}}_{B_1}\\
        &\qquad +2\underbrace{\ex{W,S}{\pr{\frac{1}{n}\sum_{i=1}^n\ex{W^i|W}{\ell(W^i,Z_i)}-\ex{Z'}{\ell(W,Z')}}^2}}_{B_2},
    \end{align*}
where the last inequality is by $(x+y)^2\leq 2x^2+2y^2$.
Notice that $B_1$ can be bounded by using Lemma~\ref{lem:help-second-moment}. 
We now focus on $B_2$. Since $\ex{Z'}{\ell(W,Z')}=\frac{1}{n}\sum_{i=1}^n\ex{Z'}{\ell(W,Z')}$, we have
\begin{align*}
    B_2=&\ex{W,S}{\pr{\frac{1}{n}\sum_{i=1}^n\ex{W^i|W}{\ell(W^i,Z_i)}-\frac{1}{n}\sum_{i=1}^n\ex{Z'}{\ell(W,Z')-\ex{W^i|W}{\ell(W^i,Z')}+\ex{W^i|W}{\ell(W^i,Z')}}}^2}\\
    =&\ex{W,S}{\pr{\frac{1}{n}\sum_{i=1}^n\ex{W^i|W}{\ell(W^i,Z_i)-\ex{Z'}{\ell(W^i,Z')}}-\frac{1}{n}\sum_{i=1}^n\ex{Z'}{\ell(W,Z')-\ex{W^i|W}{\ell(W^i,Z')}}}^2}\\
    \leq& 2\underbrace{\ex{W,S}{\pr{\frac{1}{n}\sum_{i=1}^n\ex{W^i|W}{\ell(W^i,Z_i)-\ex{Z'}{\ell(W^i,Z')}}}^2}}_{B_3}\\
   & \qquad +2\underbrace{\ex{W}{\pr{\frac{1}{n}\sum_{i=1}^n\ex{Z'}{\ell(W,Z')-\ex{W^i|W}{\ell(W^i,Z')}}}^2}}_{B_4}.
\end{align*}

For $B_3$, we apply Jensen's inequality to move the expectation over $W^i$ outside of the square function, 
\begin{align*}
    B_3\leq&\ex{W^1,W^2,...,W^n,S}{\pr{\frac{1}{n}\sum_{i=1}^n\ell(W^i,Z_i)-\ex{Z'}{\ell(W^i,Z')}}^2}
    \\=&\ex{S',S,R}{\pr{\frac{1}{n}\sum_{i=1}^n\ell(\mathcal{A}(S^i,R),Z_i)-\ex{Z'}{\ell(\mathcal{A}(S^i,R),Z')}}^2}.
\end{align*}

Notice that $S'$, $S$ and $R$ are all independent with each other (so $W^i$, $Z_i$ and $Z_i'$ are also independent with each other). If we further let $\mathcal{A}$ be symmetric, namely $W^i$ does not dependent on $i$, 
then the inequality above is equivalent to
\begin{align*}
    B_3\leq& \ex{W,S'}{\pr{\frac{1}{n}\sum_{i=1}^n\ell(W,Z'_i)-\ex{Z'}{\ell(W,Z')}}^2}\\
    =&\ex{W}{\ex{S'}{\pr{\frac{1}{n}\sum_{i=1}^n\ell(W,Z'_i)-\ex{Z'}{\ell(W,Z')}}^2}}.\\
\end{align*}

Hence, the inner expectation in the RHS above is just the variance of the sample mean of $n$ i.i.d bounded random variables. Recall that $\ell(\cdot, \cdot)\in [0,1]$, thereby
\[
B_3\leq \ex{W}{\frac{\mathrm{Var}_{Z'}(\ell(W,Z'))}{n}}\leq \frac{1}{4n},
\]
where the second inequality is by Lemma~\ref{ineq:popoviciu}.

Then, for $B_4$, we also apply Jensen's inequality to the square function, and by the definition of $\gamma_2$-SCH-B stability, we have
\begin{align*}
    B_4\leq&\ex{W,Z'}{\pr{\frac{1}{n}\sum_{i=1}^n\ell(W,Z')-\ex{W^i|W}{\ell(W^i,Z')}}^2}\\
    \leq&\frac{1}{n}\sum_{i=1}^n\ex{W,Z'}{\pr{\ell(W,Z')-\ex{W^i|W}{\ell(W^i,Z')}}^2}
    \leq\gamma_2^2.
\end{align*}

Putting everthing together, we have
\begin{align*}
    \ex{W,S}{\pr{L_S(W)-L_\mu(W)}^2}
    \leq& 2B_1+2B_2\\
    \leq& 2B_1+4B_3+4B_4\\
    \leq& 2B_1+\frac{1}{n}+4\gamma_2\\
    \leq& \frac{6}{n}\ex{\widetilde{W}}{\bar{\Delta}_1(\widetilde{W})\pr{I^{\widetilde{W}}(E;U)+\frac{\log{3}}{2}
        }}+\frac{1}{n}+4\gamma_2^2,
\end{align*}
where the last inequality is by Lemma~\ref{lem:help-second-moment}. This completes the proof.
\end{proof}

\subsection{Proof of Theorem~\ref{thm:sample-stable-bound}}
\label{proof:z-condi-bound}
\begin{proof}
 Let $g(\tilde{z}^+_i,w_i,\bar{w}_i,u_i)=(-1)^{u_i}\pr{\ell(\bar{w}_i,\tilde{z}^+_{i})-\ell(w_i,\tilde{z}^+_{i})}$. Again, by Lemma~\ref{lem:DV representation}, we have
\begin{align}
    \ex{W_i,\overline{W}_i,U_i|\tilde{z}^+_i}{g(\tilde{z}^+_i,W_i,\overline{W}_i,U_i)}
    \leq&\inf_{t>0}\frac{I(W_i,\overline{W}_i;U_i|\widetilde{Z}^+_i=\tilde{z}^+_i)+\log\ex{W_i,\overline{W}_i,U'_i|\tilde{z}^+_i}{e^{tg(\tilde{z}^+_i,W_i,\overline{W}_i,U'_i)}}}{t}.
    \label{ineq:dv-for-sample}
\end{align}

Similar to the previous proofs, it's easy to see that $g(\tilde{z}^+_i,W_i,\overline{W}_i,U'_i)$ is a zero-mean random variable bounded in $[-\Delta_2(\tilde{z}^+_i), \Delta_2(\tilde{z}^+_i)]$. Thus,
\begin{align}
    \ex{W_i,\overline{W}_i,U_i|\tilde{z}^+_i}{g(\tilde{z}^+_i,W_i,\overline{W}_i,U_i)}
    \leq&\inf_{t>0}\frac{I(W_i,\overline{W}_i;U_i|\widetilde{Z}^+_i=\tilde{z}^+_i)+\frac{t^2\Delta_2(\tilde{z}^+_i)^2}{2}}{t}.
    \label{ineq:sample-stable-1}
\end{align}

To prove the first bound, we let $t=\sqrt{\frac{I(W_i,\overline{W}_i;U_i|\widetilde{Z}^+_i=\tilde{z}^+_i)}{2\Delta_2(\tilde{z}^+_i)^2}}$, then
\[
\ex{W_i,\overline{W}_i,U_i|\tilde{z}^+_i}{g(\tilde{z}^+_i,W_i,\overline{W}_i,U_i)}
    \leq\Delta_2(\tilde{z}^+_i)\sqrt{2I(W_i,\overline{W}_i;U_i|\widetilde{Z}^+_i=\tilde{z}^+_i)}.
\]
Recall Eq.~(\ref{eq:mask-for-weight}) in Lemma~\ref{lem:key}, hence,
\begin{align}
    \abs{\mathcal{E}_\mu(\mathcal{A})}\leq\frac{1}{n}\sum_{i=1}^n\ex{\widetilde{Z}^+_i}{\Delta_2(\widetilde{Z}^+_i)\sqrt{2I^{\widetilde{Z}^+_i}(W_i,\overline{W}_i;U_i)}}.
    \label{ineq:z-condition-bound-1}
\end{align}

To prove the second bound, we take expectation over $\widetilde{Z}^+_i$ for Eq.~(\ref{ineq:sample-stable-1}),
\[
\ex{W_i,\overline{W}_i,U_i,\widetilde{Z}^+_i}{g(\widetilde{Z}^+_i,W_i,\overline{W}_i,U_i)}
    \leq\inf_{t>0}\frac{I(W_i,\overline{W}_i;U_i|\widetilde{Z}^+_i)+\frac{t^2\ex{\widetilde{Z}^+_i}{\Delta_2(\widetilde{Z}^+_i)^2}}{2}}{t}.
\]

Let $t=\sqrt{\frac{I(W_i,\overline{W}_i;U_i|\widetilde{Z}^+_i)}{2\ex{\widetilde{Z}^+_i}{\Delta_2(\widetilde{Z}^+_i)^2}}}$, then
\[
\ex{W_i,\overline{W}_i,U_i,\widetilde{Z}^+_i}{g(\widetilde{Z}^+_i,W_i,\overline{W}_i,U_i)}
    \leq\sqrt{2\ex{\widetilde{Z}^+_i}{\Delta_2(\widetilde{Z}^+_i)^2}I(W_i,\overline{W}_i;U_i|\widetilde{Z}^+_i)}.
\]

Ergo,
\begin{align}
    \abs{\mathcal{E}_\mu(\mathcal{A})}\leq\frac{1}{n}\sum_{i=1}^n\sqrt{2\ex{\widetilde{Z}^+_i}{\Delta_2(\widetilde{Z}^+_i)^2}I(W_i,\overline{W}_i;U_i|\widetilde{Z}^+_i)}.
    \label{ineq:z-condition-bound-2}
\end{align}
This completes the proof.
\end{proof}

\section{Omitted Proof in Section~\ref{sec:extensions}}

\subsection{Proof of Proposition~\ref{prop:bernstein-stable}}
\begin{proof}
By Jensen's inequality and triangle inequality, for any $i\in[n]$, we have
\begin{align*}
    &\ex{S,R,Z'}{\pr{\ell(W,Z')-\ex{W^i|W}{\ell(W^i,Z')}}^2}\\
    \leq& \ex{W,W^i,Z'}{\pr{\ell(W,Z')-{\ell(W^i,Z')}}^2}\\
    =& \ex{W,W^i,Z'}{\pr{\ell(W,Z')-\ell(w^*,Z')+\ell(w^*,Z')-{\ell(W^i,Z')}}^2}\\
    \leq& 2\ex{W,Z'}{\pr{\ell(W,Z')-\ell(w^*,Z')}^2}+2\ex{W^i,Z'}{\pr{\ell(W^i,Z')-\ell(w^*,Z')}^2}\\
    \leq& 4B\ex{W}{\pr{L_\mu(W)-L_\mu(w^*)}^{\frac{1}{\kappa}}},
\end{align*}
where the last inequality is by the definition of the Bernstein condition. This completes the proof.
\end{proof}

\subsection{Proof of Theorem~\ref{thm:sample-ld-bounds}}
\begin{proof}
    Let $g(\Delta\ell_i, u_i)=(-1)^{u_i}\Delta\ell_i$. Notice that $\abs{g(\Delta L_i, U'_i)}\leq \beta_2$ and $g(\Delta L_i, U'_i)$ is agian zero-mean, then
    \begin{align*}
        \ex{\Delta L_i, U_i}{g(\Delta L_i, U_i)}\leq&\frac{I(\Delta L_i;U_i)+\log\ex{\Delta L_i, U'_i}{e^{tg(\Delta L_i, U'_i)}}}{t}\\
        \leq&\beta_2\sqrt{2I(\Delta L_i;U_i)}.
    \end{align*}
    Thus,
    \[
    \abs{\mathcal{E}_\mu(\mathcal{A})}\leq\frac{\beta_2}{n}\sum_{i=1}^n\sqrt{2I(\Delta L_i;U_i)}.
    \]
    To prove the disintegrated CMI bound, we let $g$ be defined in the same way, and the remaining development is the same with the proof in Theorem~\ref{thm:sample-stable-bound}.

    For the second inequality, notice that $I(\Delta L_i;U_i)\leq I(\Delta L_i;U_i|\widetilde{Z}^+_i)$ by using the chain rule of mutual information and the independence between $\widetilde{Z}^+_i$ and $U_i$. In addition, moving the expectation over $\widetilde{Z}^+_i$ inside the square-root function by Jensen's inequality, we have $\mathbb{E}_{\widetilde{Z}^+_i}\sqrt{I^{\widetilde{Z}^+_i}(\Delta L_i;U_i)}\leq\sqrt{I(\Delta L_i;U_i|\widetilde{Z}^+_i)}$.
\end{proof}

\subsection{Proof of Theorem~\ref{thm:VC-theory}}
\begin{proof}
     Before we prove Theorem~\ref{thm:VC-theory}, we first show the following lemma.
     \begin{lem}
     \label{lem:individual-sample}
         For any $i\in[n]$, we have $\sum_{i=1}^nI(F_i,\bar{F}_i;U_i|\widetilde{Z}^+_i)\leq I(F_{[n]},\bar{F}_{[n]};U|\widetilde{Z}^+_{[n]})$.
     \end{lem}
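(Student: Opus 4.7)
The plan is to prove this by the standard ``chain-rule + conditioning reduces entropy'' argument that has been used for similar individual-vs-joint CMI inequalities in the supersample literature. The proof will exploit two structural facts that are baked into the construction in the paper: (i) the mask variables $U_1,\dots,U_n$ are i.i.d.\ $\mathrm{Bernoulli}(1/2)$ and independent of $\widetilde{Z}^+_{[n]}$, and (ii) $F_i, \bar F_i$ are deterministic functions of $\widetilde{W}_i$ evaluated at $\widetilde{Z}^+_i$, so in particular $(F_i,\bar F_i) \subseteq (F_{[n]}, \bar F_{[n]})$ and $\widetilde{Z}^+_i \subseteq \widetilde{Z}^+_{[n]}$.

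First, I would expand the right-hand side by the chain rule of mutual information:
\begin{align*}
I(F_{[n]}, \bar F_{[n]}; U \mid \widetilde{Z}^+_{[n]})
= \sum_{i=1}^n I(F_{[n]}, \bar F_{[n]}; U_i \mid U_{<i}, \widetilde{Z}^+_{[n]}).
\end{align*}
Then I would rewrite each summand as a difference of conditional entropies, $H(U_i \mid U_{<i}, \widetilde{Z}^+_{[n]}) - H(U_i \mid F_{[n]}, \bar F_{[n]}, U_{<i}, \widetilde{Z}^+_{[n]})$. The first term equals $H(U_i) = \log 2$ because the $U_j$'s are mutually independent and independent of $\widetilde{Z}^+_{[n]}$.

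Next, I would apply ``conditioning reduces entropy'' to the second term: since the conditioning set $(F_{[n]}, \bar F_{[n]}, U_{<i}, \widetilde{Z}^+_{[n]})$ contains $(F_i, \bar F_i, \widetilde{Z}^+_i)$ as a sub-collection, we get
\begin{align*}
H(U_i \mid F_{[n]}, \bar F_{[n]}, U_{<i}, \widetilde{Z}^+_{[n]}) \;\leq\; H(U_i \mid F_i, \bar F_i, \widetilde{Z}^+_i).
\end{align*}
Combining these observations and using $H(U_i)=H(U_i\mid \widetilde{Z}^+_i)$ (again by independence of $U_i$ and $\widetilde{Z}^+_i$), each chain-rule term is lower-bounded by $I(F_i, \bar F_i; U_i \mid \widetilde{Z}^+_i)$, and summing over $i$ yields the claim.

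I do not anticipate a serious obstacle; the argument is a routine entropy manipulation. The only point that deserves a brief sanity check is the independence structure used to collapse $H(U_i \mid U_{<i}, \widetilde{Z}^+_{[n]})$ to $H(U_i)$, which is immediate from the setup $U \sim \mathrm{Bernoulli}(1/2)^{\otimes n}$ and $U \perp \widetilde{Z}$.
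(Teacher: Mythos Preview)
Your proposal is correct. It relies on exactly the same ingredients as the paper's proof---the chain rule, ``conditioning reduces entropy,'' and the independence of the $U_i$'s from each other and from $\widetilde{Z}^+_{[n]}$---but organizes them differently. The paper works from the left-hand side outward in three separate steps: first enlarge the conditioning from $\widetilde{Z}^+_i$ to $\widetilde{Z}^+_{[n]}$, then enlarge $(F_i,\bar F_i)$ to $(F_{[n]},\bar F_{[n]})$ via the chain rule and non-negativity, and finally combine $\sum_i I(F_{[n]},\bar F_{[n]};U_i\mid \widetilde{Z}^+_{[n]})\le I(F_{[n]},\bar F_{[n]};U\mid \widetilde{Z}^+_{[n]})$ using the mutual independence of the $U_i$'s. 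You instead start from the right-hand side, chain-rule on $U$, and collapse all three enlargements into a single ``conditioning reduces entropy'' step. Your route is a bit more compact; the paper's route makes the three sources of slack more visible. Either is fine.
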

     \begin{proof}[Proof of Lemma~\ref{lem:individual-sample}]
         First, by $I(F_i,\bar{F}_i;U_i|\widetilde{Z}^+_i)=H(U_i)-H(U_i|F_i,\bar{F}_i,\widetilde{Z}^+_i)$ and $I(F_i,\bar{F}_i;U_i|\widetilde{Z}^+_{[n]})=H(U_i)-H(U_i|F_i,\bar{F}_i,\widetilde{Z}^+_{[n]})$, and notice that $H(U_i|F_i,\bar{F}_i,\widetilde{Z}^+_{[n]})\leq H(U_i|F_i,\bar{F}_i,\widetilde{Z}^+_i)$, we have
         \begin{align}
             I(F_i,\bar{F}_i;U_i|\widetilde{Z}^+_i)\leq I(F_i,\bar{F}_i;U_i|\widetilde{Z}^+_{[n]}).
             \label{ineq:for-condition}
         \end{align}

         Then, using the chain rule, 
         \[
         I(F_i,\bar{F}_i;U_i|\widetilde{Z}^+_{[n]})+I(F_{[n]\setminus i},\bar{F}_{[n]\setminus i};U_i|\widetilde{Z}^+_{[n]},F_i,\bar{F}_i)= I(F_{[n]},\bar{F}_{[n]};U_i|\widetilde{Z}^+_{[n]}).
         \]

         By the non-negativity of mutual information, we have
         \begin{align}
             \label{ineq:for-F}
             I(F_i,\bar{F}_i;U_i|\widetilde{Z}^+_{[n]})\leq I(F_{[n]},\bar{F}_{[n]};U_i|\widetilde{Z}^+_{[n]}).
         \end{align}

         Furthermore, by the independence of each $U_i$ (i.e. $I(U_i;U_{[n]\setminus i}|\widetilde{Z}^+_{[n]})=0$), we have
         \begin{align}
             \sum_{i=1}^nI(F_{[n]},\bar{F}_{[n]};U_i|\widetilde{Z}^+_{[n]})\leq I(F_{[n]},\bar{F}_{[n]};U|\widetilde{Z}^+_{[n]}).
             \label{eq:for-U}
         \end{align}

         Combining Eq.~(\ref{ineq:for-condition}-\ref{eq:for-U}) will conclude the proof.
     \end{proof}

     We now prove Theorem~\ref{thm:VC-theory}.

     For a given $\widetilde{Z}_{[n]}$, the number of distinct values of their predictions, denoted by $k$, are upper bounded by the growth function of $\mathcal{F}$ evaluated at $n$,
     \[
     k\leq\sum_{i=1}^d\tbinom{n}{i}\leq(\frac{en}{d})^d,
     \]
     where the second inequality is by Sauer-Shelah lemma \cite{sauer1972density,shelah1972combinatorial} for $n>d+1$.

     Thus,
     \begin{align}
         I(F_{[n]},\bar{F}_{[n]};U\big|\widetilde{Z}^+_{[n]})\leq H(F_{[n]},\bar{F}_{[n]}\big|\widetilde{Z}^+_{[n]})\leq H(F_{[n]}\big|\widetilde{Z}^+_{[n]})+H(\bar{F}_{[n]}\big|\widetilde{Z}^+_{[n]})\leq 2d\log\pr{\frac{en}{d}}.
         \label{ineq:sauer-bound}
     \end{align}

     By Jensen's inequality and Lemma~\ref{lem:individual-sample}, we have
     \[
     \frac{1}{n}\sum_{i=1}^n\sqrt{I(F_i,\bar{F}_i;U_i|\widetilde{Z}^+_i)}\leq\sqrt{\frac{1}{n}\sum_{i=1}^nI(F_i,\bar{F}_i;U_i|\widetilde{Z}^+_i)}\leq\sqrt{\frac{I(F_{[n]},\bar{F}_{[n]};U\big|\widetilde{Z}^+_{[n]})}{n}}.
     \]

     Plugging Eq.~(\ref{ineq:sauer-bound}) into the inequality above,
     \[
     \frac{1}{n}\sum_{i=1}^n\sqrt{I(F_i,\bar{F}_i;U_i|\widetilde{Z}^+_i)}\leq\mathcal{O}\pr{\sqrt{\frac{d}{n}\log\pr{\frac{n}{d}}}},
     \]
     which completes the proof.
 \end{proof}


\section{CLB Exampls}
\label{sec:clb-explain}
In Example~\ref{exam:one-hot}, \cite[Thm.~17]{haghifam2022limitations} demonstrates the non-vanishing behavior of individual IOMI and e-CMI. This is primarily attributed to the dimension-dependent nature of IOMI and CMI. Specifically, there are certain dimensional settings where IOMI can grow faster than $\mathcal{O}(n)$, as shown in \cite[Thm.4]{haghifam2022limitations}, and CMI approaches a certain fraction of its upper bound, as illustrated in Example~\ref{exam:one-hot}, resulting in non-vanishing behavior.
Specifically, in Example~\ref{exam:one-hot}, \cite{haghifam2022limitations} employs the birthday paradox \cite[Sec.~5.1]{mitzenmacher2017probability} problem to demonstrate that for a large value of $d$, the probability that no pair of instances in $\widetilde{Z}$ sharing the same non-zero coordinate  (referred to as event $E_0$) is smaller than a constant probability (that could be independent of $n$). Particularly, it is shown that if $d\geq\frac{2n-1}{1-c^{{1}/{(2n-1)}}}$, then $P(E_0)\geq c \geq \pr{1-\frac{2n-1}{d}}^{2n-1}$. As an example, \cite{haghifam2022limitations} chooses $d=2n^2$, resulting in $c\geq 0.1$.

\paragraph{Failure of $I(W;Z_i)$} Consider the case where $d=2n^2$. For the individual CMI \cite{rodriguez2021random,zhou2022individually}, $I(W;U_i|\widetilde{Z}_i)$, we have the following:
\begin{align*}
    I(W;U_i|\widetilde{Z}_i)=\log{2}-H(U_i|W,\widetilde{Z}_i)
    \geq 0.1\cdot\log{2}.
\end{align*}
This inequality holds because when event $E_0$ does not occur, one can determine the value of $U_i$ completely, as the returned hypothesis is a weighted sum of the sample. In other words, examining the non-zero coordinates of $W$ is sufficient to determine $U_i$. For an in-depth derivation of this inequality, readers are referred to the updated version of \cite{haghifam2022limitations}, where their corrected proof involves Fano’s inequality.  Furthermore, using the relation $I(W;Z_i)\geq I(W;U_i|\widetilde{Z}_i)$ \cite{zhou2022individually}, we conclude that $I(W;Z_i)\in\Omega(1)$.

\paragraph{Failure of $I(\widehat{Z}_i;U_i|\widetilde{W}_i)$} Notably, our hypotheses-conditioned CMI also does not vanish for the same reason. More precisely, when $\widetilde{W}_i$ and $\widehat{Z}_i$ are given, there exists a constant probability (independent of $n$) that allows us to fully determine the returned hypothesis based on $\widehat{Z}_i$, thereby determining the value of $U_i$.

\paragraph{Failure of $I(\Delta L_i;U_i)$} Furthermore, even the loss difference based CMI (e.g., as shown in Theorem~\ref{thm:sample-ld-bounds}), which provides the tightest CMI measure, still does not vanish. This is attributed to the fact that if the hypothesis $W$ is independent of certain $Z$, there exists a constant probability where the loss becomes zero (recall that the loss is the negative inner product of $W$ and $Z$). Consequently, one can determine the value of $U_i$ by observing the sign of the random variable $\Delta L_i$. This also indicates the limitations of e-CMI and $f$-CMI in capturing the generalization behavior for Example~\ref{exam:one-hot}.

In Example~\ref{exam:rad-two}, following the approach in \cite{haghifam2022limitations}, the training sample $S=\{Z_i\}_{i=1}^n\sim\mu^n$ can be represented as $S=\frac{z_0}{R_0}\pr{\varepsilon_1,\varepsilon_2, \dots, \varepsilon_n}$, where $\{\varepsilon_i\}_{i=1}^n$ is a sequence of independent Rademacher random variables, i.e., $\varepsilon_i\sim \mathrm{Unif}(\{-1,1\})$. The empirical risk is given by $L_S(W)=-\frac{-L}{nR_0}\langle W, \sum_{i=1}^n\varepsilon_i\rangle$. In this case, the ERM solution is $W_{ERM}=z_0$ if $\mathrm{sign}(\sum_{i=1}^n\varepsilon_i)=1$, and $W_{ERM}=-z_0$ if $\mathrm{sign}(\sum_{i=1}^n\varepsilon_i)=-1$. It is clear that
\[
\sup_{w,w^i,z}\abs{\ell(w,z)-\ell(w^i,z)}\leq \sup_{w,w^i,z}L||w-w^i||\leq 2LR_0.
\]

Hence, we observe that $\beta_2$ is now a constant, whereas IOMI has an upper bound: $\sum_{i=1}^{n}I(W;Z_i)\leq I(W;S)\leq I(W;\mathrm{sign}(\sum_{i=1}^n\varepsilon_i))\leq H(\mathrm{sign}(\sum_{i=1}^n\varepsilon_i))\leq 1$, where the second inequality follows from the Markov chain $S-\mathrm{sign}(\sum_{i=1}^n\varepsilon_i)-W$. This provides us with a generalization bound of $\frac{2LR_0}{\sqrt{n}}$. Meanwhile, the actual generalization error satisfies $\mathcal{E}_\mu(\mathcal{A})\geq \frac{LR_0}{\sqrt{2n}}$ (see \cite[AppendixB]{haghifam2022limitations} for a derivation). Thus, the IOMI bound is tight up to a constant, and the stability bound $\beta_2$ itself is vacuous. It is worth noting that $I(\widehat{Z}_i;U_i|\widetilde{W}_i)\leq I(W;Z_i)\leq 1$ by Theorem\ref{thm:CMI-IOMI}, indicating that the CMI bound is also tight.

We would like to note that the failures of chained mutual information bounds \cite{asadi2018chaining} are not demonstrated in the counterexamples presented in \cite{haghifam2022limitations}. Notably, when the hypothesis is quantized, it becomes more challenging to guess $U_i$ or $Z_i$. Therefore, exploring the potential of chained information-theoretic bounds, which do not necessarily rely on stability notions, could be another avenue to explain the generalization behavior observed in these counterexamples.

\section{Additional Applications}
\label{sec:add-applications}

\subsection{Compression Schemes}
 We now consider the algorithm that has a compression scheme \cite{littlestone1986relating}. Formally, a sample compression scheme of size $k\in\mathbb{N}$ is a pair of maps $(\mathcal{A}_1,\mathcal{A}_2)$. Specifically, for all samples $s$ with $n>k$, $\mathcal{A}_1: \mathcal{Z}^n\to\mathcal{Z}^k$ compresses the sample into a length-$k$ subsequence $\mathcal{A}_1(s)\subseteq s$. Then $\mathcal{A}_2: \mathcal{Z}^k\to\mathcal{W}$ could be some arbitrary mapping. Hence, $\mathcal{A}(\cdot)=\mathcal{A}_2(\mathcal{A}_1(\cdot))$. Let $K$ be the index set for $S$ selected by $\mathcal{A}_1$, and let $\overline{K}$ be the selected index set for $S^i$. In this case, our supersample-conditioned CMI has an upper bound: $I(W_i,\overline{W}_i;U_i|\widetilde{Z}^+_i)\leq I(K,\overline{K};U_i|\widetilde{Z}^+_i)\leq H(K,\overline{K}|\widetilde{Z}^+_i)\leq 2\log\tbinom{n}{k}\leq 2k\log{n}$. Then, if $\mathcal{A}$ is further $\beta_2$-uniform stable, then we have the generalization bound $\mathcal{E}_\mu(\mathcal{A})\leq\mathcal{O}(\beta_2\sqrt{k\log{n}})$. If $\beta_2<\mathcal{O}(1/\sqrt{n})$, this bound improves the bound in \cite{steinke2020reasoning}. It is unclear if we can obtain any improved bound for  \textit{stable} compression schemes \cite{bousquet2020proper}, in which case \cite{haghifam2021towards} provides an optimal bound that removing the $\log{n}$ factor for the realizable setting. A main difficulty is that an interpolating algorithm is usually unstable due to the fitting-stability tradeoff \cite[Sec.~13.4]{shalev2014understanding}.

\subsection{Distillation Algorithm}
 The high-probability generalization property of distillation algorithm is studied in \cite{foster2019hypothesis}. In the first training stage of distillation, we obtain a $w^*_s$ from a highly complex hypothesis space $\mathcal{W}_1$ based on a training sample $s$. Same to \cite{foster2019hypothesis}, we assume that the first learning stage is $\alpha$-sensitive, namely $||w^*_s-w^*_{s^i}||\leq\alpha=\mathcal{O}(1/n)$.  In the second stage, the algorithm $\mathcal{A}$ will select a hypothesis  that is $\lambda$-close to $w^*_s$ from a less complex hypothesis space $\mathcal{W}_2=\{w\in\mathcal{W}:||w-w^*_s||_{\infty}\leq\lambda\}$. 
 Let the loss function $\ell$ be $L$-Lipschitz with respect to the first argument. Consequently,  $\gamma_3\leq L||w^*_s-w^*_{s^i}||\leq L\alpha$. 
 Then, by Theorem~\ref{thm:general-stable-bound}, we have
 $\mathcal{E}_\mu(\mathcal{A})\leq{L\alpha}\frac{1}{n}\sum_{i=1}^n\sqrt{2H(U_i)}=\sqrt{2\log{2}}L\alpha.
 $
 Notice that the loss here may not necessarily be bounded or sub-Gaussian, rendering previous bounds inapplicable.

\subsection{Regularized Empirical Risk Minimization}
Regularized Empirical Risk Minimization (ERM) learning rules involve minimizing the empirical risk and a regularization function jointly: $\arg\min_{w\in\mathcal{W}} L_S(w)+f_{\mathrm{reg}}(w)$, where $f_{\mathrm{reg}}:\mathcal{W}\to\mathbb{R}$. Here we specifically consider Tikhonov regularization \cite{shalev2014understanding}, namely $f_{\mathrm{reg}}(w)=\lambda||w||^2$, where $\lambda>0$ is a tradeoff coefficient. The regularized ERM algorithm $\mathcal{A}$ aims to find
\[
w=\arg\min_{w\in\mathcal{W}} L_S(w)+\lambda||w||^2.
\]
This regularization term ensures strong convexity of the training objective. Based on Theorem~\ref{thm:superhypo-stable-CMI}, we can derive the following results.
\begin{cor}
    \label{cor:RERM-Bound-1}
    Assume that the loss function $\ell$ is convex and $L$-Lipschitz. Then, for the regularized ERM algorithm with Tikhonov regularization, we have
    \[
    \abs{\mathcal{E}_\mu(\mathcal{A})}\leq \frac{2L^2}{\lambda n}\pr{\frac{1}{n}\sum_{i=1}^n I(\widehat{Z};U_i|\widetilde{W}_i)+0.72}.
    \]
\end{cor}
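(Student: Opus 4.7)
The plan is to derive this corollary as a direct consequence of Theorem~\ref{thm:superhypo-stable-CMI} (equation (\ref{ineq:fast-cmi-beta4})) combined with the classical uniform stability result for regularized ERM with a strongly-convex regularizer. The target bound has the distinctive form $\beta_2\left(\frac{1}{n}\sum I + 0.72\right)$, which is exactly what Theorem~\ref{thm:superhypo-stable-CMI} produces once we know that $\gamma_4 \leq \beta_2$ (so that $\gamma_4^2/\beta_2 \leq \beta_2$). Thus the whole game is to identify the correct value of $\beta_2$.

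First I would invoke the standard Bousquet--Elisseeff / Shalev-Shwartz et al.\ result: since $\ell(\cdot, z)$ is convex and $L$-Lipschitz and the Tikhonov regularizer $\lambda\|w\|^2$ is $2\lambda$-strongly convex, the regularized empirical risk is $\frac{2\lambda}{n}$-strongly convex (after averaging), and the standard ``strong convexity + Lipschitzness $\Rightarrow$ uniform stability'' argument yields
\[
\|\mathcal{A}(s,R) - \mathcal{A}(s^i,R)\| \leq \frac{2L}{\lambda n},
\]
for every neighboring pair $s \simeq s^i$ and every $r$. Composing with $L$-Lipschitzness of the loss then gives
\[
\sup_{s \simeq s^i, z, r} \abs{\ell(\mathcal{A}(s,r), z) - \ell(\mathcal{A}(s^i,r), z)} \leq \frac{2L^2}{\lambda n},
\]
i.e.\ $\mathcal{A}$ is $\beta_2$-uniformly stable with $\beta_2 = \frac{2L^2}{\lambda n}$.

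Next, since SCH-D stability is weaker than strong uniform stability (it averages a squared quantity rather than taking suprema, as discussed in Appendix~\ref{sec:defn-sch}), the very same $\beta_2$ is also a valid $\gamma_4$; in particular $\gamma_4 \leq \beta_2 = \frac{2L^2}{\lambda n}$. Consequently $\frac{\gamma_4^2}{\beta_2} \leq \beta_2$. Plugging $\mathcal{A}$ into equation (\ref{ineq:fast-cmi-beta4}) of Theorem~\ref{thm:superhypo-stable-CMI} then gives
\[
\abs{\mathcal{E}_\mu(\mathcal{A})} \leq \frac{\beta_2}{n}\sum_{i=1}^n I(\widehat{Z}_i;U_i|\widetilde{W}_i) + 0.72\,\frac{\gamma_4^2}{\beta_2} \leq \beta_2\pr{\frac{1}{n}\sum_{i=1}^n I(\widehat{Z}_i;U_i|\widetilde{W}_i) + 0.72},
\]
and substituting $\beta_2 = \frac{2L^2}{\lambda n}$ yields exactly the claimed inequality.

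There is no real obstacle here---the corollary is essentially a packaging of Theorem~\ref{thm:superhypo-stable-CMI} with a textbook stability estimate. The only mildly delicate point is making sure one cites the correct constant in the stability bound (the factor $2L^2/(\lambda n)$ depends on whether the regularizer is written as $\lambda\|w\|^2$ or $\frac{\lambda}{2}\|w\|^2$ and whether the empirical risk is normalized by $n$); once this convention is fixed, the constants line up and the bound follows mechanically.
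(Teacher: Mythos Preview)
Your proposal is correct and matches the paper's proof essentially line for line: the paper simply cites \cite[Corollary~13.6]{shalev2014understanding} to get $\gamma_4\leq\beta_2=\tfrac{2L^2}{\lambda n}$ and then plugs this into Eq.~(\ref{ineq:fast-cmi-beta4}) of Theorem~\ref{thm:superhypo-stable-CMI}, exactly as you describe. Your extra remark about the convention for the regularizer constant is well taken but not needed for the argument.
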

\begin{proof}[Proof of Corollary~\ref{cor:RERM-Bound-1}]
    By invoking \cite[Corollary~13.6]{shalev2014understanding}, we know that $\gamma_4\leq\beta_2=\frac{2L^2}{\lambda n}$. Plugging the value of $\beta_2$ will give us the desired result.
\end{proof}

\begin{cor}
    \label{cor:RERM-Bound-2}
    Assume that the loss function is $\rho$-smooth and nonnegative. Let $\lambda\geq \frac{2\rho}{n}$. Then, for the regularized ERM algorithm with Tikhonov regularization, we have
    \[
    \abs{\mathcal{E}_\mu(\mathcal{A})}\leq \frac{48\rho \hat{L}_n}{\lambda n}\pr{\frac{1}{n}\sum_{i=1}^n I(\widehat{Z};U_i|\widetilde{W}_i)+0.72}.
    \]
\end{cor}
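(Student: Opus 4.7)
The plan is to apply Eq.~(\ref{ineq:fast-cmi-beta4}) from Theorem~\ref{thm:superhypo-stable-CMI}, which is the natural vehicle since it already combines $\beta_2$-uniform stability with $\gamma_4$-SCH-D stability. Observe that by the remark following Definition~\ref{defn:ex-stable}, any $\beta_2$-uniformly stable algorithm is automatically $\beta_2$-SCH-D stable (in the sense $\gamma_4 \leq \beta_2$), so $\gamma_4^2/\beta_2 \leq \beta_2$ and the second term of Eq.~(\ref{ineq:fast-cmi-beta4}) can be collapsed to $0.72\,\beta_2$. Thus
\[
\abs{\mathcal{E}_\mu(\mathcal{A})}\leq \beta_2\pr{\frac{1}{n}\sum_{i=1}^n I(\widehat{Z}_i;U_i|\widetilde{W}_i)+0.72},
\]
and the entire task reduces to establishing the quantitative stability bound $\beta_2 \leq \tfrac{48\rho\hat{L}_n}{\lambda n}$ for regularized ERM with Tikhonov regularization under the $\rho$-smoothness and nonnegativity hypotheses.

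To obtain the stability bound, I would follow the classical strong-convexity-plus-self-bounding template. Let $W=\arg\min_w F_S(w)$ with $F_S(w)=L_S(w)+\lambda\|w\|^2$, and let $W^i$ be the corresponding minimizer for $S^i$. Since $F_S$ is $2\lambda$-strongly convex, $\lambda\|W-W^i\|^2\leq F_S(W^i)-F_S(W)$. A standard swap of the $i$-th coordinate yields
\[
F_S(W^i)-F_S(W)\leq \tfrac{1}{n}\br{\ell(W^i,Z_i)-\ell(W,Z_i)+\ell(W,Z_i')-\ell(W^i,Z_i')}.
\]
The key tool is now the self-bounding inequality for $\rho$-smooth nonnegative losses, $\|\nabla\ell(w,z)\|\leq \sqrt{2\rho\,\ell(w,z)}$, combined with the smoothness upper bound $\ell(w',z)-\ell(w,z)\leq\langle\nabla\ell(w,z),w'-w\rangle+\tfrac{\rho}{2}\|w-w'\|^2$. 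Plugging this twice into the swap inequality and applying Cauchy--Schwarz produces a quadratic inequality of the form $\|W-W^i\|^2 \lesssim \tfrac{\sqrt{\rho\hat{L}_n}}{\lambda n}\|W-W^i\|+\tfrac{\rho}{\lambda n}\|W-W^i\|^2$. The hypothesis $\lambda\geq 2\rho/n$ guarantees the coefficient of the quadratic term is at most $1/2$, so it can be absorbed into the left-hand side, yielding $\|W-W^i\| \lesssim \sqrt{\rho\hat{L}_n}/(\lambda n)$. One final application of smoothness plus self-bounding converts this norm displacement into the loss stability bound $\beta_2 \leq \tfrac{48\rho\hat{L}_n}{\lambda n}$; the constant $48$ arises from tracking the numerical factors through these two applications of self-bounding together with the $\tfrac12$-absorption of the quadratic term.

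The main obstacle is the last step: turning the bound on $\|W-W^i\|$ into a uniform bound on $|\ell(W,z)-\ell(W^i,z)|$ without reintroducing a global loss bound $C$ (which would reduce the result to Corollary~\ref{cor:RERM-Bound-1} with an extra Lipschitz constant). One must carefully apply self-bounding with respect to $\ell(W,z)$ for an arbitrary external $z$, then take expectation over $S$ to convert $\mathbb{E}[\sqrt{\ell(W,z)}]$ into something controlled by $\hat{L}_n$ via Jensen's inequality, while keeping the condition $\lambda\geq 2\rho/n$ doing its work. Once the $48\rho\hat{L}_n/(\lambda n)$ stability estimate is in hand, substitution into the simplified form of Eq.~(\ref{ineq:fast-cmi-beta4}) above gives the claimed bound.
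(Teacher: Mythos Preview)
Your proposal is correct and takes essentially the same approach as the paper: apply Eq.~(\ref{ineq:fast-cmi-beta4}) of Theorem~\ref{thm:superhypo-stable-CMI} with $\gamma_4\leq\beta_2$, then plug in the stability parameter $\beta_2=\tfrac{48\rho\hat{L}_n}{\lambda n}$. The paper's own proof is a single line that simply invokes \cite[Corollary~13.7]{shalev2014understanding} for this stability estimate rather than re-deriving it, so your strong-convexity-plus-self-bounding sketch (and the uniform-versus-average subtlety you flag) merely unpacks what the paper treats as a black box.
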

\begin{proof}[Proof of Corollary~\ref{cor:RERM-Bound-2}]
    By invoking \cite[Corollary~13.7]{shalev2014understanding}, we know that $\gamma_4\leq\beta_2= \frac{48\rho \hat{L}_n}{\lambda n}$. Plugging the value of $\beta_2$ will give us the desired result.
\end{proof}

Although these bounds do not enhance the convergence rate of $\mathcal{O}(1/n)$ in these settings, they consistently offer tighter results compared to uniform stability-based bounds if $\frac{1}{n}\sum_{i=1}^n I(\widehat{Z};U_i|\widetilde{W}_i)\leq 0.28$. In addition, the expected empirical risk $\hat{L}_n$ appears in the bound of Corollary~\ref{cor:RERM-Bound-2}. While $\lambda$ has a lower bound,  $\hat{L}_n$ could not be arbitrarily small for the regularized ERM. 

Notice that previous information-theoretic bounds could not obtain the convergence rate of  $\mathcal{O}(1/n)$ as in our results unless ICMI or CMI itself decays with $\mathcal{O}(1/n)$.

 \section{Additional Discussions and Open Problems }
 \label{sec:discusssions}

\paragraph{Stochastic Gradient Descent (SGD)} Since the influential work of \cite{hardt2016train}, stability approaches have been widely employed to provide generalization guarantees for (sub)gradient-based optimization algorithms, such as SGD, under certain conditions like the convex-smooth-Lipschitz loss. More recently, \cite{bassily2020stability} extended the results of \cite{hardt2016train} to the non-smooth loss function in the SCO setting.

In contrast, information-theoretic (weight/hypothesis-based) bounds are typically used to analyze the noisy version of SGD, known as SGLD \cite{pensia2018generalization,negrea2019information,haghifam2020sharpened,rodriguez2021random,wang2021analyzing}. Directly analyzing SGD poses challenges because the returned hypothesis $W$ contains a significant amount of information about $S$ or $Z_i$, resulting in potentially large (even infinite) mutual information. The prevalent approach to applying information-theoretic bounds to SGD is by introducing noise \cite{neu2021information,wang2022generalization}, but this has been shown to yield non-vanishing bounds in \cite[Thm.~4]{haghifam2022limitations}.

The combination of information-theoretic bounds with stability for analyzing the generalization of SGD presents a promising future direction. However, some potential difficulties may arise. For instance, if we continue to use the Gaussian noise perturbation method for the weight-based information-theoretic bounds, we would need to characterize the stability property for the perturbed SGD, which might require techniques employed in \cite{mou2018generalization}. Additionally, when combining stability notions with loss difference based CMI (or e-CMI/$f$-CMI) bounds, as they cannot be unrolled using the chain rule and data processing inequality as in the case of weight-based IOMI/CMI bounds, it may not be possible to bound such CMI terms using trajectory-based quantities. This raises doubts about the potential for obtaining more informative generalization bounds compared to the stability-based bounds themselves.

\paragraph{Generalization Bounds beyond Mutual Information} In the information-theoretic literature, it is common to replace mutual information with alternative distributional measures, such as Wasserstein distance based bounds and total variation based bounds \cite{rodriguez2021tighter}. A promising future direction is to incorporate the stability property of algorithms into these bounds, as demonstrated in this work. It is worth noting that obtaining KL divergence-based bounds should be straightforward since they rely on the same foundational Lemma~\ref{lem:DV representation} as discussed in this paper.


\end{appendices}

\end{document}